\def\eqref#1{equation~\ref{#1}}
\def\1{\bm{1}}
\DeclareMathAlphabet{\mathsfit}{\encodingdefault}{\sfdefault}{m}{sl}
\SetMathAlphabet{\mathsfit}{bold}{\encodingdefault}{\sfdefault}{bx}{n}
\newcommand{\R}{\mathbb{R}}
\newtheorem{theorem}{Theorem}
\numberwithin{theorem}{section} 
\newtheorem{prop}[theorem]{Proposition}
\newtheorem{lemma}[theorem]{Lemma}
\newtheorem{remark}[theorem]{Remark}
\newtheorem{assumption}{Assumption}
\DeclareMathOperator{\Div}{div}
\newcommand{\RR}{\mathbb{R}}
\renewcommand{\eqref}[1]{(\ref{#1})}
\renewcommand{\aa}[1]{#1}
\definecolor{sintefgrey}{RGB/cmyk}{235,235,230/0,0,0,.1}
\colorlet{sintefgray}{sintefgrey}
\colorlet{sinteflightgray}{sinteflightgrey}
\title{Emergence of meta-stable clustering in mean-field transformer models}
\author{Giuseppe Bruno$^{1 *}$ \quad
Federico Pasqualotto$^{2}$ \quad Andrea Agazzi$^{1}$\thanks{Part of this work was carried out while at the University of Pisa.\\ Contacts: \texttt{\{giuseppe.bruno,andrea.agazzi\}@unibe.ch}, \quad \texttt{fpasqualotto@ucsd.edu} }\\
$^1$Department of Mathematics and Statistics,
University of Bern \\
$^2$Department of Mathematics,
University of California, San Diego\\
}
\begin{document}

\maketitle

\begin{abstract}
We model the evolution of tokens within a deep stack of Transformer layers as a continuous-time flow on the unit sphere, governed by a mean-field interacting particle system, building on the framework introduced in \citep{geshkovski2023mathematical}. Studying the corresponding mean-field Partial Differential Equation (PDE), which can be interpreted as a Wasserstein gradient flow, in this paper we provide a mathematical investigation of the long-term behavior of this system, with a particular focus on the emergence and persistence of meta-stable phases and clustering phenomena, key elements in applications like next-token prediction. More specifically, we perform a perturbative analysis of the mean-field PDE around the iid uniform initialization and prove that, in the limit of large number of tokens, the model remains close to a meta-stable manifold of solutions with a given structure (e.g., periodicity). Further, the structure characterizing the meta-stable manifold is explicitly identified, as a function of the inverse temperature parameter of the model, by the index maximizing a certain rescaling of Gegenbauer polynomials.
\end{abstract}


\section{Introduction}

The transformers architecture \citep{vaswani2017attention}, widely used in large language models, has played a pivotal role in fueling the recent AI revolution. 
 A key element of this architecture are self-attention modules \citep{bahdanau2014neural}, allowing to capture long-range relationships within the data, e.g., in sentences where the number of tokens is large.~However, despite the impact of transformers and the key role played by this mechanism, the theoretical understanding of these models is still in an embryonic stage. In particular, a mathematical description of the representations developed by these models, and of the effect of different choices of hyperparameters on such representations~is~still~lacking.

Recently, the work \citep{geshkovski2023mathematical} introduced a simplified model for the dynamics of tokens through the layers of the transformer architecture. This model consists of a nonlinear Partial Differential Equation (PDE) modeling the evolution of the tokens -- interpreted as \aa{$N$} interacting particles -- through the layers of the network. As showcased by the authors in their work, this model displays clustering dynamics of the tokens through the layers, a feature indicating the emergence of representations in the network (e.g., the emergence of concepts/consensus in next-word predictions).

Despite its simplicity, this model -- which, in its simplest form, results from the choice of specific value, key and query matrices in the self-attention mechanism -- displays a number of dynamical properties that have not been fully understood. In particular, while the mathematical description of the emergence of clusters was proven by studying the long-time (infinite depth) behavior of the token dynamics, which can be shown to generically converge to a single cluster as time (depth) goes to infinity, numerical simulations show that this phase of total collapse is preceded by several phases of partial clustering, which long-time results fail to characterize.
Characterizing this phenomenon is crucial for understanding how representations develop in deep models:   
on the one hand, deep models are expected to produce more diverse and rich representations of the data, leading to improved generalization and performance. On the other hand, due to the finite depth of transformer models and the prohibitive computational and memory cost required to reach the single-cluster regime, these intermediate clustering phases are likely to play a crucial role in the model’s effectiveness, highlighting the need for a deeper understanding of their meta-stable behavior.

\paragraph{Contributions}

In this paper we study the clustering dynamics of the model developed in \citep{geshkovski2023mathematical} in the intermediate time range, i.e., between initialization and the emergence of the first set of clusters. To do so, we give a detailed mathematical description of the dynamics of the network in the large-context regime, i.e., when the number of tokens on which the transformer acts is large, characterizing the intermediate, meta-stable phase of the network before it collapses to the single-cluster limit. More specifically we prove the following:
\begin{enumerate}
\item We rigorously prove the convergence, as the number of tokens $N$ becomes large, to the dynamics of the limiting law under the mean-field PDE. We further show that this convergence is non-uniform in time, justifying the more refined metastability analysis~carried~out~below.
\item We characterize the evolution of the $N$-particle measure close to  initialization, identifying an initial phase where a certain number of clusters starts to form. We explicitly characterize \aa{the structure of} the solution forming in this first phase \aa{-- e.g., its periodicity --} as a function of the temperature parameter, the number of tokens and the embedding dimension $d$.
\item We perform an in-depth analysis of the dynamics on longer time scales to prove that the periodicity developed in the first phase is maintained over time intervals of length $O(\ln N)$. This proves the existence of a meta-stable phase where the network learns a richer representation of the data, before the total collapse phase predicted by \citep{geshkovski2023mathematical}.
\end{enumerate}
\aa{We note that the above characterization has important practical implications, as it clarifies the relationship between hyperparameters such as the temperature parameter, the number of tokens or the embedding dimension and the emergence of representations in transformers by providing quantitative estimates on the ``richness'' of such representation and on the depth required to achieve it.}

\paragraph{Related works}

This paper is closely related to the works \citep{sander2022sinkformers,geshkovski2023mathematical}, where the model we study here was developed. 
In \citep{geshkovski2023mathematical}, the authors investigate the clustering behavior of this model, 
and highlight numerically the existence of dynamically meta-stable phases of partial clustering.
In this setting, the total clustering phenomenon was studied in \citep{markdahl2017almost}
in $d>2$ and in \citep{criscitiello2024synchronization} for $d=2$.  These works give sufficient conditions for the occurrence of the collapse of the dynamics for finite number $N$ of tokens to a single cluster, which occurs with probability $1$ over the initial conditions of the system. These results, however, only hold in the $t \to \infty$ limit, while our work provides a more detailed analysis, in the large $N$ limit, capturing the meta-stable properties of the dynamics as it approaches slow manifolds of intermediate representations in the form of structured solutions before reaching the ultimate single-mode collapse. The dynamic meta-stability of mean-field transformers was also studied in the complementary setting of well-separated configurations in \citep{geshkovski2024dynamic}.

The approach taken in this paper is closely connected to the one developed in \citep{chen2018neural, weinan2017proposal}, where the dynamics of the network's state through its depth are modeled by a differential equation. In our case, however, the state of the network can be broken down in $N$ ``particles'' (as opposed to one in \citep{chen2018neural}) that interact in a mean-field way through a nonlinear PDE. More generally, this paper connects to a vast literature on mean-field models for neural networks, pioneered by the papers \citep{rotskoff2022trainability, mei2018mean, chizat2018global}, later extended to more general settings  
and more quantitative estimates (e.g., in  \citep{agazzi21,de2020quantitative}). While those papers also model the state of the network as an empirical measure whose evolution is described by a mean-field PDE, in this \aa{series of works} the dynamics describe the training of the model and not the evolution of the model's state through its depth. Furthermore, in these cases the propagation of chaos estimates are established on bounded time intervals, while we extend these estimates on $\mathcal O(\log N)$ time intervals.

Another line of related works studies the fluctuations around the mean-field limit of nonlinear PDEs such as the one discussed in this work. In a more general setting, \citep{carrillo2020long} analyzed 
 the connection between Fourier coefficients of a given potential and the stability of the homogeneous steady state for general McKean-Vlasov equations on the torus, \citep{lancellotti2009fluctuations} characterized the fluctuations in the linear regime, and \citep{grenier2000nonlinear} devised a general method for investigating the instability of the Euler and Prandtl equations.
 
\paragraph{Structure of the paper}
In Section \ref{sec:2} we introduce the framework and some notation. In Section \ref{sec:3} we present the large-$N$ convergence results. In Section \ref{sec:4} we discuss the long-time analysis of the dynamics and the corresponding meta-stability results. In Section \ref{sec:5} we present some numerical simulations and in Section \ref{sec:6} the conclusions.

\section{Framework and notation}\label{sec:2}

 We start by introducing the models under consideration in this paper. These were first derived in~\citep{sander2022sinkformers}, by considering the transformer architecture as a discrete-time dynamical system describing the evolution of $N$ \emph{tokens} $\{x_i(t)\}_{i = 1, \ldots, N}$, given by
\begin{equation}\label{eq:discrete}
\begin{cases}
     x_i(k+1) = \mathcal{N}\left(x_i + \frac{1}{Z_{\beta,i}}\sum_{j=1}^N e^{\beta\langle Q x_i(k), K x_j(k)\rangle} V x_j(k) \right), & k=0,...,L-1\\
    x_i(0)=x_i &
\end{cases},
\end{equation}
where $\mathcal{N}:\mathbb{R}^d\to\mathbb{S}^{d-1}$ denotes the normalization operator and $Z_{\beta,i}$ is a normalization factor. 
The dynamics depend on (matrix) parameters $Q,K,V$ (query, key, value) whose significance is inherited from the transformer architecture. \aa{For simplicity, \citep{geshkovski2023mathematical} set $Q=K=V=Id$ and introduce the following continuous-time, simplified model problem describing the dynamics of $x_i(t): [0, \infty) \to \mathbb{S}^{d-1}$ as a (layer-wise) limit of~(\ref{eq:discrete}) in the spirit of \citep{chen2018neural}:}
\begin{equation}
\label{eq:ODE_SA}\tag{SA}
\dot{x}_i(t)=P_{x_i(t)}\left(\frac{1}{Z_{\beta, i}(t)} \sum_{j=1}^N e^{\beta\left\langle x_i(t), x_j(t)\right\rangle} x_j(t)\right).
\end{equation}
Here, $P_{x}y$ is the projection onto $T_x\mathbb{S}^{d-1}$ given by $
P_x y=y-\langle x, y\rangle x
$, $\langle \cdot, \cdot \rangle$ denotes the Euclidean inner product in $\RR^{d}$, $
Z_{\beta, i}(t)$ is a layer-wise normalization term defined by $Z_{\beta, i}(t)=\sum_{k=1}^n e^{\beta\left\langle x_i(t), x_k(t)\right\rangle}$
 and $\beta > 0$ is a positive constant, identified throughout as the \emph{inverse temperature}. \citet{geshkovski2023mathematical} also introduce a related model problem\footnote{Which has the advantage of having a gradient flow structure.}
\begin{equation}
\label{eq:ODE_USA}\tag{USA}
\dot{x}_i(t)=P_{x_i(t)}\left(\frac{1}{N} \sum_{j=1}^N e^{\beta\langle x_i(t), x_j(t)\rangle} x_j(t)\right)\,,
\end{equation}
modifying the normalization factor of the sum in~(\ref{eq:ODE_SA}).

It is convenient to parametrize the dynamics~(\ref{eq:ODE_SA}) and (\ref{eq:ODE_USA}) by  ``modding out'' permutation invariance. This is classical and can be achieved by setting $\mu(t):=\frac{1}{N}\sum_{i=1}^N\delta_{x_i(t)}$, where we denote by $\delta_x$ the Dirac delta distribution at the point $x$. This yields the continuity equation\footnote{For these specific (non-absolutely continuous with respect to Lebesgue) measures $\mu$, this equation needs to be interpreted with the weak formulation. \aa{Further, note that $\text{div}$ here and throughout indicates the divergence intrinsic to $\mathbb{S}^{d-1}$, i.e. the covariant divergence induced by the standard metric on $\mathbb{S}^{d-1}$.}}:
\begin{equation}
\label{eq:cont_pde}
    \begin{cases}
        \partial_t\mu + \text{div}(\chi[\mu]\mu) = 0 &\text{ on } \mathbb{R}_{\geq 0}\times \mathbb{S}^{d-1},\\
        \mu_{|t=0}=\mu(0) &\text{ on }  \mathbb{S}^{d-1}.
    \end{cases}
\end{equation}
In the above formula, $ \chi[\mu]: \mathbb{S}^{d-1} \to T\mathbb{S}^{d-1}$ is given by either $\chi_{\text{SA}}$ or $\chi_{\text{USA}}$, respectively defined as
\begin{align*}
&\chi_{\text{SA}}[\mu](x)=P_x\left(\frac{1}{Z_{\beta, \mu}(x)} \int e^{\beta\langle x, y\rangle} y \mathrm{~d} \mu(y)\right)
  \label{eq:chi_SA} \tag{SA-MF}\\
   &\chi_{\text{USA}}[\mu](x)=P_x\left(\int_{\mathbb{S}^{d-1}} e^{\beta\langle x,y \rangle} y\ d\mu(y)\right) \label{eq:chi_USA} \tag{USA-MF}.
\end{align*}
where we used $Z_{\beta, \mu}(x)=\int e^{\beta\langle x, y\rangle} {d} \mu(y)$.

Note that the dynamics are then interpreted as a flow map between probability measures on $\mathbb{S}^{d-1}$. In addition, as shown in~\citep{geshkovski2023mathematical},~\eqref{eq:cont_pde} admits a Wasserstein gradient flow structure.

 \aa{Noting that~(\ref{eq:cont_pde}) evolves both empirical and absolutely continuous measures,} it is reasonable to guess that the continuous dynamics~\eqref{eq:cont_pde} approximate the \aa{particle} dynamics~\eqref{eq:ODE_USA} in the limit of an infinite number of particles, at least in a certain timescale. In the following section, we show a propagation of chaos result, which puts this consideration on a rigorous footing and is a prelude to our results on dynamical metastability.

\section{Propagation of chaos results and mean field limit} \label{sec:3}

In this section, we establish the rigorous statements for the mean field limit of models~\eqref{eq:ODE_USA} and~\eqref{eq:ODE_SA}. In what follows, we denote by $W_1(\cdot, \cdot)$ the 1-Wasserstein distance\footnote{See Appendix~\ref{app:wass} the precise definition.} on $\mathcal{P}(\mathbb{S}^{d-1})$ (the space of probability measures on $\mathbb{S}^{d-1}$).

Consider a cloud of $N$ points on $\mathbb{S}^{d-1}$, 
$
\Xi^{(N)}:=\left(x_1^{(N)}, \ldots, x_N^{(N)}\right) \in\left(\mathbb{S}^{d-1}\right)^N
$
which we think of as initial conditions for either~\eqref{eq:ODE_SA} of~\eqref{eq:ODE_USA}. Let the associated evolution be $\Xi^{(N)}(t) := \left(x_1^{(N)}(t), \ldots, x_N^{(N)}(t)\right)$, and consider its empirical measure 
$$
\mu_{\Xi^{(N)}}(t):=\frac{1}{N} \sum_{j=1}^N \delta_{x^{(N)}_{j}(t)}.
$$

The following theorem proves that the convergence of a sequence of empirical measures $\Xi^{(N)}$ to a limiting one $\mu_0$ (e.g., if $\{x_j\}$ are drawn \emph{iid} from $\mu_0$) is preserved by the flow of the PDE for any finite time $t$, i.e., that the $N$-particles measure remains close to the dynamics of the particles' law as described by the PDE.

\begin{theorem}[Mean field limit]
 \label{cor:mean_field} 
Assume that there exists $\mu_0 \in \mathcal P(\mathbb S^{d-1})$ such that $W_1(\mu_{\Xi^{(N)}}(0), \mu_0) \to 0$ as $N \to \infty$. Let $\mu(t)$ be the unique weak solution of the associated mean field dynamics~\eqref{eq:cont_pde} with initial condition $\mu(0) = \mu_0$. 
Then, for any fixed $t \geq 0$, as $N \to \infty$,
$$
W_1(\mu_{\Xi^{(N)}}(t), \mu(t)) \to 0.
$$
\end{theorem}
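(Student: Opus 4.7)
The strategy is the classical Dobrushin-type coupling argument: observe that both the empirical measure $\mu_{\Xi^{(N)}}(t)$ and the mean-field limit $\mu(t)$ are weak solutions to the same continuity equation~\eqref{eq:cont_pde} (with different initial data), then establish Lipschitz-in-measure and Lipschitz-in-position estimates on the drift $\chi[\mu](x)$, and finally propagate the initial $W_1$ closeness via a Gr\"onwall-type argument on a suitable coupling of the two characteristic flows. The non-uniformity in time will emerge naturally from the exponential blow-up in Gr\"onwall.

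\textbf{Step 1 (empirical measure solves the PDE).} For any $\varphi\in C^{\infty}(\mathbb S^{d-1})$, differentiating $\int \varphi \, d\mu_{\Xi^{(N)}}(t)=\frac{1}{N}\sum_j\varphi(x_j^{(N)}(t))$ in $t$ and substituting the particle ODE~\eqref{eq:ODE_USA} (or~\eqref{eq:ODE_SA}) shows that $\mu_{\Xi^{(N)}}(t)$ satisfies~\eqref{eq:cont_pde} in the weak sense with initial datum $\mu_{\Xi^{(N)}}(0)$, i.e.\ the particle system is exactly the characteristic ODE of the PDE.

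\textbf{Step 2 (Lipschitz estimates on $\chi$).} I would prove that there exist constants $L_1,L_2>0$, depending only on $\beta$ and $d$, such that for all $\mu,\nu\in\mathcal P(\mathbb S^{d-1})$ and $x,x'\in\mathbb S^{d-1}$,
\begin{equation*}
\|\chi[\mu](x)-\chi[\mu](x')\|\leq L_1\|x-x'\|,\qquad \|\chi[\mu](x)-\chi[\nu](x)\|\leq L_2\, W_1(\mu,\nu).
\end{equation*}
For \eqref{eq:chi_USA} this follows directly from smoothness and boundedness of $(x,y)\mapsto e^{\beta\langle x,y\rangle}y$ on the compact set $\mathbb S^{d-1}\times\mathbb S^{d-1}$, together with the Lipschitz property of the projection $P_x$ and Kantorovich--Rubinstein duality. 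For \eqref{eq:chi_SA} there is the additional quotient by the partition function, but the uniform lower bound $Z_{\beta,\mu}(x)\geq e^{-\beta}>0$ keeps the denominators safely away from zero, so the quotient rule gives the same form of estimate with constants depending polynomially on $e^{\beta}$.

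\textbf{Step 3 (coupling and Gr\"onwall).} Let $\mu_1(t),\mu_2(t)$ be two weak solutions of~\eqref{eq:cont_pde} with data $\mu_i(0)$, and let $\Phi_i(t,\cdot)$ denote the characteristic flow associated to the (nonautonomous) vector field $\chi[\mu_i(t)]$. Choose an optimal $W_1$-coupling $\pi_0\in\Pi(\mu_1(0),\mu_2(0))$ and set $\pi_t:=(\Phi_1(t,\cdot),\Phi_2(t,\cdot))_{\#}\pi_0$, which has marginals $\mu_1(t),\mu_2(t)$. Then
\begin{equation*}
W_1(\mu_1(t),\mu_2(t))\leq\int \|\Phi_1(t,x_1)-\Phi_2(t,x_2)\|\,d\pi_0(x_1,x_2),
\end{equation*}
and differentiating in $t$, inserting the two Lipschitz bounds from Step 2, and estimating $W_1(\mu_1(t),\mu_2(t))$ by the coupling integral on the right, one obtains
\begin{equation*}
\frac{d}{dt}\int \|\Phi_1(t,x_1)-\Phi_2(t,x_2)\|\,d\pi_0\leq (L_1+L_2)\int \|\Phi_1(t,x_1)-\Phi_2(t,x_2)\|\,d\pi_0.
\end{equation*}
Gr\"onwall then yields the stability estimate $W_1(\mu_1(t),\mu_2(t))\leq e^{(L_1+L_2)t}\,W_1(\mu_1(0),\mu_2(0))$. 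Applying this with $\mu_1=\mu_{\Xi^{(N)}}$ and $\mu_2=\mu$ and using the assumed convergence of the initial data closes the proof.

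\textbf{Expected main obstacle.} The routine parts are Steps 1 and 4; the delicate point is the Lipschitz-in-$\mu$ estimate for the \eqref{eq:chi_SA} drift, because $\chi_{\mathrm{SA}}$ is a nonlinear (ratio) functional of $\mu$ and the quotient structure must be unravelled carefully to get a clean $W_1$ bound with the right $\beta$-dependence; relatedly, one must check that the characteristic flow $\Phi_i(t,\cdot)$ remains on $\mathbb S^{d-1}$ (which it does, since $\chi[\mu](x)\in T_x\mathbb S^{d-1}$) and that computing $W_1$ with Euclidean cost as above is consistent with the intrinsic distance, which is fine on the compact sphere where the two metrics are equivalent. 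Once these are in place, the rest of the argument is the standard Dobrushin scheme.
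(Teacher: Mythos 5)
Your proposal is correct and follows essentially the same route as the paper: both rest on Dobrushin's stability estimate, obtained from the two Lipschitz bounds on the drift (in position and, via Kantorovich--Rubinstein, in $W_1$ of the measure argument) combined with a Gr\"onwall argument along the characteristic flows. The only cosmetic difference is that you run Gr\"onwall once on a single coupled flow $(\Phi_1,\Phi_2)_{\#}\pi_0$, whereas the paper splits $W_1(T_t[\mu]\#\mu_0,T_t[\nu]\#\nu_0)$ by the triangle inequality into two pushforward terms and applies Gr\"onwall twice, arriving at the same exponential-in-$t$ bound.
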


\vspace{5pt}

\begin{remark}
Global existence of weak solutions to~\eqref{eq:cont_pde} in $\mathcal{P}(\mathbb{S}^{d-1})$ is classical, and follows from a-priori estimates on the kernel $\chi[\mu]$. 
\end{remark}

The proof of the above theorem is provided for completeness in Appendix~\ref{app:propchaos}.  Following  classical references \citep{sznitman1991topics}, we prove the desired estimate by propagating the error estimate at time $t = 0$ to any given positive time by using an exponential (in $t$) bound on the rate of separation of trajectories, also called Dobrushin's bound \citep{dobrushin1979vlasov}, summarized in \ref{thm:dobrushin}.  

To conclude this section, we highlight the fact that the estimate presented in the above theorem only holds on finite time intervals. This is a result of the exponential degeneration in time of Dobrushin's bound, which is expected to hold in general (as it is a form of Cauchy stability). This degeneration is indeed inevitable for the system at hand (i.e., this system does not obey uniform in time propagation of chaos) as proven in a counterexample in Appendix~\ref{app:exp_dob}.

As we shall see in the upcoming section, despite the above exponential degeneration, estimates on longer time intervals may still be established, providing insight on the qualitative behavior of the system beyond the finite-time horizon. 
\aa{This is a necessary step to characterize the emergence of meta-stable phases, i.e., states existing for long -- in $N$ -- time intervals.} 

\section{Dynamic meta-stability results}\label{sec:metastable}\label{sec:4}

In \citep{geshkovski2023mathematical}, it is observed that the tokens $x_i(t)$ in models~\eqref{eq:ODE_SA} and~\eqref{eq:ODE_USA} exhibit exponential convergence to a single clustered configuration in certain regimes, as time goes to infinity.
In an intermediate time range, however, the authors observe the existence of long-time meta-stable states. Their simulations reveal a two-phase dynamic: an initial phase characterized by the formation of multiple clusters, followed by a pairwise merging of these clusters, ultimately leading to a single point mass distribution. \aa{In this section, we rigorously describe both the onset and the development of such dynamical metastability phenomenon.}

\subsection{Meta-stability: setup and discussion}

We restrict our attention to a class of models which arise as a generalization of~\eqref{eq:ODE_USA}:
\begin{equation}
\label{eq:continuity_eq}
    \begin{cases}
    \partial_t \mu_t + \Div(\mu_t \nabla (W*\mu_t)) = 0 & \text{for } (t,x)\in\mathbb{R}_{\geq 0}\times \mathbb{S}^{d-1},\\
    \mu_{|t=0}=\mu_0 & \text{for } x \in \mathbb{S}^{d-1}.
    \end{cases}
\end{equation}

Here, the convolution operator $*$ is defined canonically as follows\footnote{We recall the definition for integrable functions, however it is standard to extend this to the convolution of a smooth $W$ with a measure $\mu$.}:
\begin{equation}
\label{eq:def_conv}
(f * g)(x)=\int_{\mathbb{S}^{d-1}} f(\langle x, y\rangle) g(y) \mathrm{d} \sigma_{\mathbb{S}^{d-1}}(y),
\end{equation}
where $f: [-1,1] \to \R$, $g: \mathbb{S}^{d-1} \to \R$ and $\mathrm{d} \sigma_{\mathbb{S}^{d-1}}$ is the standard Lebesgue measure on $\mathbb{S}^{d-1}$. In addition $\text{div}$ and $\nabla$ are the intrinsic divergence and gradient in $\mathbb{S}^{d-1}$, induced by the standard metric on $\mathbb{S}^{d-1}$. 

Note that, setting $W(q):=\beta^{-1}\exp(\beta q)$ we recover the case of the (mean-field) dynamics~\eqref{eq:chi_USA}.
However, in the spirit of keeping our discussion general, we analyze the above equation for more general kernels $W$, making the following assumptions about its properties.

\begin{assumption}\label{ass:1} $W$ is a smooth ($C^{\infty}$) function on $[-1,1]$.
\end{assumption}

\begin{assumption}\label{ass:2} Let $\hat{W}_k$ be the $k-$th Gegenbauer coefficient of $W$ (see \eqref{sec:gegenbauer} for the definition), and $\gamma_{k}:=k(k+d-2)\hat{W}_k$. The sequence $\{\gamma_{k}\}_{k\geq 0}$ has a finite maximum $\gamma_{max}>0$. Moreover, this maximum is attained only at a single value $k_{max}>0$.
\end{assumption}

\begin{remark}
Note that Assumption~\ref{ass:1} is trivially satisfied in the transformers model. Furthermore, for $d=2$ the coefficients $\hat{W}_k$ are simply given by ${\beta^{-1}} I_k(\beta)$, where $I_k$ is the $k$-th order modified Bessel funcion of the first kind. Classical asymptotic estimates on Bessel functions \cite[p. 360, 9.1.10]{abramowitz1948handbook} show that the maximum $\gamma_{max}$ is finite for all $\beta$ and it is non-unique only on a set of Lebesgue measure $0$. Furthermore, as $\beta \to\infty$, $k_{\text{max}} \approx \sqrt{\beta}$.
\end{remark}

Assumption \ref{ass:2} is related to the concept of H-stability introduced by \citep{ruelle1999smooth} and, as discussed in \citep{carrillo2020long} for the torus, determines the instability of the uniform measure and the properties of phase transitions in the presence of noise. We will in particular show that $\hat{W}_{k_{max}}$ also controls the time scale and the type of emerging symmetries.

In what follows, motivated by the absence of a preferential direction in embedding space, we consider as the initial condition the empirical measure associated with $N$ tokens sampled independently and uniformly on $\mathbb{S}^{d-1}$. 
Although there is an extensive body of work on fluctuations of the mean-field limit (\citep{fernandez1997hilbertian, lancellotti2009fluctuations}), these studies generally focus on a finite time interval $[0, T]$, which is insufficient for observing the meta-stable phase. Indeed, in our setting, the size of the perturbation scales as $O(N^{-1/2})$, vanishing at any finite time $T$ in the $N\to \infty$ limit. To observe the effect of fluctuations at initialization at macroscopic scales, a much more refined analysis is required. To carry out such analysis, we decompose the meta-stable phase in three parts: the \emph{linear phase}, the \emph{quasi-linear phase}, and the \emph{clustering phase}, as depicted in Figure~\ref{fig:schematic}:

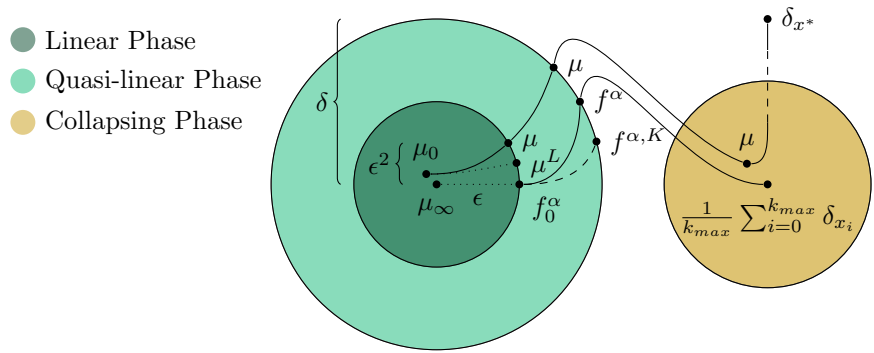
\begin{figure}[t]
\centering
\begin{tikzpicture}[scale=0.55]

\draw[fill=sintefgreen, opacity=0.5]  (0,0) circle (4.);
\draw[fill=sintefdarkgreen, opacity=0.5]  (0,0) circle (2.);
 \node[circle, fill=black, inner sep=0pt, minimum size=3pt,label=below:{$\mu_\infty$}] at (0., 0.)   (mu_inf) {};
 \node[circle, fill=black, inner sep=0pt,minimum size=3pt,label=above:{$\mu_0$}] at (-0.25, 0.25)   (mu_0) {};
 \draw (0,0) circle (2.) node [black, yshift=-1.5 cm] {}; 
 
 \draw (0,0) circle (4.) node [black, yshift=-2.8 cm] {}; 
 \node[circle, fill=black, inner sep=0pt,minimum size=3pt,label=below right:{$f^{\alpha}_0$}] at (2, 0)   (f_epsilon_0) {};

 \node[circle, fill=black, inner sep=0pt,minimum size=3pt,label=right:{$\mu^L$}] at (2*0.966, 2*0.259)   (mu_L) {};
 
 \node[circle, fill=black, inner sep=0pt,minimum size=3pt,label=right:{$\mu$}] at (2*0.866, 2*0.5)   (mu) {};
 
 \draw [dotted] (mu_inf) -- (f_epsilon_0) node [midway,below] {$\epsilon$};
 \draw  [dotted]  (mu_0) to[out=0,in=190] (mu_L);
 \draw    (mu_0) to[out=0,in=220] (mu);

 \draw[decoration={brace,raise=5pt},decorate]
  (-0.5,0) -- node[left=6pt] {$\epsilon^2$} (-0.5,2*0.5);

 \node[circle, fill=black, inner sep=0pt,minimum size=3pt,label=right:{$f^{\alpha,K}$}] at (4*0.966, 4*0.259)  (f_epsilon_K) {};
\node[circle, fill=black, inner sep=0pt,minimum size=3pt,label=right:{$f^{\alpha}$}] at (4*0.866, 4*0.5)   (f_epsilon) {};
\node[circle, fill=black, inner sep=0pt,minimum size=3pt,label=right:{$\mu$}] at (4*0.707, 4*0.707)   (mu_big) {};
\draw [dashed]   (f_epsilon_0) to[out=0,in=250] (f_epsilon_K);
\draw    (f_epsilon_0) to[out=0,in=270] (f_epsilon);
\draw    (mu) to[out=45,in=250] (mu_big);
 \draw[decoration={brace,raise=5pt},decorate]
  (-2.0,0) -- node[left=6pt] {$\delta$} (-2.0,4);

\draw[fill=sintefyellow, opacity=0.6]  (8,0) circle (2.5);

\node[circle, fill=black, inner sep=0pt,minimum size=3pt,label=below:{$\frac{1}{k_{max}}\sum_{i=0}^{k_{max}}\delta_{x_i}$}] at (8,0)   (delta_k) {};
\node[circle, fill=black, inner sep=0pt,minimum size=3pt,label=above:{$\mu$}] at (7.5,0.5)   (mu_k) {};

\node at (8,3.25)   (delta_2) {};
\node at (8,1.5)   (delta_1) {};
\node[circle, fill=black, inner sep=0pt, minimum size=3pt,label=right:{$\delta_{x^*}$}] at (8,4)   (delta) {};

\draw (delta_k) circle (2.5);

\draw    (f_epsilon) to[out=80,in=180] (delta_k);
\draw    (mu_big) to[out=80,in=160] (mu_k);
\draw    (mu_k) to[out=0,in=270] (delta_1.center);
\draw[dashed]   (delta_1.center) to (delta_2.center);
\draw    (delta_2.center) to (delta);

\node[circle, fill=sintefdarkgreen, opacity=0.5, label=right:{Linear Phase}] at (-10, 3.5) {};
\node[circle, fill=sintefgreen, opacity=0.5, label=right:{Quasi-linear Phase}] at (-10, 2.5) {};
\node[circle, fill=sintefyellow, opacity=0.5, label=right:{Collapsing Phase}] at (-10, 1.5) {};
\end{tikzpicture}
\caption{Schematic representation of our decomposition of the dynamics. Here, $\mu$ represents the ``true'' evolution of the system, while $\mu_L$ represents the linearized evolution around the uniform measure $\mu_\infty$; $f^\alpha$ denotes the evolution of the mean-field PDE with initial conditions $f_0^\alpha$ on the invariant manifold selected by $k_{max}$, and $f^{\alpha,K}$ is the approximation by Grenier's iterative scheme.}
\label{fig:schematic}
\end{figure}

\begin{itemize}
    \item \textbf{Linear phase.} Starting from a very small neighborhood of the uniform measure, the perturbation coalesces towards the dominant mode determined by $k_{max}$ until it reaches roughly size $\epsilon \sim O(N^{-1/4})$. We describe the dynamics in this phase in a precise fashion, providing quantitative estimates for the distance between the solution to the nonlinear PDE in \eqref{eq:continuity_eq} and its linearization around the uniform measure.
    
    \item \textbf{Quasi-linear phase.} After exiting the ball of radius $\epsilon \sim O(N^{-1/4})$, the perturbation remains close to the nonlinear evolution of the dominant mode sufficiently long so that its size  exceeds a small threshold $\delta>0$ independent of the number of particles. Moreover, when exiting the quasi-linear phase, the solution is arbitrarily close (as $N\to\infty$) to the invariant manifold selected by $k_{max}$. This manifold is characterized by measures invariant under $\frac{2\pi}{k_{max}}$ rotations for $d=2$.  For $d\geq 3$, the description of the above manifold is provided in Section~\ref{sec:high-dim}.

    \item \textbf{Nonlinear and collapsing phase.} This phase consists of any finite-time interval after the quasi-linear phase. In this phase, the solution preserves the structure (e.g., periodicity) that emerged during the preceding phases. The exact evolution in this phase depends on the specific form of the interaction potential $W$. In the case of Transformers, for $W(q)=\beta^{-1}e^{\beta q}$, numerical simulations show that, after exiting the quasi-linear phase, the dominant mode collapses into $k_{max}$ clusters.  
\end{itemize}

   After the three phases described above, a final phase occurs, which we refer to as the \emph{late-time phase}. During this phase, as shown in \citep{markdahl2017almost} and \citep{criscitiello2024synchronization}, the multiple clusters eventually converge to a single point, possibly going through several meta-stable phases with an intermediate number of clusters.

\subsection{Meta-stability: precise statements}

To avoid technical complications and to improve clarity of exposition, we
focus on the system~\eqref{eq:continuity_eq} in dimension $d = 2$, identifying $\mathbb{S}^1$ with $[0, 2\pi)$. The analysis conducted here for the linear and quasi-linear phase can be immediately generalized to the case  $d > 2$ as we outline in Section~\ref{sec:high-dim}. 

We start by recalling our setup. We draw $N$ tokens $x_i(0)$ at initialization, independently and uniformly at random on $\mathbb{S}^{d-1}$, and consider the resulting (random) evolution under~\eqref{eq:ODE_USA}. Let $\mu_t$ be the empirical measure associated to the tokens $\{x_i(t)\}_{i=1,...,N}$, and let $\mu_0$ be $\mu$ at time $t = 0$. We define the perturbation $\rho_t:=\mu_t-\mu_\infty$, omitting its dependence on $N$ for clarity, and the corresponding characteristic time for the linear phase by:
    \begin{equation*}
    T_1:= \frac{1}{\gamma_{max}}\ln\Big(\frac{N^{-1/4}}{\|\rho_0\|_{H^{-1}}}\Big).
\end{equation*}
where here and throughout $\|\cdot\|_{H^{-1}}$ denotes the Sobolev norm with $p=2$ and negative exponent $k=-1$ defined in~(\ref{eq:sobolev2}). Moreover, $T_1$ is well-defined and grows logarithmically in $N$, as a consequence of the the Central Limit Theorem (see also Lemma \ref{lem:probability_limits}).
\begin{theorem}[Linear phase]\label{thm:main1}
    The measure $\mu(T_1)$ can be decomposed as:
\begin{equation}
\label{eq:linear_decomposition}
\mu(T_1) = \mu_\infty + N^{-1/4}\frac{(\hat{\rho}_0)_{k_{max}}}{\|\rho_0\|_{H^{-1}}}\cos(k_{max}\theta) + R,
\end{equation}
where $R$ is a remainder which satisfies $\|R\|_{H^{-1}} = o(N^{-1/4})$ in probability, and $(\hat{\rho}_0)_{k_{max}}$ denotes the Fourier coefficient of $\rho_0$ with index $k_{max}$. Thus, as $N \to \infty$, the evolution at time $T_1$ is $k_{max}$ periodic and has typical size $N^{-1/4}$ away from the uniform measure.
\end{theorem}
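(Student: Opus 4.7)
The plan is to linearize \eqref{eq:continuity_eq} around $\mu_\infty$, diagonalize the linearized operator in the Fourier (Gegenbauer) basis, and prove that at the exit time $T_1$ the evolution is dominated by its projection onto the $k_{max}$-th eigenspace. Writing $\mu_t = \mu_\infty + \rho_t$ and noting that $\nabla(W * \mu_\infty)\equiv 0$ by rotational symmetry of the uniform measure, the perturbation satisfies
\begin{equation*}
\partial_t \rho_t = \mathcal{L}\rho_t + \mathcal{N}(\rho_t), \qquad \mathcal{L}\rho := -\mu_\infty\Delta(W*\rho), \qquad \mathcal{N}(\rho) := -\Div\bigl(\rho\,\nabla(W*\rho)\bigr).
\end{equation*}
For $d=2$ the basis $\{e^{ik\theta}\}$ simultaneously diagonalizes $-\Delta$ (eigenvalue $k^2$) and convolution with $W$ (eigenvalue $\hat W_k$), so $\mathcal{L}$ acts as multiplication by $\gamma_k = k^2\hat W_k$ on each Fourier mode; by Assumption~\ref{ass:2}, the maximum $\gamma_{max}$ is attained uniquely at $k_{max}$ with a strictly positive spectral gap.

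The second step is quantitative control of the initial perturbation. For $\rho_0 = \mu_0 - \mu_\infty$ arising from $N$ iid uniform samples, a direct second-moment computation gives $\mathbb{E}\,|\hat{\rho}_0(k)|^2 = O(1/N)$ uniformly in $k$, which, combined with standard concentration for quadratic functionals of bounded iid variables (e.g.\ Hanson--Wright), yields $\|\rho_0\|_{H^{-1}} = \Theta(N^{-1/2})$ and $\|P_{k_{max}}\rho_0\|_{H^{-1}} = \Theta(N^{-1/2})$ with high probability. Solving the linear flow mode-wise, $\widehat{\rho_t^L}(k) = e^{\gamma_k t}\hat\rho_0(k)$, and evaluating at the prescribed time $T_1 = \gamma_{max}^{-1}\log(N^{-1/4}/\|\rho_0\|_{H^{-1}})$, the $k_{max}$-mode attains size exactly $N^{-1/4}\,\|P_{k_{max}}\rho_0\|_{H^{-1}}/\|\rho_0\|_{H^{-1}}$, while for $k\neq k_{max}$ the amplification factor is only $(N^{-1/4}/\|\rho_0\|_{H^{-1}})^{\gamma_k/\gamma_{max}}$. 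Since $\gamma_k/\gamma_{max} < 1$ strictly and $\|\rho_0\|_{H^{-1}}$ is itself a negative power of $N$, elementary arithmetic shows that the sum of all non-dominant linear contributions is $o(N^{-1/4})$ in $H^{-1}$.

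The most delicate step is estimating the nonlinear remainder $R_t := \rho_t - \rho_t^L$ over the $\mathcal{O}(\log N)$ time interval $[0,T_1]$. The key technical input is a quadratic product estimate $\|\mathcal{N}(\rho)\|_{H^{-1}} \leq C\|\rho\|_{H^{-1}}^2$, which follows from Assumption~\ref{ass:1}: smoothness of $W$ implies super-polynomial decay of $\hat W_k$, so $W*\rho$ is smooth with all Sobolev norms controlled by $\|\rho\|_{H^{-1}}$, and the product $\rho\,\nabla(W*\rho)$ is then handled via a Fourier convolution estimate. Combining this with Duhamel's formula and a continuity/bootstrap argument in which one assumes $\|\rho_s\|_{H^{-1}} \leq 2\,e^{\gamma_{max}s}\|\rho_0\|_{H^{-1}}$ gives
\begin{equation*}
\|R_t\|_{H^{-1}} \leq C\int_0^t e^{\gamma_{max}(t-s)}\|\rho_s\|_{H^{-1}}^2\,ds \leq C'\,\|\rho_0\|_{H^{-1}}^2\, e^{2\gamma_{max}t},
\end{equation*}
which at $t = T_1$ evaluates to $O(N^{-1/2}) = o(N^{-1/4})$ and thus closes the bootstrap. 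Combining the three steps yields \eqref{eq:linear_decomposition}. The main obstacle is twofold: establishing the product estimate rigorously when $\rho$ is only a signed measure (so $H^{-1}$ is effectively the weakest norm available) and propagating the bootstrap uniformly across the entire $[0,T_1]$ window, rather than on a slightly shorter interval where the linear dynamics already dominate trivially.
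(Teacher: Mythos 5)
Your overall architecture (linearize, diagonalize in Fourier, spectral gap at $k_{max}$, second-moment/CLT control of $\rho_0$, Duhamel plus bootstrap for the nonlinear remainder) matches the paper's, but the central analytic step contains a genuine gap: the product estimate $\|\mathcal{N}(\rho)\|_{H^{-1}} \leq C\|\rho\|_{H^{-1}}^2$ is false as stated. The smoothing of $W$ gives you $\|\rho\,\nabla(W*\rho)\|_{H^{-1}} \leq C\|\rho\|_{H^{-1}}^2$, but $\mathcal{N}(\rho)=-\partial_\theta\bigl(\rho\,\nabla(W*\rho)\bigr)$ carries an outer derivative that the convolution cannot absorb: by duality, $\langle \mathcal{N}(\rho),\phi\rangle = \langle \rho, (\nabla W*\rho)\,\partial_\theta\phi\rangle \leq \|\rho\|_{H^{-1}}\|(\nabla W*\rho)\partial_\theta\phi\|_{H^{1}}$, and the right-hand side requires $\|\phi\|_{H^{2}}$, not $\|\phi\|_{H^{1}}$. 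So $\mathcal{N}(\rho)$ lands in $H^{-2}$, one derivative below where you need it. Since the linear semigroup $e^{t\mathcal{L}}$ is merely bounded on each $H^{s}$ (it is a Fourier multiplier with symbols $e^{\gamma_k t}$, with no smoothing), the Duhamel term is only controlled in $H^{-2}$, and your bootstrap hypothesis $\|\rho_s\|_{H^{-1}} \leq 2e^{\gamma_{max}s}\|\rho_0\|_{H^{-1}}$ cannot be recovered from $\rho_s=\rho_s^L+R_s$. Iterating the loss over the $\mathcal{O}(\log N)$ window is not an option either. You correctly flagged this as the main obstacle, but the proposal does not resolve it.

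The paper's resolution is worth noting because it is exactly designed around this loss. The a priori growth bound $\|\rho_t\|_{H^{-k}} \leq 3\|\rho_0\|_{H^{-k}}e^{\gamma_{max}t}$ (Lemma~\ref{lem:bound_rho}) is \emph{not} obtained by Duhamel; it is obtained by testing $\rho_t$ against a backward dual flow $\psi_s$ solving $\partial_s\psi = -\mathcal{L}_{\mu_\infty}\psi - (\nabla W*\rho_s)\partial_\theta\psi$, so that $\langle\rho_s,\psi_s\rangle$ is conserved. The dangerous transport term then appears as $\langle (\nabla W*\rho)\,\partial_\theta\psi,\psi\rangle_{H^k}$, where the top-order contribution is killed by integration by parts (Lemma~\ref{lem:lemma_tecnico}); this is the ``duality argument to avoid derivative loss.'' Only \emph{after} the lossless a priori bound is in hand does the paper compare $\rho$ with $\rho^L$ via your Duhamel-type argument, and there it does accept a single derivative loss, stating the remainder bound in $H^{-k-1}$ (Lemma~\ref{lem:linear_estimate} and Proposition~\ref{prop:linear_phase}); a one-time loss is harmless because it is not iterated. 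To repair your proof you should replace the bootstrap in $H^{-1}$ by this adjoint-flow estimate (or an equivalent device), and weaken the norm in which $R$ is measured by one derivative. Your probabilistic step is essentially the paper's (Chebyshev for $\|\rho_0\|_{H^{-1}}$, CLT for $(\hat\rho_0)_{k_{max}}$), though the lower bound $\|P_{k_{max}}\rho_0\|_{H^{-1}}=\Theta(N^{-1/2})$ holds only in the tightness sense and is handled in the paper via Slutsky's lemma.
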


From $T_1$ onwards, the dynamics are well-approximated by the mean-field PDE with initial data at $T_1$:
    \begin{equation}\label{eq:falpha}
    \begin{cases}
        \partial_t f^\alpha = -\partial_\theta \left(f^\alpha \nabla W*f^\alpha\right)\\
        f^{\alpha}_0(0) = \mu_\infty + \alpha \cos(k_{max}\theta),
    \end{cases}
    \end{equation}
    where $\alpha = N^{-1/4}\frac{(\hat{\rho}_0)_{k_{max}}}{\|\rho_0\|_{H^{-1}}}$. Importantly, since the first equation in display~(\ref{eq:falpha}) is invariant under rotation (i.e., translation in $\theta$), it will preserve the periodicity (if any) of its initial condition. Consequently, we note that the function $f^\alpha$ will be $k_{max}$-periodic for all times.
    
    Let $\delta > 0$ be a fixed small parameter, independent of $N$. As the following theorem shows, the characteristic time to reach the end of the quasi-linear phase is given by:
    \begin{equation} 
    T_2 := \frac{1}{\gamma_{\text{max}}} \ln(\delta / \alpha).
    \end{equation} 
    \begin{theorem}[Quasi-linear phase] \label{thm:main2}
    Let $f^\alpha$ be the solution to the initial value problem~\eqref{eq:falpha} and $\delta >0$ small enough. Then, we have
    \begin{align*}
    W_1(\mu(T_1 + T_2), f^\alpha(T_2)) \to 0\,,
    \end{align*}
    in probability as $N \to \infty$ and
    \begin{align*}
    W_1(\mu(T_1 + T_2), \mu_\infty) > \delta\,.
    \end{align*}
\end{theorem}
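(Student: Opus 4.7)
The plan is to bootstrap the linear-phase decomposition of Theorem~\ref{thm:main1} along the nonlinear flow up to time $T_1+T_2$, using the invariance of the $k_{max}$-periodic manifold together with a Grenier-type expansion in the (time-dependent) amplitude $\alpha e^{\gamma_{max}t}$. By Theorem~\ref{thm:main1}, at time $T_1$ we have $\mu(T_1) = f_0^\alpha + R$ with $\|R\|_{H^{-1}} = o(N^{-1/4})$ in probability, so the task reduces to comparing $\mu$ and $f^\alpha$ on the window $[T_1, T_1+T_2]$ of length $T_2 = \gamma_{max}^{-1}\log(\delta/\alpha) \sim \gamma_{max}^{-1}\log(\delta N^{1/4})$.

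To handle the nonlinearity on this long-but-logarithmic window I would construct an approximate solution $f^{\alpha,K}$ to~\eqref{eq:falpha} via an iterative scheme
\[
f^{\alpha,K}(t,\theta) = \mu_\infty + \sum_{j=1}^{K} \alpha^j e^{j\gamma_{max}t}\, g_j(t,\theta),
\]
where $g_1(\theta) = \cos(k_{max}\theta)$ and each $g_j$ is obtained recursively from the nonlinearity $-\partial_\theta(f^\alpha\,\nabla W*f^\alpha)$, supported on Gegenbauer modes that are multiples of $k_{max}$, so periodicity is preserved term by term. A standard $H^s$ energy estimate for the remainder $r^{\alpha,K}:=f^\alpha-f^{\alpha,K}$ should yield $\|r^{\alpha,K}(t)\|_{H^s} \leq C(\alpha e^{\gamma_{max}t})^{K+1} \leq C\delta^{K+1}$ uniformly on $[0,T_2]$; by choosing $K$ large and $\delta$ small independently of $N$, this contribution becomes negligible, and the problem reduces to controlling the distance between $\mu$ and $f^{\alpha,K}$.

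This last comparison I would obtain by linearizing~\eqref{eq:continuity_eq} along the background $f^{\alpha,K}$. Since $\|f^{\alpha,K}(t)-\mu_\infty\|_{H^s}\leq C\delta$ for $t\le T_2$, the linearized propagator differs from the linearization at $\mu_\infty$ by a perturbation of size $O(\delta)$, and hence its dominant Gegenbauer mode grows at rate $\gamma_{max}+O(\delta)$. Consequently, the initial $H^{-1}$ error of size $o(N^{-1/4})$ at time $T_1$ amplifies at time $T_1+T_2$ by at most $e^{(\gamma_{max}+O(\delta))T_2}\sim \delta N^{1/4}$, yielding a final error of size $o(\delta)$ in $H^{-1}$, which converts to $o(1)$ in $W_1$ on $\mathbb{S}^1$ for mean-zero perturbations via Kantorovich duality. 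The lower bound $W_1(\mu(T_1+T_2),\mu_\infty) > \delta$ then follows from the reverse triangle inequality, since by construction $f^{\alpha,K}(T_2) = \mu_\infty + \delta\cos(k_{max}\theta)+O(\delta^2)$, so $W_1(f^{\alpha,K}(T_2),\mu_\infty)$ is of order $\delta$ up to a universal constant that can be absorbed in a harmless redefinition of $\delta$.

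The main obstacle lies in the amplification estimate across a window of length $O(\log N)$: a naive Dobrushin-type bound gives exponential growth at a rate strictly larger than $\gamma_{max}$ (essentially the Lipschitz constant of $\nabla W$), which, compounded over $T_2$, would overwhelm the $O(N^{-1/2})$ initial particle fluctuations. The saving point is a sharper mode-by-mode analysis of the fluctuation dynamics along the non-stationary background $f^{\alpha,K}$, exploiting Assumption~\ref{ass:2}: only the $k_{max}$-th Gegenbauer mode attains the critical growth rate $\gamma_{max}$, while all other modes grow strictly slower. Implementing this spectral control along a non-constant trajectory, and in the negative-order norm $H^{-1}$ dictated by the $W_1$ target estimate, is the technically delicate step on which the whole argument hinges.
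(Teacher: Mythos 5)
Your architecture matches the paper's almost exactly: use Theorem~\ref{thm:main1} to reduce to the window $[T_1,T_1+T_2]$, build a high-order approximation $f^{\alpha,K}$ of $f^\alpha$ by Grenier's iterative scheme with remainder $O((\alpha e^{\gamma_{max}t})^{K})$, show that the particle/PDE discrepancy amplifies by only $e^{\gamma_{max}t}$ (times constants) rather than by the crude Dobrushin rate, and convert negative-Sobolev control to $W_1$. (The paper compares $\mu$ to $f^\alpha$ directly via a backward dual test-function argument in $H^{-k}$ rather than "linearizing along $f^{\alpha,K}$", and it has to work in $H^{-2}$ rather than $H^{-1}$ because the linear-phase remainder loses one derivative, but these are cosmetic differences.)

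However, your key amplification estimate has a genuine quantitative flaw. You claim the linearized propagator along the background grows at rate $\gamma_{max}+O(\delta)$ and that $e^{(\gamma_{max}+O(\delta))T_2}\sim\delta N^{1/4}$. This is false as written: with $T_2=\gamma_{max}^{-1}\log(\delta/\alpha)$ and $\alpha\sim N^{-1/4}$ one has
\begin{equation*}
e^{(\gamma_{max}+c\delta)T_2}=(\delta/\alpha)^{1+c\delta/\gamma_{max}}=\delta N^{1/4}\cdot\bigl(\delta N^{1/4}\bigr)^{c\delta/\gamma_{max}},
\end{equation*}
which exceeds $\delta N^{1/4}$ by a factor diverging polynomially in $N$. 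Since the error inherited from the linear phase is only smaller than $N^{-1/4}$ by the factor $(N^{1/4}\|\rho_0\|)^{1-\gamma^-/\gamma_{max}}\sim N^{-(1-\gamma^-/\gamma_{max})/4}$, the product need not vanish unless $\delta$ is taken small relative to the spectral gap $\gamma_{max}-\gamma^-$ — a constraint you neither notice nor enforce. The paper avoids this entirely: in its dual energy estimate the perturbation of the growth rate at time $s$ is bounded by $C\|f^\alpha_s-\mu_\infty\|\lesssim C\alpha e^{\gamma_{max}s}$, i.e.\ it is \emph{not} uniformly of size $\delta$ but only reaches size $\delta$ at the end of the window, so its time integral over $[0,T_2]$ is $\leq C\delta/\gamma_{max}$ and the total amplification is $e^{\gamma_{max}t}$ times an $N$-independent constant (Proposition~\ref{prop:quasilinear_err}). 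Relatedly, the "mode-by-mode spectral analysis along the non-stationary background exploiting Assumption~\ref{ass:2}" that you identify as the crux is neither needed nor what the paper does: the only spectral input in the quasi-linear phase is the operator-norm bound $\|\mathcal{L}_{\mu_\infty}\|\leq\gamma_{max}$ at the uniform measure (Lemma~\ref{lem:linear}); the gap $\gamma^-<\gamma_{max}$ is consumed in the linear phase. You do need the same continuation/bootstrap structure for the self-interaction term $\|h_s\|_{H^{-k}}$, which is also time-integrable for the same reason.
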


\begin{remark}
Recalling that $f^\alpha$ has periodicity $k_{max}$, Theorem~\ref{thm:main2} states that the evolution in the mean field limit, when exiting the quasi-linear phase, converges to a $k_{\text{max}}$-periodic function, while being a fixed amount $\delta$ away from the equilibrium measure. In other words, this result proves that when exiting the quasi-linear phase, $\mu$ (approximately) displays a very specific structure.
\end{remark}

Finally, after the quasi-linear phase, we establish the mean-field limit in the clustering phase. To do so we require the following assumption:

\begin{assumption}\label{ass:3} Let $f_* := \lim_{\alpha \to 0} f^{\alpha} (T_2)$, and consider the initial value problem~(\ref{eq:falpha}) with initial data $f_*$ imposed at time $T_2$. Then, as $t \to \infty$, 
    \begin{equation}\label{eq:clustermudef}
    W_1(f_*(t),\mu_{\text{cluster}}) \to 0\qquad \text{where} \qquad\mu_{\text{cluster}} := \frac{1}{k_{max}} \sum_{j = 0 }^{k_{max}-1} \delta_{\frac{2\pi j}{k_{max}}}.
    \end{equation}
\end{assumption}

Let us motivate Assumption~\ref{ass:3}. Indeed, it is relatively straightforward (using the gradient flow structure) to show that, as time goes to infinity, $f_*(t)$ converges (in the weak sense) to a sum of a number of delta masses located at several points (see Lemma~\ref{lem:analytic}). Moreover, again by translation invariance in $\theta$, the limit will be $k_{max}$ periodic. Thus, Assumption~\ref{ass:3} is true if we only require the limit to be a $k_{max}$ periodic superposition of a number of delta masses.

To motivate why $\mu_{\text{cluster}}$ should have the specific form~\eqref{eq:clustermudef}, let us focus our attention on the dynamics of equation~\eqref{eq:falpha} on each interval of periodicity (for simplicity, consider the interval $I := [- \pi/k_{\text{max}}, \pi/k_{\text{max}}]$). Following an analogous reasoning as in \citep{markdahl2017almost} it can be shown that the evolution under the dynamics of~\eqref{eq:falpha} (restricted to $I$ via periodicity) of almost every empirical measure will tend, as time goes to infinity, to a single delta function. Thus, Assumption 3 holds for almost every empirical measure, which leads us to believe that it should hold for the particular $f_*$ under consideration.

Under Assumption~\ref{ass:3}, we obtain the following statement.

\begin{theorem}[Clustering phase]
Let $T_3 > 0$ be a finite time. As $N$ tends to infinity, $\mu_{(T_1 + T_2 + T_3)}$ is approximated by $f^\alpha(T_2 + T_3)$ in Wasserstein 1-distance, with an error vanishing in probability as $N$ tends to infinity. In particular, provided that Assumption~\ref{ass:3} holds, and picking the time $T_3$ sufficiently large, as $N$ goes to infinity, $\mu_{(T_1 + T_2 + T_3)}$  is arbitrarily close (in the Wasserstein 1-distance and in probability) to the measure $\mu_{\text{cluster}}$ defined in display~\eqref{eq:clustermudef}.
\end{theorem}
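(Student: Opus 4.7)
The plan is to assemble both claims out of three ingredients: Theorem~\ref{thm:main2} as the input at time $T_1+T_2$, a Dobrushin-type stability estimate for the mean-field flow~\eqref{eq:continuity_eq} applied over a bounded interval of length $T_3$, and Assumption~\ref{ass:3} to pin down the long-time behavior of $f^\alpha$. Throughout, I view both $\mu(t)$ and $f^\alpha(t)$ as weak solutions of the same PDE~\eqref{eq:continuity_eq}: $\mu$ starting from the random empirical measure of the $N$ tokens, and $f^\alpha$ starting from $\mu_\infty+\alpha\cos(k_{max}\theta)$ with $\alpha=\alpha(N)\sim N^{-1/4}$.

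For the first assertion I would invoke the pathwise Gr\"onwall/Dobrushin bound
\[
W_1(\nu_1(t),\nu_2(t)) \leq e^{L(t-s)}\, W_1(\nu_1(s),\nu_2(s)),
\]
valid for any two weak solutions $\nu_1,\nu_2$ of~\eqref{eq:continuity_eq}, where $L$ is the Lipschitz constant of the drift $\mu \mapsto \nabla W*\mu$ on $\mathcal{P}(\mathbb{S}^{d-1})$; this is the same bound used to prove Theorem~\ref{cor:mean_field} (see Appendix~\ref{app:propchaos}). Applied to $\nu_1=\mu$ and $\nu_2=f^\alpha$ over $[T_1+T_2,\,T_1+T_2+T_3]$, it yields
\[
W_1\bigl(\mu(T_1+T_2+T_3),\,f^\alpha(T_2+T_3)\bigr) \leq e^{L T_3}\, W_1\bigl(\mu(T_1+T_2),\,f^\alpha(T_2)\bigr).
\]
Since $T_3$ is a fixed finite time, $e^{L T_3}$ is a deterministic constant independent of $N$, while Theorem~\ref{thm:main2} makes the right-hand side vanish in probability as $N\to\infty$. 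This proves the first part of the statement.

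For the second assertion, fix $\epsilon>0$ and split by the triangle inequality
\begin{align*}
W_1(\mu(T_1+T_2+T_3),\mu_{\text{cluster}})
&\leq W_1(\mu(T_1+T_2+T_3),f^\alpha(T_2+T_3))\\
&\quad + W_1(f^\alpha(T_2+T_3),f_*(T_3)) + W_1(f_*(T_3),\mu_{\text{cluster}}).
\end{align*}
By Assumption~\ref{ass:3}, I would first choose $T_3$ large enough (depending only on $\epsilon$ and the PDE) so that the third term is below $\epsilon/3$. Since $f^\alpha(T_2)\to f_*$ as $\alpha\to 0$ by the very definition of $f_*$, applying the same Dobrushin bound over the fixed interval $[T_2,T_2+T_3]$ gives $W_1(f^\alpha(T_2+T_3),f_*(T_3))\leq e^{L T_3}W_1(f^\alpha(T_2),f_*)\to 0$ as $N\to\infty$, so the second term is below $\epsilon/3$ for $N$ large. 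The first term is below $\epsilon/3$ with probability tending to one by the first assertion. Combining, the total is below $\epsilon$ with probability tending to one.

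The main subtlety is the order of limits: the Dobrushin prefactor $e^{L T_3}$ would blow up if $T_3$ were allowed to grow with $N$, so the argument relies precisely on the fact that $T_3$ is fixed first using only PDE information (via Assumption~\ref{ass:3}), and $N\to\infty$ is then taken for that fixed $T_3$. The more technical point, which I expect to be the main obstacle, is justifying the regularity of $f_*$: since $T_2=\gamma_{max}^{-1}\log(\delta/\alpha)\to\infty$ as $\alpha\to 0$, the limit $f_*$ could in principle be singular, and one must verify that both the existence of the limit $f_* = \lim_{\alpha\to 0} f^\alpha(T_2)$ and the stability bound above make sense in the class of measures that contains $f_*$. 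This should follow from the same a priori estimates on $\chi[\mu]$ that underpin global existence for~\eqref{eq:cont_pde}, but it is the point where one needs to be most careful.
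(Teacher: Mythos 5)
Your proposal is correct and follows essentially the same route as the paper's proof of Theorem~\ref{prop:clustering}: Dobrushin's estimate (Theorem~\ref{thm:dobrushin}) propagated over the fixed finite interval of length $T_3$, combined with Theorem~\ref{thm:main2} (via Proposition~\ref{prop:two_phases} and Lemma~\ref{lem:wass_dist}) and Assumption~\ref{ass:3}. Your extra triangle-inequality term $W_1(f^\alpha(T_2+T_3),f_*(T_3))$ is in fact a slight improvement in rigor, since the paper applies Assumption~\ref{ass:3} (stated for $f_*$) directly to $f^{\alpha^N}$ without explicitly justifying the passage to the $\alpha\to 0$ limit.
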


\begin{remark}
Note that, without requiring assumption~\ref{ass:3}, an analogous rigorous statement can be obtained in which the limiting measure is a $k_{\text{max}}$ periodic superposition of delta masses. However, in this case a number $k \geq k_{max}$ of clusters, possibly of different intensity, may emerge.
\end{remark}

\subsection{Description of the proof}

The technical machinery which enables us to achieve the proof of the above Theorems is contained in Appendix~\ref{app:meta}. First, we introduce key technical definitions and statements in Section~\ref{sec:pre}, and the linearization of our problem in Section~\ref{sec:emergence_sym}.

{\bf Linear phase.} Section~\ref{sec:linear_phase} is devoted to the proof of Proposition~\ref{prop:linear_phase}, the quantitative statement of Theorem~\ref{thm:main1}. First, we improve, via Lemma~\ref{lem:bound_rho}, the Lyapunov exponent on times smaller than $T_1$, by studying explicitly the linearized operator around $\mu_\infty$.  This Lemma employs a duality argument to avoid derivative loss, and allows us to show a quantitative improvement of Dobrushin's estimate, which in turn is enough to show propagation of chaos for times comparable to $T_1$. These ingredients are enough to conclude the proof of Proposition~\ref{prop:linear_phase}.

{\bf Quasi-linear phase.} In Section~\ref{sec:quasilinear_phase}, we construct a genuinely nonlinear solution which arises from the unstable mode $\cos(k_{max} \theta)$ after time $T_1$. The main technical issue in this regard is that we only have a poor estimate of the nonlinear growth rate $C>0$ of solutions to the mean-field equation (appearing in Dobrushin's estimate). In particular, a direct approach would work if $2\gamma_{max} > C$ (i.e., the nonlinear growth rate is slower than  twice the growth rate of the unstable mode) for the relevant times $t < T_1 + T_2$, but this estimate does not seem to be available. By employing Grenier's scheme \citep{grenier2000nonlinear}, we consider a higher-order approximation to the dynamics, which allows us to relax the above condition to $n k_{max} > C$, where $n>1$ is the order of our approximation. This reasoning allows us to prove Proposition~\ref{prop:growth_fa} on the quasi-linear phase.

{\bf Collapse phase.} At time $T_2$, as $N \to \infty$, the evolution concentrates around the $k_{\text{max}}$ periodic function $f_*$ considered in assumption~\ref{ass:3}. We then use Dobrushin's estimates, together with the specific form of the infinite time limit provided by assumption~\ref{ass:3} to conclude the statement of Theorem~\ref{prop:clustering}. Note in particular that, after time $T_2$, the time required for clustering to occur does not depend on $N$. Consequently, Dobrushin's estimates apply directly here.

\subsection{Meta-stable Phase in Higher Dimensions}\label{sec:high-dim}
In higher dimensions, a similar analysis of the meta-stable phase can be carried out as in the lower-dimensional case (see \ref{app:high_dim} for details). The definitions and arguments from the previous sections naturally extend to the case $d\geq2$ due to the properties of spherical harmonics and Gegenbauer coefficients. The uniform measure remains an unstable equilibrium. The key difference is that the dominant mode $f^\alpha$, which emerges in the linear phase as described in Theorem \ref{thm:main1}, is now a superposition of spherical harmonics of degree $k_{max}$, solving:
\begin{equation}\label{eq:falpha_high_dim}
    \begin{cases}
        \partial_t f^\alpha = -\partial_\theta \left(f^\alpha \nabla W*f^\alpha\right)\\
        f^{\alpha}_0(T_1) = \mu_\infty+\sum_{j=0}^{Z_{k_{max}}^d} \alpha_j Y_{k_{max},j},
    \end{cases}
    \end{equation}
    where $Z_{k_{max}}^d$ is the multiplicity of spherical harmonics of degree $k_{max}$ in dimension $d$.

This function belongs to the subspace $\mathcal{H}_{k_{max}}$, invariant for the continuity equation \eqref{eq:continuity_eq}, defined as:
$$
\mathcal{H}_k:=\{ \mu \in \mathcal{P}(\mathbb{S}^{d-1}):\ \langle \mu, Y_{l,j}\rangle = 0\quad \forall l \text{ not divisible by } k\},
$$
that generalizes the  invariance under $\frac{2\pi}{k_{max}}$-rotations. Hence, Theorem \ref{thm:main2} can be reformulated in this higher-dimensional context as:
\begin{theorem}[Quasi-linear phase] \label{thm:main2_high_dim}
    \aa{Let $f^\alpha$ be the solution to the initial value problem~\eqref{eq:falpha_high_dim} and $\delta >0$ small enough. Then, we have
    \begin{align*}
    W_1(\mu(T_1 + T_2), f^\alpha(T_2)) \to 0\,,
    \end{align*}
    in probability as $N \to \infty$ and
    \begin{align*}
    W_1(\mu(T_1 + T_2), \mu_\infty) > \delta\,.
    \end{align*}}
\end{theorem}
Again, Theorem~\ref{thm:main2_high_dim} states that the evolution in the mean field limit, when exiting the quasi-linear phase, converges to a function in $\mathcal{H}_{k_{max}}$, while being a fixed amount $\delta$ away from the equilibrium measure. This proves that the measure displays a very specific structure at the end of the quasi-linear phase.
A natural extension of the analysis is to characterize the stable equilibria of the dynamics when restricted to the subspace $\mathcal{H}_{k_{max}}$, excluding the uniform measure. These equilibria must be finite unions of submanifolds of dimension at most $d-2$ (see Lemma \ref{lem:analytic}). Furthermore, if the measure is the empirical measure representing a set of points, being in $\mathcal{H}_{k_{max}}$
(\cite{bannai2015spherical})  implies that these points form a spherical $k_{max}-1$-design.
 This condition indicates specific properties regarding the minimum number of clusters, their geometric structure, and the distribution of tokens,  which have been extensively explored in the literature and are an active area of research.\vspace{-0.2cm}
\section{Numerical experiments}\label{sec:5}

The first experiment aims to illustrate the relationship between the parameter $\beta$ and the number of clusters formed: an immediate  manifestation of the symmetry described in Theorem \ref{thm:main2} and the clustering tendency of the dynamics. Specifically, the left column of Figure \ref{fig:trajectories} illustrates the values of the coefficients $\gamma_k$ for the Transformer model \ref{eq:ODE_USA} with $\beta$ set to 5 and 7. Assumption \ref{ass:2} is satisfied with $k_{max}$ values of 3 and 4, respectively.  
By Theorem \ref{thm:main1} and Proposition \ref{prop:two_phases}, these values correspond to the expected number of clusters observed during the meta-stable phase. To confirm this, we simulate in Figure \ref{fig:trajectories} the trajectories of $10^4$ particles on $\mathbb{S}^1$, whose initial conditions are sampled uniformly at random. The angular ODEs system \eqref{eq:Nparticle}, equivalent to Eq.~\eqref{eq:ODE_USA}, is numerically solved using the Euler method with a time step  $dt=5\times 10^{-4}$. Notice the different timescales of the two simulations, the meta-stable phase and the emergence of the predicted $3$ and $4$ clusters. The computation is performed using PyTorch in double precision on a Nvidia~Tesla~T4~GPU. 

\begin{figure}[t]
\centering
 \includegraphics[width=0.95\textwidth]{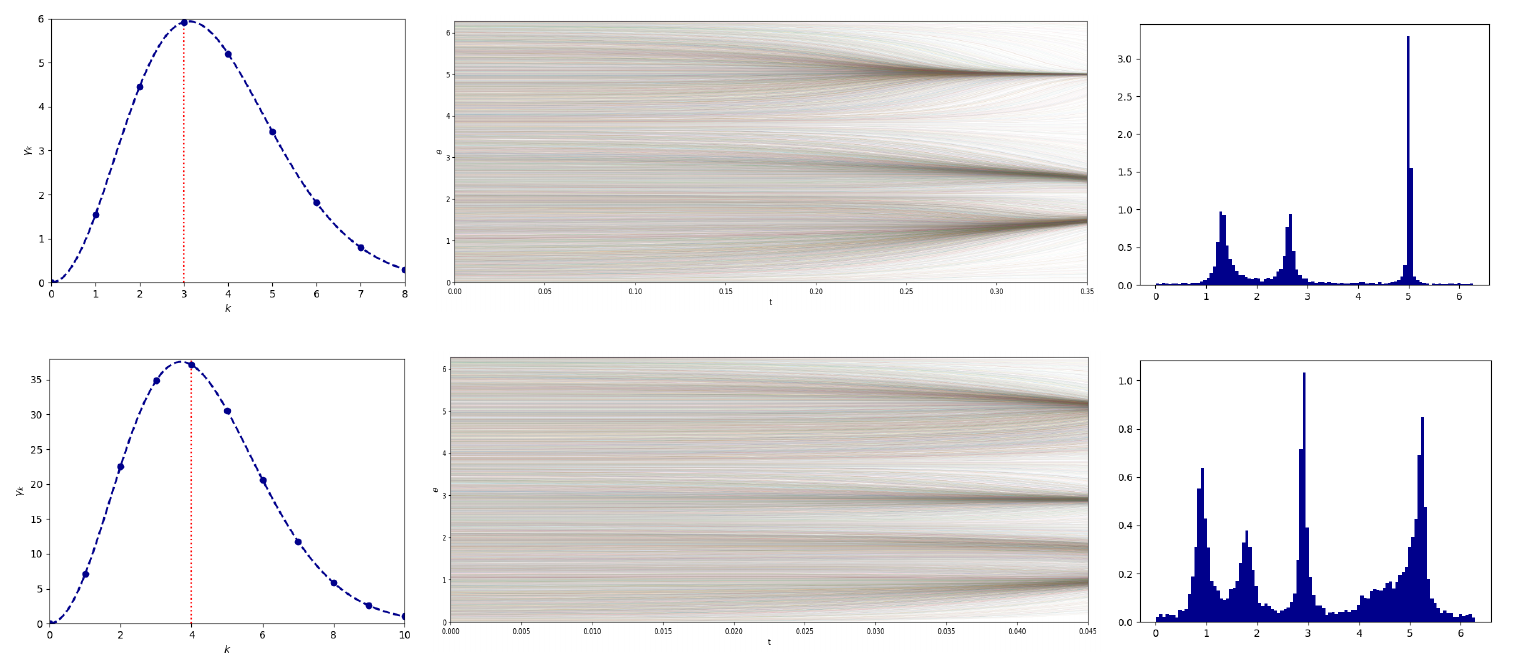}
\caption{On the left, the plots of $\gamma_k$ as a function of $k$ for $\beta = 5$ (top) and $\beta = 7$ (bottom) are shown. The dashed blue line represents the value of $\gamma_k$ generalized to real $k$. From these graphs, we observe that the corresponding values of $k_{\text{max}}$ are $3$ and $4$, respectively,  representing the predicted number of clusters in the large-$N$ limit. In the center, numerical simulations depict particle trajectories for $\beta = 5$ (top) and $\beta = 7$ (bottom), with $10^4$ particles whose initial conditions are sampled uniformly at random. 
On the right, the histograms of particle distributions at the end of the simulations are displayed, showcasing the formation of $3$ and $4$ clusters respectively.
}
\label{fig:trajectories}
\end{figure}

\begin{wrapfigure}{r}{0.36\textwidth}
\vspace{-0.5cm}
  \begin{center}
\includegraphics[width=0.36\textwidth]{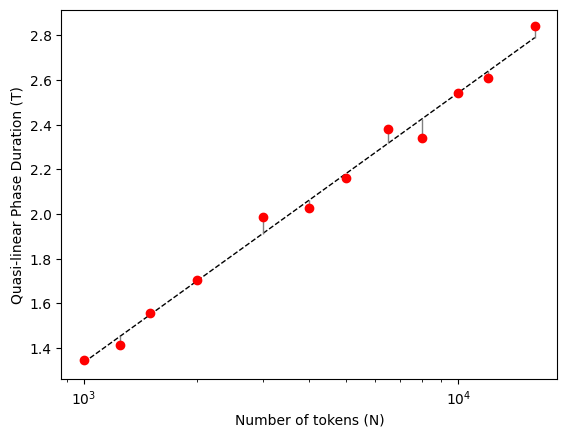}
  \end{center}
  \caption{ Plot of the average time required for the empirical measure to exceed a fixed threshold. Note the logarithmic scale on the $x$-axis.\vspace{-0.3cm}}
  \label{fig:log_N}
\end{wrapfigure}

With the second experiment we demonstrate the decomposition of Eq.~\eqref{eq:linear_decomposition} and the emergence of periodicity in the token distribution. 
To further investigate the asymptotic limit on the number of tokens, and given the computational constraints on simulating a large number of particles, we chose to examine the limiting case of an absolutely continuous initial condition.
Given that the results in Theorem \ref{thm:main1} 
apply to general perturbations of uniform measures, we consider as initial condition, $\mu_0$, the uniform measure slightly perturbed by white noise. Figure \ref{fig:pde_sol} illustrates the evolution of $\mu_t$ starting from $\mu_0$.
The continuity equation \eqref{eq:continuity_eq}, more precisely its angular counterpart Eq.\eqref{eq:continuity_angular}, is numerically solved using the Lax–Friedrichs method by discretizing the spatial domain into $10^4$ grid points over the interval 
 $[0, 2\pi]$, with a spatial step $dx=2\pi\times 10^{-4}$ and a time step $dt=0.05 dx$.  The resulting dynamics consistently confirm our theoretical expectations.
 
\begin{figure}[t]
\centering
\begin{subfigure}[b]{0.3\textwidth}
\includegraphics[width=\textwidth]{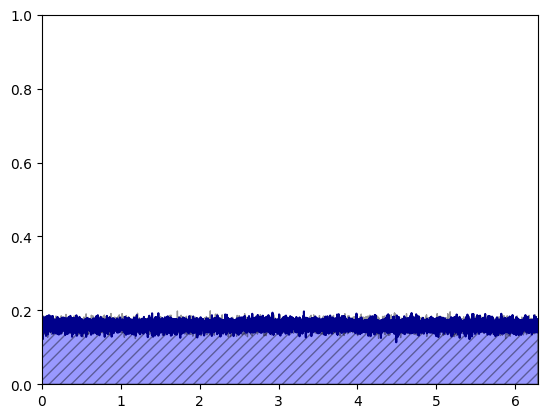}
\end{subfigure}
\begin{subfigure}[b]{0.3\textwidth}
    \includegraphics[width=\textwidth]{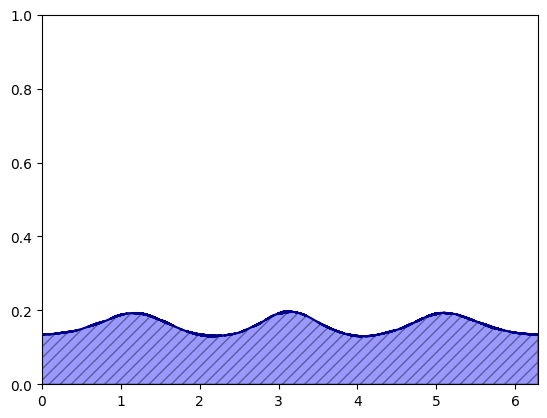}
\end{subfigure}
\begin{subfigure}[b]{0.3\textwidth}
    \includegraphics[width=\textwidth]{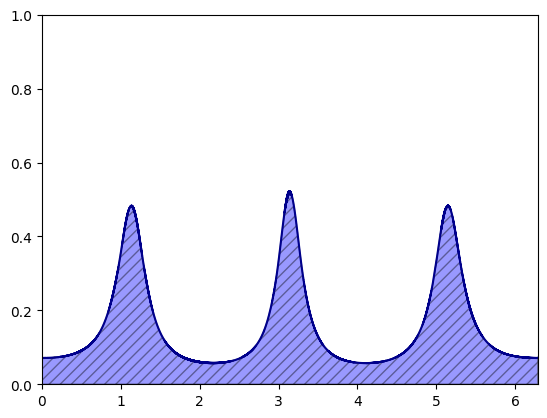}
\end{subfigure}
\caption{Evolution of the initial condition given by the uniform measure perturbed by white noise ($\sigma = 0.01$) with $\beta=5$ 
. Note that the emerging solution is $3$-periodic as predicted by Theorem~\ref{thm:main2}.\vspace{-0.3cm} 
}
\label{fig:pde_sol}
\end{figure}

In the third experiment, we demonstrate that the duration of the meta-stable phase is $O(\ln(N))$, as predicted by Theorem \ref{thm:main1} and Theorem \ref{thm:main2}. We fix $\beta=2$ and run multiple simulations with the same setup of the first experiment, with the number of particles ranging from $N=1000$ to $N=16000$. The simulations are terminated when the approximate total variation distance between the uniform distribution and the token distribution (computed using 100 bins for the histogram) exceeds a fixed threshold. Figure \ref{fig:log_N} shows the average results from 20 simulations per particle number. We observe a clear logarithmic dependence in $N$ of the exit time from the quasi-linear regime, aligning with the theoretical expectations. Code is available at this \cite{repo}.\vspace{-0.1cm}

\section{Conclusions and future works}\label{sec:6}\vspace{-0.1cm}

In this paper we rigorously investigate the dynamics of $N$ tokens, interpreted as exchangeable particles, through the depth of a simple transformer model in the limit of a large $N$. In this limit, the dynamic meta-stability of the system can be captured by studying the emergence of symmetry at initialization through a careful analysis of the evolving fluctuations in the systems. In particular, we show that, after a time that depends logarithmically on $N$, the system exits from a ball containing the uniform measure displaying, approximately, a specific structure (a certain periodicity in the case $d=2$) that depends explicitly on the inverse temperature parameter 
$\beta$ and on the dimension $d$. Since the equation describing the dynamics of this model preserves such structure, this observation allows to capture the emergence of dynamically meta-stable solutions and to characterize them explicitly. In practice, this translates in a detailed understanding of the representations produced by the network at finite but large depths, and how these representations vary with the depth as a function of $N$.

While $10^4$ tokens might seem a large number, many modern models can handle significantly larger context lengths. For instance, ChatGPT can process up to 128k tokens, and Gemini 1.5 Pro supports up to 2 million tokens. As the context lengths and the depth of large language models continue to expand, the large-$N$ limit and the meta-stable effects discussed here will become increasingly relevant. Indeed, we demonstrated the existence of a strong interconnection among the main hyper-parameters of a Transformer model: number of tokens, temperature, embedding dimension, and model depth. These hyper-parameters cannot be adjusted independently of one another.

Admittedly, our work makes strong assumptions on the structure of the model, e.g., absence of MLP layer and $Q=K=V=Id$. The latter assumption can be immediately relaxed to $Q^T K = \lambda Id$, and quite possibly to $Q^T K = V$, as the gradient flow structure is preserved in that case.
While these assumptions are still far from realistic, numerical experiments in \citep{geshkovski2023mathematical} show that BERT displays qualitatively similar token clustering behavior to the one of our model. 
On the other hand, the choice of MLP layer will have a  dramatic impact on the dynamical landscape \citep{cowsik2024geometric}. The qualitative effect of this extra term on the dynamics depends
heavily on the specific choice of MLP weights, but a linearized analysis similar to ours is expected to go through in a neighborhood of the (new) unstable equilibria of the model. Despite these limitations, we believe isolating properties of the full model in simplified settings such as the one considered here is a crucial step toward building a comprehensive understanding of real-world transformers. Combining this result with complementary ones on MLP and nontrivial~$Q,K,V$~is~left~for~future~research.

This work hints at a number of further interesting questions. First, an interesting avenue of future research consists of proving that Assumption~\ref{ass:3} holds, i.e., that the solution to the PDE~(\ref{eq:cont_pde}) as it exits the quasi-linear phase indeed converges to a certain number of delta measures. This result goes well beyond the ones presented in \citep{cohn2007universally}, as it requires characterising the solution of the PDE with a specific initial condition that may fall in the set of measure $0$ in the previously established result.
It would also be interesting to investigate the effect of adding noise - motivated by considering randomly initialized linear layers between attention layers before training - to the ODEs (\ref{eq:ODE_SA}) and (\ref{eq:ODE_USA}) and, as a consequence, on the emergence of symmetry~and~clustering~in~the~resulting~system. 

 \subsubsection*{Acknowledgments}
  We thank Philippe Rigollet for discussions that motivated this work. AA and GB thank the Mathematics Department at the University of Pisa, where part of this work was carried out. AA acknowledges partial support of
Dipartimento di Eccellenza, UNIPI, the Future of Artificial Intelligence Research (FAIR) foundation (WP2),
PRIN 2022 project ConStRAINeD (2022XRWY7W), and PRA Project APRISE. 

\bibliography{refs}
\bibliographystyle{iclr2025_conference}

\newpage
\appendix

{\LARGE Appendix} 

\section{Propagation of chaos}\label{app:propchaos}

\subsection{Preliminaries: Wasserstein distance}\label{app:wass}

Let $(X, d)$ be a compact metric space and let $\mathcal{P}(X)$ be the set of probability measures on $X$. The $p-$Wasserstein distance is defined as:
\begin{equation*}
    W_p(\mu,\nu):=\left(\min_{\gamma\in \Pi(\mu,\nu)}\int d(x,y)^p d\gamma(x,y)\right)^{1/p}.
\end{equation*}

It is a well defined distance and in compact spaces satisfies for all $p\geq 1$:
\begin{equation*}
    W_1(\mu, \nu)\leq W_p(\mu,\nu)\leq \text{diam}(X)^{\frac{p-1}{p}}W_1(\mu,\nu)^{\frac{1}{p}}.
\end{equation*}

Moreover, under the compactness assumption, for $p\in[1,+\infty[$, we have $\mu_n\rightharpoonup\mu$ if and only if $W_p(\mu_n,\mu)\to 0$.

Another common characterization of the distance $W_1$ is a consequence of the duality theorem of Kantorovich and Rubinstein: when $\mu$ and $\nu$ have bounded support,
\begin{equation*}
    W_1(\mu, \nu)=\sup\left\{ \int_{X} f(x) d(\mu-\nu)(x) \big| f:X\to \mathbb{R} \text{ continuous }, \text{Lip}(f)\leq 1\right\}.
\end{equation*}

For other properties and results related to the Wasserstein distance, the reader can refer to \citep{ambrosio2008gradient, panaretos2020invitation}.

\subsection{Angular equation and mean-field PDE convergence}

In this section we provide a proof of the Dobrushin's estimate for the Transformers model. For clarity we restrict our attention to the case where $d=2$ for the model \ref{eq:ODE_USA}. Nevertheless, the same arguments can be extended to prove the estimate for $d\geq 3$ and for the model \ref{eq:ODE_SA}.

An interacting particle system on $\mathbb{S}^1$ can be equivalently described by associating the particles $x_i(t)\in \mathbb{S}^1$ with the corresponding angles $\theta_i(t)\in\mathbb{T}$. The angular representation of the system of ODEs for the model \ref{eq:ODE_USA} can be expressed as follow:

\begin{equation}
\label{eq:Nparticle}
         \dot{\theta}_i = -\frac{1}{N}\sum_{j=1}^N{e^{\beta \cos(\theta_i-\theta_j)}\sin(\theta_i-\theta_j)}.
\end{equation}
Then the empirical measure of the angles, $\nu(t) = \frac{1}{N}\sum_{j=1}^N \delta_{\theta_j(t)}$, is a solution of the continuity equation:
\begin{equation}
\label{eq:continuity_angular}
\partial_t \nu(t)+\partial_\theta (\chi[\nu(t)]\nu(t)) = 0,
\end{equation}
where:
\begin{equation*}
\chi[\nu](\theta) = \int h_\beta'(\theta-\omega)\ d\nu (\omega)
\end{equation*}
and $h_\beta(\theta) = \frac{1}{\beta} e^{\beta \cos(\theta)}$. With this notation and these definitions, Dobrushin's estimate can be formulated as follows:

\begin{theorem}
\label{thm:exponential_W1}
    Let $\mu_t,\ \nu_t$ be two solutions of Eq.~\eqref{eq:continuity_angular}, with initial conditions respectively $\mu_0,\nu_0\in\mathcal{P}(\mathbb{S}^1)$. Then $\forall t\geq 0$:
    \begin{equation*}
        W_1(\mu_t, \nu_t) \leq e^{2Ct}W_1(\mu_0,\nu_0),
    \end{equation*}
    with $C=\|h_\beta''\|_{L^\infty}$.
\end{theorem}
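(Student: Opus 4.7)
The plan is to follow the classical Dobrushin approach based on pushing forward the initial coupling along the characteristic flows associated to each vector field. Given an optimal coupling $\gamma_0 \in \Pi(\mu_0,\nu_0)$ for $W_1$, I would introduce the characteristic flow maps $\Phi^\mu_t$ and $\Phi^\nu_t$ on $\mathbb{S}^1$ solving $\dot{\Phi}^\mu_t(\theta) = \chi[\mu_t](\Phi^\mu_t(\theta))$ and $\dot{\Phi}^\nu_t(\omega) = \chi[\nu_t](\Phi^\nu_t(\omega))$ with $\Phi^\mu_0=\Phi^\nu_0=\mathrm{Id}$. By the weak formulation of~\eqref{eq:continuity_angular}, $\mu_t = \Phi^\mu_t \# \mu_0$ and $\nu_t = \Phi^\nu_t\#\nu_0$, so $\gamma_t := (\Phi^\mu_t, \Phi^\nu_t)\#\gamma_0$ is a coupling of $\mu_t$ and $\nu_t$, giving the a priori bound
\[
W_1(\mu_t,\nu_t) \leq D(t) := \int |\Phi^\mu_t(\theta)-\Phi^\nu_t(\omega)|\, d\gamma_0(\theta,\omega).
\]

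The second step is to differentiate the integrand along the flow and split the velocity difference in the standard way,
\[
|\chi[\mu_t](\Phi^\mu_t(\theta)) - \chi[\nu_t](\Phi^\nu_t(\omega))| \leq |\chi[\mu_t](\Phi^\mu_t(\theta)) - \chi[\mu_t](\Phi^\nu_t(\omega))| + |\chi[\mu_t](\Phi^\nu_t(\omega)) - \chi[\nu_t](\Phi^\nu_t(\omega))|.
\]
Since $\chi[\mu](\theta) = \int h'_\beta(\theta-\omega)\,d\mu(\omega)$ and $h'_\beta$ is $C$-Lipschitz with $C = \|h''_\beta\|_{L^\infty}$, the first term is bounded by $C\,|\Phi^\mu_t(\theta)-\Phi^\nu_t(\omega)|$. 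For the second term, since $h'_\beta(\theta-\cdot)$ is $C$-Lipschitz on $\mathbb{S}^1$, the Kantorovich--Rubinstein duality gives
\[
|\chi[\mu_t](\Phi^\nu_t(\omega))-\chi[\nu_t](\Phi^\nu_t(\omega))| \leq C\, W_1(\mu_t,\nu_t) \leq C\, D(t).
\]

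Combining these estimates and integrating against $\gamma_0$ yields
\[
\frac{d}{dt} D(t) \leq C\, D(t) + C\, D(t) = 2C\, D(t),
\]
and Gronwall's inequality together with $D(0)=W_1(\mu_0,\nu_0)$ (by optimality of $\gamma_0$) gives $W_1(\mu_t,\nu_t)\leq D(t) \leq e^{2Ct} W_1(\mu_0,\nu_0)$, as desired.

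The only delicate point is the well-posedness of the characteristic flows $\Phi^\mu_t, \Phi^\nu_t$ for measure-valued $\mu_t,\nu_t$, but this follows from the global $C^\infty$ regularity of $h_\beta$ on $\mathbb{S}^1$: the velocity field $\chi[\mu_t](\theta)$ is uniformly Lipschitz in $\theta$ (with constant $C$) and continuous in $t$ by the weak continuity of $\mu_t$, so Cauchy--Lipschitz applies. The rest of the argument is structural and the constant $2C$ (rather than $C$) is unavoidable because both the Lipschitz dependence of $\chi[\mu]$ in the spatial variable and the Lipschitz dependence of $\chi[\cdot](\theta)$ in $\mu$ contribute equally.
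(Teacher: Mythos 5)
Your proof is correct, and it takes a structurally different route from the paper's. You run the classical Dobrushin coupling argument: push an optimal coupling $\gamma_0$ forward under the pair of characteristic flows, control the quantity $D(t)=\int d\bigl(\Phi^\mu_t(\theta),\Phi^\nu_t(\omega)\bigr)\,d\gamma_0$, and close a single Gronwall estimate after splitting the velocity difference into a spatial-Lipschitz part (giving $C\,D(t)$) and a measure-Lipschitz part (giving $C\,W_1(\mu_t,\nu_t)\le C\,D(t)$ via Kantorovich--Rubinstein). The paper instead never introduces a coupling: it applies the triangle inequality in $W_1$ with the intermediate measure $T_t[\mu]\#\nu_0$, bounds the first piece by $e^{Ct}W_1(\mu_0,\nu_0)$ using the Lipschitz constant of the flow (their Lemma 3), bounds the second piece by a quantity $\lambda(t)$ satisfying $\lambda'\le C[\lambda+W_1(\mu_t,\nu_t)]$, and then needs a second, integral-form Gronwall to close the loop. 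The two key inputs are identical (their Lemmas 2 and 3 are exactly your two Lipschitz bounds on $\chi$), and both yield the constant $2C$; your version is arguably tighter in bookkeeping since it avoids the intermediate pushforward and the nested Gronwall, at the cost of invoking the pushforward representation of both solutions along their characteristics (which the paper also uses, so nothing extra is assumed). The only point worth flagging is the differentiation of $D(t)$ when $\Phi^\mu_t(\theta)$ and $\Phi^\nu_t(\omega)$ are antipodal, where the geodesic distance on $\mathbb{S}^1$ is not smooth; the one-sided derivative bound $\tfrac{d^+}{dt}d\le|\dot\Phi^\mu_t-\dot\Phi^\nu_t|$ still holds there, so this is cosmetic rather than a gap.
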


We denote $T_t[\mu](x)$ the flow of our system with initial condition $x\in\mathbb{S}^1$ and vector field $\chi[\mu_t]$. Furthermore, given a generic measurable map $f$, we denote the push-forward of the measure $\mu$ under the map $f$ as $f\#\mu$.
In particular Eq.~\eqref{eq:continuity_angular} is equivalent to $\nu_t = T_t[\nu]\# \nu_0$.

The proof of Theorem \ref{thm:exponential_W1} is a consequence of the Lipschtzianity of the vector field $\chi[\mu_t]$ and of the flow $T_t[\mu]$, which are shown in the next two lemmas:
\begin{lemma}
\label{lem:chi_w1}
    Let $\mu, \nu \in\mathcal{P}(\mathbb{S}^1)$, then:
    \begin{equation*}
        \left\| \chi[\mu]-\chi[\nu] \right\|_{L^\infty}\leq \|h_\beta''\|_{L^\infty} W_1(\mu, \nu).
    \end{equation*}
\end{lemma}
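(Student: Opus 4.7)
The plan is to apply the Kantorovich--Rubinstein dual characterization of $W_1$ recalled in Appendix~\ref{app:wass}, which says that for probability measures on a compact space,
\begin{equation*}
W_1(\mu,\nu) = \sup\left\{ \int f\,d(\mu-\nu) \,:\, f \text{ continuous}, \ \mathrm{Lip}(f)\le 1\right\}.
\end{equation*}
The key observation is that, for each fixed $\theta \in \mathbb{S}^1$, the quantity $\chi[\mu](\theta)-\chi[\nu](\theta)$ is exactly the integral of a single test function $\omega \mapsto h_\beta'(\theta-\omega)$ against the signed measure $\mu-\nu$, so the job reduces to controlling the Lipschitz constant of this test function uniformly in $\theta$.

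First I would fix $\theta$ and define $f_\theta(\omega) := h_\beta'(\theta-\omega)$. Since $h_\beta(\omega) = \beta^{-1} e^{\beta\cos\omega}$ is smooth and $\mathbb{S}^1$ is compact, $\|h_\beta''\|_{L^\infty}$ is finite, and a one-line differentiation gives $\mathrm{Lip}(f_\theta) \le \|h_\beta''\|_{L^\infty}$, independently of $\theta$. Normalizing $f_\theta$ by its Lipschitz constant and applying the duality above yields the pointwise bound
\begin{equation*}
\bigl|\chi[\mu](\theta)-\chi[\nu](\theta)\bigr| = \left|\int f_\theta\,d(\mu-\nu)\right| \le \|h_\beta''\|_{L^\infty}\, W_1(\mu,\nu).
\end{equation*}

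Taking the supremum over $\theta \in \mathbb{S}^1$ on the left-hand side gives the desired $L^\infty$ bound. There is no real obstacle here: the only subtle point is that the estimate for the Lipschitz constant of $f_\theta$ be uniform in $\theta$ (which holds trivially by translation invariance, since shifting the argument of $h_\beta'$ does not change its Lipschitz constant), and that $\|h_\beta''\|_{L^\infty}$ be finite (which is immediate from smoothness of $h_\beta$ on the compact torus). The same argument goes through verbatim in higher dimensions once the analogous kernel and its derivative are identified, which is presumably used in the extensions to \eqref{eq:ODE_SA} and to $d\ge 3$ alluded to at the start of this subsection.
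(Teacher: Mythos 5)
Your proposal is correct and is essentially identical to the paper's own proof: both fix $\theta$, observe that $\omega\mapsto h_\beta'(\theta-\omega)/\|h_\beta''\|_{L^\infty}$ is $1$-Lipschitz uniformly in $\theta$, and conclude via the Kantorovich--Rubinstein duality before taking the supremum over $\theta$. No gaps.
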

\begin{proof}
By definition of $\chi$:
    \begin{equation*}
        \begin{split}
             \left\| \chi[\mu]-\chi[\nu] \right\|_{L\infty} &= \sup_{\theta\in\mathbb{S}^1} \left| \int  h_\beta'(\theta-\omega)[d\mu(\omega)-d\nu(\omega)] \right| \leq \\
             &\leq \| h_\beta''\|_{L^\infty} \sup_{\theta\in\mathbb{S}^1} \left| \int \frac{h_\beta'(\theta-\omega)}{\| h_\beta''\|_{L^\infty} }[d\mu(\omega)-d\nu(\omega)] \right|.
        \end{split}
    \end{equation*}
    Since $ \frac{h_\beta'(\theta-\omega)}{\| h_\beta''\|_{L^\infty} }$ is 1-Lipschitz we can conclude by definition of $W_1$.
\end{proof}

\begin{lemma}
\label{lem:chi_T_lip}
    Let $\mu \in\mathcal{P}(\mathbb{S}^1)$, then:
    \begin{equation*}
        \| \chi[\mu]\|_{Lip}\leq C, \quad \| T_t[\mu]\|_{Lip}\leq e^{Ct}.
    \end{equation*}
    where $C=\|h_\beta''\|_{L^\infty}$.
\end{lemma}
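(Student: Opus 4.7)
The plan is to establish the two bounds in sequence, the second following from the first by a standard Grönwall argument on the flow ODE.

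For the first bound I would exploit the fact that $\chi[\mu](\theta) = \int h_\beta'(\theta-\omega)\, d\mu(\omega)$ is a convolution of $\mu$ with the kernel $h_\beta'$. Since $h_\beta \in C^\infty(\mathbb{S}^1)$ and $\mathbb{S}^1$ is compact, the mean value theorem gives that $h_\beta'$ is Lipschitz with constant exactly $\|h_\beta''\|_{L^\infty} = C$. Applying the triangle inequality under the integral, for any $\theta_1, \theta_2 \in \mathbb{S}^1$ one gets
\begin{equation*}
|\chi[\mu](\theta_1) - \chi[\mu](\theta_2)| \leq \int \bigl|h_\beta'(\theta_1-\omega) - h_\beta'(\theta_2-\omega)\bigr|\, d\mu(\omega) \leq C\, |\theta_1-\theta_2|,
\end{equation*}
where the last inequality uses $\mu(\mathbb{S}^1) = 1$. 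This yields $\|\chi[\mu]\|_{Lip} \leq C$.

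For the second bound, recall that $T_t[\mu](x)$ satisfies
\begin{equation*}
\frac{d}{dt} T_t[\mu](x) = \chi[\mu_t](T_t[\mu](x)), \qquad T_0[\mu](x) = x,
\end{equation*}
where $\mu_t$ denotes the solution of the continuity equation \eqref{eq:continuity_angular} starting from $\mu$ and remains a probability measure for all $t$. Fix $x, y \in \mathbb{S}^1$ and set $\phi(t) := T_t[\mu](x) - T_t[\mu](y)$, interpreted via an angular lift (which is legitimate for estimating the geodesic distance between the two trajectories pointwise). Differentiating and applying the first bound to $\mu_t$ gives
\begin{equation*}
|\dot\phi(t)| = |\chi[\mu_t](T_t[\mu](x)) - \chi[\mu_t](T_t[\mu](y))| \leq C|\phi(t)|,
\end{equation*}
and Grönwall's inequality then yields $|\phi(t)| \leq e^{Ct} |x - y|$, which is precisely $\|T_t[\mu]\|_{Lip} \leq e^{Ct}$.

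There is essentially no serious obstacle: the first estimate is immediate from the fact that convolving with a probability measure inherits the Lipschitz constant of the kernel, and the second is a textbook Grönwall argument for a flow driven by a spatially Lipschitz time-dependent vector field. The only minor care points are (i) interpreting $|\cdot|$ as geodesic distance on $\mathbb{S}^1$ and lifting to the angular coordinate so the scalar Grönwall estimate applies, and (ii) observing that the Lipschitz bound on $\chi[\mu_t]$ is uniform in $t$, which is automatic since $C = \|h_\beta''\|_{L^\infty}$ does not depend on the underlying measure.
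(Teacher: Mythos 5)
Your proposal is correct and follows essentially the same route as the paper: the Lipschitz bound on $\chi[\mu]$ comes from the mean value theorem applied to $h_\beta'$ under the integral against the probability measure, and the flow bound follows by differentiating the separation of two trajectories and applying Gr\"onwall. The care points you flag (geodesic/angular lift, uniformity of $C$ in $t$) are handled implicitly in the paper's argument as well.
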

\begin{proof}
 For the first part:
  \begin{equation*}
  \begin{split}
       \left\| \chi[\mu](\theta_1) - \chi[\mu](\theta_2)\right\| & \leq \int | h_\beta'(\theta_1-\omega)-h_\beta'(\theta_2-\omega) | d\mu(\omega) \leq \|h_\beta''\|_{L^\infty}(\theta_1-\theta_2).
  \end{split}
  \end{equation*}
For the second part:
\begin{equation*}
  \begin{split}
       \frac{d}{dt} \left| T_t[\mu](\theta_1)-T_t[\mu](\theta_2)\right| & \leq  \left| \frac{d}{dt}  T_t[\mu](\theta_1)- \frac{d}{dt} T_t[\mu](\theta_2)\right| =\\
       &= \left| \chi[\mu_t](T_t(\theta_1))-\chi[\mu_t](T_t(\theta_2))\right| \leq C\left| T_t(\theta_1)-T_t(\theta_2)\right|.
  \end{split}
  \end{equation*}
  The conclusion follows as a consequence of Gronwall's lemma.
\end{proof}

Now we can return to the theorem:
\begin{proof}[Proof of Theorem \ref{thm:exponential_W1}]
    \begin{equation}
    \label{eq:initial_ineq}
    \begin{split}
        W_1(\mu_t, \nu_t) &= W_1(T_t[\mu]\#\mu_0, T_t[\nu]\#\nu_0) \leq \\
        &\leq W_1(T_t[\mu]\#\mu_0, T_t[\mu]\#\nu_0) + W_1(T_t[\mu]\#\nu_0, T_t[\nu]\#\nu_0)
    \end{split}
    \end{equation}
    For the first term:
    \begin{equation*}
    \begin{split}
         W_1(T_t[\mu]\#\mu_0, T_t[\mu]\#\nu_0) &= \sup_{\|\phi\|_{Lip}\leq 1} \int_{\mathbb{S}^1} \phi \ d[ T_t[\mu]\#\mu_0-T_t[\mu]\#\nu_0] =\\
         &= \sup_{\|\phi\|_{Lip}\leq 1} \int_{\mathbb{S}^1} (\phi\circ T_t[\mu])\ d[ \mu_0-\nu_0] = \\
         &= e^{Ct}\sup_{\|\phi\|_{Lip}\leq 1} \int_{\mathbb{S}^1} (e^{-Ct}\phi\circ T_t[\mu])\ d[ \mu_0-\nu_0] \leq \\
         &\leq e^{Ct}\sup_{\|\psi\|_{Lip}\leq 1} \int_{\mathbb{S}^1} \psi\ d[ \mu_0-\nu_0] = e^{Ct} W_1(\mu_0, \nu_0),
    \end{split}
    \end{equation*}
    where the last row follow from Lemma \eqref{lem:chi_T_lip}.

    For the second term:
    \begin{equation*}
    \begin{split}
         W_1(T_t[\mu]\#\nu_0, T_t[\nu]\#\nu_0) &= \sup_{\|\phi\|_{Lip}\leq 1} \int_{\mathbb{S}^1} \left( \phi\circ T_t[\mu] - \phi\circ T_t[\nu] \right) \ d\nu_0\leq \\
         &\leq \int_{\mathbb{S}^1} \left|T_t[\mu] - T_t[\nu] \right| \ d\nu_0 := \lambda(t).
    \end{split}
    \end{equation*}
    Now, 
    \begin{equation*}
    \begin{split}
         \frac{d\lambda(t)}{dt} &\leq \int \left| \frac{d}{dt}T_t[\mu]-\frac{d}{dt}T_t[\nu] \right| d\nu_0 =\\
         &= \int \left| \chi[\mu_t] \circ T_t[\mu]-\chi[\nu_t]\circ T_t[\nu] \right| d\nu_0 \leq \\
         &\leq \int \left| \chi[\mu_t] \circ T_t[\mu]-\chi[\mu_t]\circ T_t[\nu] \right| d\nu_0  + \int \left| \chi[\mu_t] \circ T_t[\nu]-\chi[\nu_t]\circ T_t[\nu] \right| d\nu_0.
    \end{split}
    \end{equation*}
    By Lemma \eqref{lem:chi_T_lip} for the first term, and by the properties of $T_t$ for the second term (in particular $\mu_t = T_t[\mu]\#\mu_0$), we get:
    \begin{equation*}
    \begin{split}
         \frac{d\lambda(t)}{dt} &\leq C\int \left| T_t[\mu]- T_t[\nu] \right| d\nu_0 + \int \left| \chi[\mu_t]- \chi[\nu_t] \right| d\nu_t \leq\\
         &\leq C\lambda(t)+\| \chi[\mu_t]-\chi[\nu_t]\|_{L^\infty}\leq C[\lambda(t) + W_1(\mu_t, \nu_t)].
    \end{split}
    \end{equation*}
    where the last step follows by Lemma \eqref{lem:chi_w1}. As a consequence of Gronwall's inequality it holds:
    \begin{equation*}
        \lambda(t)\leq C\int_0^t e^{C(t-\tau)}W_1(\mu_\tau, \nu_\tau) d\tau.
    \end{equation*}
   Going back to Equation \eqref{eq:initial_ineq} we get:
    \begin{equation*}
        W_1(\mu_t, \nu_t) \leq e^{Ct}W_1(\mu_0, \nu_0) +  C\int_0^t e^{C(t-\tau)}W_1(\mu_\tau, \nu_\tau) d\tau.
    \end{equation*}
    And to conclude we just need to apply another time Gronwall's lemma:
    \begin{equation*}
         W_1(\mu_t, \nu_t) \leq e^{2Ct}W_1(\mu_0, \nu_0).
    \end{equation*}
\end{proof}

An analogous reasoning in the higher dimensional case, for the dynamics~\eqref{eq:chi_SA} and~\eqref{eq:chi_USA} yields the following more general result.

\begin{theorem}[Dobrushin's estimate in $d$ dimensions] \label{thm:dobrushin} There exists a constant $C>0$ depending on $\beta$ such that the following holds. Suppose that $\mu_t, \nu_t$ are two solutions of~\eqref{eq:cont_pde} in the class $C(\RR_{\geq 0}, \mathcal{P}(\mathbb{S}^{d-1}))$, with initial conditions respectively $\mu_0, \nu_0 \in \mathcal{P}\left(\mathbb{S}^{d-1}\right)$. Then $\forall t \geq 0$ :
$$
W_1\left(\mu_t, \nu_t\right) \leq e^{2 C t} W_1\left(\mu_0, \nu_0\right).
$$
Moreover, $C$ can be computed explicitly. In the case of~\eqref{eq:chi_SA}, 
$$
C = 
\left\|\nabla_x K_\beta(x, y)\right\|_{L^{\infty}\left(\mathbb{S}^{d-1} \times \mathbb{S}^{d-1}\right)}, \qquad \text{where} \qquad 
K_\beta(x, y):=e^{\beta\langle x, y\rangle} P_x(y).
$$
\end{theorem}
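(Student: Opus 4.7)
The plan is to mirror the proof of Theorem~\ref{thm:exponential_W1} in the general setting, replacing the torus $\mathbb{S}^1$ with the sphere $\mathbb{S}^{d-1}$ and the scalar kernel $h_\beta'(\theta-\omega)$ with the vector-valued kernel $K_\beta(x,y) = e^{\beta\langle x,y\rangle} P_x(y)$. First, I would realize the two solutions as pushforwards of their initial data by the characteristic flows: let $T_t[\mu] : \mathbb{S}^{d-1} \to \mathbb{S}^{d-1}$ denote the flow of the non-autonomous vector field $\chi[\mu_t]$ on $\mathbb{S}^{d-1}$, so that $\mu_t = T_t[\mu]\#\mu_0$, and analogously $\nu_t = T_t[\nu]\#\nu_0$. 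The projection structure in $P_x$ ensures tangency and hence that the flow stays on $\mathbb{S}^{d-1}$, while the smoothness of $K_\beta$ guarantees that $T_t[\mu]$ is well-defined for all $t \geq 0$.

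Next, I would establish the two core Lipschitz estimates that generalize Lemmas~\ref{lem:chi_w1} and~\ref{lem:chi_T_lip}. The first is
$$\|\chi[\mu] - \chi[\nu]\|_{L^\infty(\mathbb{S}^{d-1})} \leq C\, W_1(\mu, \nu),$$
with $C = \|\nabla_x K_\beta\|_{L^\infty(\mathbb{S}^{d-1}\times\mathbb{S}^{d-1})}$ (using that, by the $\mathrm{O}(d)$-equivariant structure of the kernel, Lipschitz estimates in the two entries coincide up to a universal constant). For~\eqref{eq:chi_USA} this is immediate from Kantorovich--Rubinstein duality applied componentwise to the vector kernel. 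For~\eqref{eq:chi_SA} one additionally handles the quotient by $Z_{\beta,\mu}(x)$, which is uniformly bounded in $[e^{-\beta}, e^\beta]$ since $\beta\langle x,y\rangle \in [-\beta,\beta]$; a standard quotient estimate then reduces the SA case to the USA case, absorbing the resulting factors into the constant $C$. The second estimate gives $\mathrm{Lip}(\chi[\mu]) \leq C$ with the same $C$, and a Gronwall argument applied to $t \mapsto d_{\mathbb{S}^{d-1}}(T_t[\mu](x), T_t[\mu](y))$ then yields $\mathrm{Lip}(T_t[\mu]) \leq e^{Ct}$.

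Finally, I would combine these ingredients via the triangle inequality exactly as in the two-dimensional proof:
$$W_1(\mu_t,\nu_t) \leq W_1(T_t[\mu]\#\mu_0, T_t[\mu]\#\nu_0) + W_1(T_t[\mu]\#\nu_0, T_t[\nu]\#\nu_0).$$
The first term is bounded by $e^{Ct} W_1(\mu_0,\nu_0)$ by inserting a test function $\phi$ with $\mathrm{Lip}(\phi) \leq 1$ and composing with $T_t[\mu]$, which yields a test function with Lipschitz constant $\leq e^{Ct}$. For the second term, set $\lambda(t) := \int d_{\mathbb{S}^{d-1}}(T_t[\mu](x), T_t[\nu](x))\, d\nu_0(x)$, differentiate in $t$, and apply both Lipschitz estimates together with the identity $\nu_t = T_t[\nu]\#\nu_0$ to get $\lambda'(t) \leq C\lambda(t) + C\,W_1(\mu_t,\nu_t)$. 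Gronwall's lemma solves this, yielding an integral inequality for $W_1(\mu_t,\nu_t)$; a second application of Gronwall produces the claimed factor $e^{2Ct}$.

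The main obstacle I anticipate is purely geometric bookkeeping: ensuring that all Lipschitz constants are computed with respect to the intrinsic (geodesic) distance on $\mathbb{S}^{d-1}$ rather than the ambient Euclidean distance, and that the intrinsic gradient $\nabla_x$ in the definition of $C$ is the correct object (the two distances are equivalent up to a dimension-dependent constant on $\mathbb{S}^{d-1}$, which can be absorbed into $C$). The SA case further requires the uniform lower bound $Z_{\beta,\mu}(x) \geq e^{-\beta}$ to control the quotient, but no deeper estimate is needed because the exponent $\beta\langle x,y\rangle$ is bounded on the compact state space.
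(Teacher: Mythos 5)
Your proposal is correct and follows essentially the same route as the paper: the paper proves the $d=2$ (USA) case in detail via the flow pushforward, the two Lipschitz lemmas (Lemmas~\ref{lem:chi_w1} and~\ref{lem:chi_T_lip}), the triangle-inequality splitting, and two applications of Gronwall, and then asserts the $d$-dimensional statement ``by analogous reasoning'' --- which is exactly what you carry out, including the only genuinely new points (the vector-valued kernel $K_\beta$, the geodesic-vs-Euclidean bookkeeping, and the uniform bounds $Z_{\beta,\mu}(x)\in[e^{-\beta},e^{\beta}]$ needed for the SA normalization).
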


Finally, Theorem~\ref{cor:mean_field} is a direct consequence of Theorem~\ref{thm:dobrushin} and we omit the proof here.

\section{On the exponential in time dependence}
\label{app:exp_dob}

In this section, we demonstrate through a straightforward counterexample that the exponential time-dependence in Dobrushin's estimate for the Transformers model cannot generally be improved. The time control is not only non-uniform but must also exhibit exponential growth.

Let's consider for simplicity the case $\beta=1$ and $N=2$. The ordinary differential equations system obtained from Eq.~\eqref{eq:ODE_SA} is:
\begin{equation*}
 \begin{cases}
     \dot{x}_1&=P_{x_1}\left( \frac{1}{Z_{\beta,1}} (e^{\langle x_1, x_1\rangle}x_1+e^{\langle x_1, x_2\rangle}x_2)\right)\\
     \dot{x}_2&=P_{x_2}\left( \frac{1}{Z_{\beta,2}} (e^{\langle x_2, x_1\rangle}x_1+e^{\langle x_2, x_2\rangle}x_2)\right).
 \end{cases}   
\end{equation*}
It's easier to examine the angle version of the previous ODE system:
\begin{equation*}
\begin{cases}
    \dot{\theta}_1=&-\frac{1}{Z_{\beta,1}} e^{\cos(\theta_1-\theta_2)} \sin(\theta_1-\theta_2)\\
    \dot{\theta}_2=&-\frac{1}{Z_{\beta,2}} e^{\cos(\theta_2-\theta_1)} \sin(\theta_2-\theta_1).
\end{cases}
\end{equation*}
Using that $Z_{\beta,1}=Z_{\beta,2}=:Z$, then $\omega:=\theta_2-\theta_1$ satisfies:
\begin{equation*}
    \dot{\omega}=-2\frac{e^{\cos(\omega)}}{Z} \sin(\omega).
\end{equation*}
Since $\omega(t)=0$ and $\omega(t)=\pi$ are constant solutions, then for every $\omega_0\in [0,\pi]$ the corresponding trajectory $\omega(t)$ is confined in $[0,\pi]\ \forall t\geq 0$, and this implies $\sin(\omega(t))\geq 0$. Moreover,
$\frac{1}{e^2}\leq\frac{e^{\cos(\omega)}}{Z}$, hence:
\begin{equation*}
    \dot{\omega} \leq -\frac{2}{e^2}\sin(\omega),
\end{equation*}
which implies, by an ODE comparison theorem:
\begin{equation}
\label{eq:ce_bound}
 \omega(t)\leq 2 \cot^{-1}(e^{c+2t/e^2})
\end{equation}
where the right-hand side is the solution of $\dot{\gamma}=-\frac{2}{e^2}\sin(\gamma)$ and $c$ depends on the initial condition.

Now consider the initial conditions $\theta_1^\epsilon := \epsilon$ and $\theta_2^\epsilon := \pi-\epsilon$. The corresponding empirical measure is:
\begin{equation*}
    \mu_\epsilon(0) := \frac{1}{2} (\delta_\epsilon + \delta_{\pi-\epsilon})
\end{equation*}
We want a bound of this kind:
$$
\frac{W^1(\mu_\epsilon(t),\mu_0(t))}{W^1(\mu_\epsilon(0), \mu_0(0))}\leq f(t) \quad \forall \epsilon>0\quad \forall t\geq 0.
$$
for some function $f$.

Notice that $\mu_0(0)$ is a constant solution of our system.
The Wasserstein distance w.r.t. the geodesic distance on $\mathbb{S}^1$ is given by:
$$
W^1(\mu_\epsilon(t),\mu_0(t))=\frac{1}{2}(\theta_1(t)+\pi-\theta_2(t))=\frac{\pi}{2}-\frac{\omega(t)}{2},
$$
hence the function $f$ must satisfy:
$$
\frac{\frac{\pi}{2}-\frac{\omega(t)}{2}}{\epsilon}\leq f(t) \quad \forall \epsilon>0\quad \forall t\geq 0,
$$
with initial condition $\omega(0)=\pi-2\epsilon$. Using the bounds in Eq.~\eqref{eq:ce_bound}:
\begin{equation*}
    \frac{\frac{\pi}{2}-\frac{\omega(t)}{2}}{\epsilon}\geq \frac{1}{\epsilon}\left(\frac{\pi}{2}-\cot^{-1}(e^{c+2t/e^2})\right)
\end{equation*}
with $c$ s.t. $2\cot^{-1}(e^c)=\pi-2\epsilon$, i.e. $e^c=\tan(\epsilon)$. Hence it must hold:
\begin{equation*}
    f(t)\geq \frac{1}{\epsilon}\left(\frac{\pi}{2}-\cot^{-1}(\tan(\epsilon) e^{2t/e^2})\right)\quad \forall \epsilon>0.
\end{equation*}
Passing to the limit as $\epsilon\to 0$, using a first order Taylor expansion in $0$ for $\cot^{-1}(x)\approx\frac{\pi}{2}-x$, we get:
\begin{equation*}
    \begin{split}
        f(t)\geq&\lim_{\epsilon\to 0} \frac{1}{\epsilon}\left(\frac{\pi}{2}-\cot^{-1}(\tan(\epsilon) e^{2t/e^2})\right) = \\
        =& \lim_{\epsilon\to 0} \frac{\tan(\epsilon)}{\epsilon}e^{2t/e^2}=e^{2t/e^2}.
    \end{split}
\end{equation*}
This implies that we cannot improve the exponential time dependence for the stability.

\section{Appendix: metastability}\label{app:meta}
\subsection{Preliminary definitions} \label{sec:pre}
\subsubsection{Sobolev spaces of negative order}
Given a domain $\Omega\subset \mathbb{R}^d$, we define
for every $s\in \mathbb{N}$ and $p\in[1,+\infty[$ the corresponding Sobolev space of negative order as the dual:
$$W^{-s, p'}(\Omega):=W_0^{s,p}(\Omega)',$$
where $p'=p/(p-1)$ and $W_0^{s}(\Omega)$ denotes the closure of $C^{\infty}_c(\Omega)$ in the usual Sobolev space $W^{s,p}(\Omega)$. The corresponding dual norm for $W^{-s,p'}(\Omega)$ is given by:
\begin{equation}\label{eq:sobolev2}
    \|u\|_{W^{-s, p'}(\Omega)}:=\sup \left\{ \langle u, v\rangle: v\in W_0^{s,p}(\Omega), \|v\|_{W^{s,p}(\Omega)}=1\right\}.
\end{equation}

For any $s\in\mathbb{N}$ and $p\in[1,+\infty[$ the space $W^{-s,p'}(\Omega)$ is a Banach space. Moreover, if $1<p<+\infty$, $W^{-s,p'}(\Omega)$ is separable and reflexive.

These definitions can be naturally extended to compact Riemannian manifolds, while the non-compact case is more subtle (the reader can find a reference in \citep{hebey1996sobolev}).

Most of our discussion is set within the space \( W^{-s,2}(\Omega) \), also referred to as \( H^{-s}(\Omega) \). These spaces can be conveniently characterized using Fourier series:

\begin{equation*}
    H^s(\mathbb{T}^d)=\left\{u\in C^{\infty}(\mathbb{T}^n):\|u\|_{H^s(\mathbb{T}^d)}<\infty \right\},
\end{equation*}
where the norm on $H^s(\mathbb{T}^d)$ is defined as:
\begin{equation*}
    \|u\|^2_{H^s_{\mathbb{T}^d}}=\sum_{k\in \mathbb{Z}^d}|\hat{u}_k|^2(1+|k|^2)^{s}.
\end{equation*}
Additionally, note that the Dirac delta $\delta$ is an element of $H^s(\mathbb{R}^d)$ for every $s<-d/2$.

\subsubsection{Spherical harmonics and Gegenbauer polynomials}
\label{sec:gegenbauer}
The classical spherical harmonics can be generalized to higher-dimensional spheres $\mathbb{S}^{d-1}$ in the following way. Consider $\mathbb{P}_l$ the space of homogeneous polynomials $p(x):\mathbb{R}^d\to \mathbb{C}$ of degree $l$ in $d$ variables, and let $\mathbb{A}_l$ denote the subspace $\mathbb{P}_l$ of harmonic polynomials. Then the spherical harmonics of degree $l$ are an orthogonal basis for:
\begin{equation*}
    \mathbb{H}_l := \left\{Y_l:\mathbb{S}^{d-1}\to \mathbb{C}| \text{ exists } p \in \mathbb{A}_l \text{ such that }  Y_l(x)=p(x) \text{ for all } x\in \mathbb{S}^{d-1}\right\}.
\end{equation*}
While explicit formulas can be given by induction, we will just need the following:
\begin{theorem}
    If $\Delta$ is the Laplace-Beltrami operator on $\mathbb{S}^{d-1}$, then for all $Y_l\in \mathbb{H}_l$, one has:
    \begin{equation*}
        \Delta Y_l = -l(l+d-2) Y_l.
    \end{equation*}
\end{theorem}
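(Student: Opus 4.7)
The plan is to exploit the fact that every $Y_l \in \mathbb{H}_l$ arises, by definition, as the restriction to $\mathbb{S}^{d-1}$ of a homogeneous harmonic polynomial $p \in \mathbb{A}_l$ on $\RR^d$, and then to translate the Euclidean harmonicity condition $\Delta_{\RR^d} p = 0$ into an eigenvalue equation on the sphere via the standard polar decomposition of the flat Laplacian.

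First, I would introduce polar coordinates $(r,\omega) \in \RR_{>0} \times \mathbb{S}^{d-1}$ on $\RR^d \setminus \{0\}$, so that $x = r\omega$, and recall (or briefly derive) the well-known decomposition
\begin{equation*}
\Delta_{\RR^d} = \partial_r^2 + \frac{d-1}{r}\partial_r + \frac{1}{r^2}\Delta_{\mathbb{S}^{d-1}},
\end{equation*}
where $\Delta_{\mathbb{S}^{d-1}}$ denotes the intrinsic Laplace--Beltrami operator on the unit sphere induced by the round metric. This is the only geometric input and can be justified either by a direct coordinate computation or, more invariantly, from the fact that the flat metric decomposes as $dr^2 + r^2 g_{\mathbb{S}^{d-1}}$.

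Next, using homogeneity of $p$ of degree $l$, I would write $p(x) = r^l Y_l(\omega)$, where $Y_l$ is the restriction of $p$ to $\mathbb{S}^{d-1}$. Plugging into the decomposition above and using that $r$-derivatives commute trivially with the angular Laplacian (which acts only on $\omega$) yields
\begin{equation*}
\Delta_{\RR^d} p = \bigl[l(l-1) + l(d-1)\bigr] r^{l-2} Y_l(\omega) + r^{l-2}\Delta_{\mathbb{S}^{d-1}} Y_l(\omega).
\end{equation*}
Imposing $\Delta_{\RR^d} p = 0$ and dividing by $r^{l-2}$ for $r>0$ gives $\Delta_{\mathbb{S}^{d-1}} Y_l = -l(l+d-2) Y_l$, which is the claim.

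The computation itself is entirely routine; the only conceptually nontrivial step is the decomposition of $\Delta_{\RR^d}$ in polar coordinates, which is really where the value $l(l+d-2)$ gets assembled out of the radial term $l(l-1)$ and the first-order term $l(d-1)$. I would either cite this identity from a standard reference (e.g., \cite{seeley1966spherical,han2012some}) or include a one-line derivation from the volume element $r^{d-1} dr\, d\sigma_{\mathbb{S}^{d-1}}$. Everything else reduces to differentiating $r^l$ twice and invoking the definition of $\mathbb{A}_l$.
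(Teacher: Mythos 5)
Your proof is correct and is precisely the standard derivation: the paper states this eigenvalue identity without proof, treating it as a classical fact (with \cite{seeley1966spherical,han2012some} as background references), and the argument you give --- writing $p = r^l Y_l(\omega)$, applying the polar decomposition $\Delta_{\RR^d} = \partial_r^2 + \frac{d-1}{r}\partial_r + \frac{1}{r^2}\Delta_{\mathbb{S}^{d-1}}$, and collecting $l(l-1)+l(d-1) = l(l+d-2)$ --- is exactly the canonical one those references use. No gaps; the only definitional input you need, namely that every $Y_l \in \mathbb{H}_l$ is the restriction of some $p \in \mathbb{A}_l$, is how the paper defines $\mathbb{H}_l$.
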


Furthermore, spherical harmonics have a strict relationship with the convolution defined in Eq.~\eqref{eq:def_conv}, through the so called Gegenbauer polynomials  $P_k^\alpha$.
 
These polynomials, also known as ultraspherical, are orthogonal on $[-1,1]$ with respect to the probability measure with density $\frac{\Gamma(\alpha+1)}{\sqrt{\pi}\Gamma(\alpha+\frac{1}{2})}(1-t^2)^{\alpha-\frac{1}{2}}$. An useful definition is through Rodrigues formula:
\begin{equation*}
    P^\alpha_k(t)=\frac{(-1)^k\Gamma(\alpha+\frac{1}{2})}{2^k\Gamma(k+\alpha+\frac{1}{2})}(1-t^2)^{-\alpha+\frac{1}{2}}\frac{d^k}{dt^k}\left[(1-t^2)^{k+\alpha-\frac{1}{2}}\right].
\end{equation*}

They generalize Legendre polynomials and Chebyshev polynomials, and satisfy the so called Funk-Hecke formula:
\begin{theorem}
    Let $f:[-1,1]\to\mathbb{R}$ such that $t\to(1-t^2)^{\frac{d-3}{2}}f(t)$ is integrable. Then the Funk-Hecke formula holds:
    \begin{equation*}
        \int_{\mathbb{S}^{d-1}}f(\langle x, y\rangle)Y_{l}(y) d\sigma(y)=\lambda_l Y_{l}(x),
    \end{equation*}
    with the constant $\lambda_l$ given by:
    \begin{equation*}
        \lambda_l = \int_{-1}^{1} P_{l}^{\frac{d-2}{2}}(t)f(t)\frac{2\pi^\frac{d-1}{2}}{\Gamma(\frac{d-1}{2})}(1-t^2)^{\frac{d-3}{2}}dt,
    \end{equation*}
    where $\Gamma(\cdot)$ is the gamma function and $Y_{l}$ is any spherical harmonics on $\mathbb{S}^{d-1}$ of degree $l$.
\end{theorem}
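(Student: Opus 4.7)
The plan is to establish the Funk--Hecke formula by combining a representation-theoretic symmetry argument with a direct evaluation on the zonal harmonic. First I would introduce the integral operator
\begin{equation*}
(T_f g)(x) := \int_{\mathbb{S}^{d-1}} f(\langle x, y\rangle)\, g(y)\, d\sigma(y).
\end{equation*}
The hypothesis that $t \mapsto (1-t^2)^{(d-3)/2} f(t)$ is integrable on $[-1,1]$ is precisely what is needed (via the coarea formula below, which pushes the measure $d\sigma$ forward to $(1-t^2)^{(d-3)/2}\,dt$) to make $(T_f g)(x)$ well defined for a continuous $g$, so $T_f$ sends continuous functions into continuous functions and in particular is defined on every spherical harmonic.

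The first key observation is that $T_f$ commutes with the natural action of $SO(d)$ on $L^2(\mathbb{S}^{d-1})$: for any rotation $R$, the identity $\langle R x, R y\rangle = \langle x, y\rangle$ together with the rotation-invariance of $d\sigma$ gives $T_f(g\circ R) = (T_f g)\circ R$. The spaces $\mathbb{H}_l$ are eigenspaces of the Laplace--Beltrami operator (as stated in the preceding theorem of the paper) with pairwise distinct eigenvalues, so each is $SO(d)$-invariant, and it is classical that the $SO(d)$-representation on $\mathbb{H}_l$ is irreducible. Hence Schur's lemma forces $T_f$ to act on $\mathbb{H}_l$ by a scalar $\lambda_l \in \RR$: this establishes the structural content of the formula, $T_f Y_l = \lambda_l Y_l$.

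To pin down $\lambda_l$, I would specialize to the \emph{zonal} harmonic at the pole $x_0 := e_d$, namely the unique (up to scalar) element of $\mathbb{H}_l$ that is invariant under the stabilizer $SO(d-1)$ of $x_0$. Standard theory identifies this zonal as $Z_l(y) = P_l^{(d-2)/2}(\langle y, x_0\rangle) = P_l^{(d-2)/2}(y_d)$ up to a nonzero multiplicative constant; this identification follows from the Rodrigues formula recalled earlier in the paper, together with the fact that the polynomial $|y|^l P_l^{(d-2)/2}(y_d/|y|)$ is a harmonic homogeneous polynomial of degree $l$. Evaluating $T_f Z_l = \lambda_l Z_l$ at $x = x_0$, the integrand $f(\langle x_0, y\rangle) Z_l(y) = f(y_d) P_l^{(d-2)/2}(y_d)$ depends only on $y_d$, and the coarea identity
\begin{equation*}
\int_{\mathbb{S}^{d-1}} h(y_d)\, d\sigma(y) = \frac{2\pi^{(d-1)/2}}{\Gamma((d-1)/2)} \int_{-1}^{1} h(t)\,(1-t^2)^{(d-3)/2}\, dt,
\end{equation*}
obtained by integrating out the $SO(d-1)$ orbit of latitude $t$ against the surface measure $\omega_{d-2} = 2\pi^{(d-1)/2}/\Gamma((d-1)/2)$ of $\mathbb{S}^{d-2}$, converts the left-hand side into an integral in $t$. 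Dividing by $Z_l(x_0) = P_l^{(d-2)/2}(1) \neq 0$ yields the claimed expression for $\lambda_l$.

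The main obstacle is the third step: correctly identifying the zonal harmonic with the Gegenbauer polynomial $P_l^{(d-2)/2}$ and tracking its normalization at $t = 1$. This requires the branching rule for $\mathbb{H}_l|_{SO(d-1)}$ (to justify one-dimensionality of the $SO(d-1)$-invariants), and a verification via Rodrigues' formula or the Gegenbauer generating function that $|y|^l P_l^{(d-2)/2}(y_d/|y|)$ is genuinely harmonic; these are all classical but not entirely trivial to piece together. Once they are in place, the symmetry argument reduces the entire theorem to a single change of variables on $[-1,1]$, and the integrability hypothesis on $f$ is exactly the one needed for the resulting one-dimensional integral to converge.
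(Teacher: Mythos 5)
The paper does not actually prove this statement: Funk--Hecke is quoted as a classical fact, with the reader referred to \cite{seeley1966spherical} and \cite{han2012some}, so there is no in-paper argument to compare against. Your proposal is the standard representation-theoretic proof of the theorem, and in outline it is correct: $T_f$ is an integral operator with a kernel depending only on $\langle x,y\rangle$, hence it commutes with the orthogonal group; it therefore preserves each space $\mathbb{H}_l$ (these being the pairwise inequivalent isotypic components of $L^2(\mathbb{S}^{d-1})$, and $T_f$ being bounded on $L^2$ by Minkowski's integral inequality under the stated integrability hypothesis); Schur's lemma then gives $T_f|_{\mathbb{H}_l}=\lambda_l\,\mathrm{Id}$, and evaluating on the zonal harmonic $Z_l(y)=P_l^{(d-2)/2}(\langle y,x_0\rangle)$ at $x=x_0$, combined with the slicing identity $d\sigma=(1-t^2)^{(d-3)/2}\,dt\,d\sigma_{\mathbb{S}^{d-2}}$ and $\lvert\mathbb{S}^{d-2}\rvert=2\pi^{(d-1)/2}/\Gamma\bigl(\tfrac{d-1}{2}\bigr)$, produces exactly the stated constant (using that the Rodrigues normalization in the paper gives $P_l^{(d-2)/2}(1)=1$). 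This buys a self-contained derivation where the paper simply cites.

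Two points deserve tightening. First, the Schur step as you state it is only airtight for $d\ge 3$: for $d=2$ the real $SO(2)$-representation on $\mathrm{span}\{\cos l\theta,\sin l\theta\}$ is irreducible but its commutant is isomorphic to $\mathbb{C}$, so ``irreducible $+$ Schur'' only forces $T_f$ to act as a rotation--scaling, not a real scalar. You should either invoke equivariance under the full group $O(d)$ (which $T_f$ has, since the kernel depends only on the inner product, and under which $\mathbb{H}_l$ has real commutant), or dispose of $d=2$ by a direct Fourier computation. Second, the claim that $T_f$ maps $\mathbb{H}_l$ into itself is not a consequence of equivariance alone on the level of individual functions; it needs the observation that the $\mathbb{H}_l$ are the isotypic components of $L^2$ for pairwise non-isomorphic irreducibles and that $T_f$ is a bounded operator there, so that any intertwiner $\mathbb{H}_l\to\mathbb{H}_m$ vanishes for $m\neq l$. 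With those two clarifications the argument is complete.
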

Other properties are discussed in \citep{han2012some, seeley1966spherical}.
Although the nomenclature may not be standardized in the literature,  we refer to $\lambda_l$ as Gegenbauer coefficent of $f$ of degree $l$.

\subsection{The emergence of symmetry}\label{sec:emergence_sym}
For clarity, the following discussion is restricted to $\mathbb{S}^1$ (refer to Section \ref{sec:high-dim} for adaptations to higher dimensions). Specifically, notice that identifying each pair of points $x,y\in \mathbb{S}^{1}$ with the corresponding angle $\theta, \gamma \in[0,2\pi]$ and using the relation $W(\langle x, y\rangle)=W(\cos(\theta-\gamma))$, the definition of convolution provided in Eq.~\eqref{eq:def_conv} aligns with the classical notion of convolution on $\mathbb{S}^1$. Furthermore, thanks to this change of variable, the Gengenbauer coefficients $\hat{W}_k$ correspond to the Fourier cosine expansion of $W\circ\cos$.
Thus, for simplicity of notation we will continue using the notation $W$ to represent the composition $W\circ \cos$. 

\begin{remark}
   In the case of Transformers, both Assumption \ref{ass:1} and Assumption \ref{ass:2} are satisfied since $W(\theta):=\frac{1}{\beta}e^{\beta\cos(\theta)}=\sum_{k=0}^\infty \frac{2}{\beta}I_{k}(\beta)\cos(k \theta)$, where $I_k(\beta)$ are the modified Bessel functions of the first kind.
\end{remark}
 
Consider $\mu_\infty$ as the uniform measure on $\mathbb{S}^1$, and let the intial perturbation be $\mu_0:=\mu_\infty+\rho_0$ in $\mathcal{P}(\mathbb{S}^1)$. If $\mu_t$ evolves according to the following PDE (consequence of Eq.~\eqref{eq:continuity_eq}):
\begin{equation}
\label{eq:mu}
    \begin{cases}
        \partial_t \mu_t = - \partial_\theta (\mu_t \nabla W*\mu_t),\\
        \mu(0) = \mu_0,
    \end{cases}
\end{equation}
then the perturbation $\rho_t$ satisfies:
\begin{equation}
\label{eq:rho}
    \begin{cases}
        \partial_t \rho_t = \mathcal{L}_{\mu_\infty}(\rho_t)- \partial_\theta (\rho_t \nabla W *\rho_t),\\
        \rho(0) = \rho_0,
    \end{cases}
\end{equation}
where $\mathcal{L}_{\mu_\infty}(\rho):=-\mu_\infty \Delta W *\rho$ denotes the PDE's linearized operator around $\mu_\infty$.
\begin{remark}
    Since $\nabla W*\mu_\infty=0$, the uniform measure is an equilibrium of the system.
\end{remark}

The objective of this section is to demonstrate that the dynamics of the system near $\mu_\infty$ are governed by the maximal eigenvalue of $\mathcal{L}_{\mu_\infty}$.

The core estimate is provided by the following lemma:
\begin{lemma} Let $n\geq 0$.
For any $g\in H^{n}(\mathbb{S}^1)$ and $f\in W^{n,\infty}(\mathbb{S}^1)$, the following inequality holds:
\label{lem:lemma_tecnico}
\begin{equation*}
    \langle f \partial_\theta g, g\rangle_{H^n}\leq C_n\|f\|_{W^{n,\infty}}\|g\|_{H^{n}}^2.
\end{equation*}
\end{lemma}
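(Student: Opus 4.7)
The plan is to bound $\langle f\,\partial_\theta g, g\rangle_{H^n}$ by expanding the Sobolev inner product in terms of derivatives, applying the Leibniz rule, and isolating the top-order term, where a naive Hölder estimate would entail a loss of one derivative. This loss is avoided by an integration by parts that exploits the symmetry of the $L^2$ pairing, replacing $f\,\partial_\theta^{n+1}g$ by a harmless factor involving only $\partial_\theta f$ and $\partial_\theta^n g$.

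The first step is to use the equivalent characterization
$$\langle u,v\rangle_{H^n}=\sum_{k=0}^{n}\langle \partial_\theta^k u,\partial_\theta^k v\rangle_{L^2},$$
and to apply the Leibniz rule to each term, giving
$$\partial_\theta^k\bigl(f\,\partial_\theta g\bigr)=\sum_{j=0}^{k}\binom{k}{j}\bigl(\partial_\theta^{j}f\bigr)\bigl(\partial_\theta^{k-j+1}g\bigr).$$
I would then split the resulting double sum into a principal contribution ($j=0$) and lower-order contributions ($j\ge 1$), and estimate each separately.

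For $j\ge 1$, both $j\le n$ and $k-j+1\le n$, so Hölder's inequality and the inclusions $W^{n,\infty}\hookrightarrow W^{j,\infty}$, $H^{n}\hookrightarrow H^{k-j+1}\cap H^{k}$ directly yield
$$\bigl|\langle(\partial_\theta^{j}f)(\partial_\theta^{k-j+1}g),\partial_\theta^{k}g\rangle_{L^2}\bigr|\le \|f\|_{W^{n,\infty}}\|g\|_{H^n}^{2}.$$
The delicate term is $j=0$, since when $k=n$ the factor $\partial_\theta^{n+1}g$ is not controlled by $\|g\|_{H^n}$. Here the key observation is that, integrating by parts on $\mathbb{S}^1$,
$$\langle f\,\partial_\theta^{k+1}g,\,\partial_\theta^{k}g\rangle_{L^2}=-\tfrac{1}{2}\int_{\mathbb{S}^1}(\partial_\theta f)\bigl(\partial_\theta^{k}g\bigr)^{2}\,d\theta,$$
which is bounded by $\tfrac{1}{2}\|\partial_\theta f\|_{L^\infty}\|g\|_{H^n}^{2}\le \tfrac{1}{2}\|f\|_{W^{n,\infty}}\|g\|_{H^n}^{2}$. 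Summing the contributions over $k=0,\dots,n$ and $j=0,\dots,k$ with the binomial weights produces a combinatorial constant $C_n$ depending only on $n$.

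The only genuine obstacle is the top-order term, which would formally lose one derivative and is not amenable to a straightforward product estimate; the resolution is exactly the integration-by-parts identity above, which is the same duality-type device invoked elsewhere in the paper (notably in Lemma~\ref{lem:bound_rho}) to avoid derivative loss in linearized estimates. All remaining computations are routine bookkeeping.
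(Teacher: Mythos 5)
Your proposal is correct and coincides with the paper's own argument: the paper likewise expands the $H^n$ pairing into $L^2$ pairings of $k$-th derivatives, applies Leibniz, bounds the lower-order terms by H\"older, and handles the top-order term $\langle f\,\partial_\theta^{k+1}g,\partial_\theta^{k}g\rangle_{L^2}$ by writing it as $\tfrac12\langle f,\partial_\theta(\partial_\theta^{k}g)^2\rangle$ and integrating by parts onto $\partial_\theta f$. The only difference is the cosmetic reindexing $j\leftrightarrow k-j$ in the Leibniz sum.
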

\begin{proof} For every $k\leq n$:
    \begin{equation*}
        \begin{split}
            \langle \partial^k (f\partial g), \partial^k g\rangle_{L^2}&= \sum_{j=0}^k\binom{k}{j}\langle \partial^{k-j}f\  \partial^{j+1}g , \partial^{k} g\rangle =\\
            &=\sum_{j=0}^{k-1}\binom{k}{j}\langle  \partial^{k-j}f\  \partial^{j+1}g, \partial^{k} g\rangle + \langle f \partial^{k+1}g , \partial^{k} g\rangle \leq\\
            &\leq \sum_{j=0}^{k-1} \binom{k}{j}\|\partial^{k-j}f\|_{L^\infty} \|\partial^{j+1}g\|_{L^2}\| \partial^{k} g\|_{L^2}+ \langle f ,\frac{1}{2}\partial (\partial^k g)^2\rangle\leq\\
            &\leq C_k  \|f\|_{W^{k,\infty}} \|g\|_{H^k}^2.
        \end{split} 
    \end{equation*}
\end{proof}
For sufficiently small time intervals, the evolution of $\mu_t$ can be approximated by the solution of the linearized operator. The deviation from this approximation will be estimated in the next paragraph.

\subsection{The linear phase}
\label{sec:linear_phase}
Dobrushin's estimate applies globally on $\mathbb{S}^1$ and may  be suboptimal in the vicinity of the equilibrium $\mu_\infty$. While in general is not possible to have a uniform or polynomial dependence with respect to time (see Appendix \ref{app:exp_dob}),
this estimate can be locally refined by considering the singular values of the linear operator $\mathcal{L}_{\mu_\infty}: H^{k}(\mathbb{S}^1)\to H^{k} (\mathbb{S}^1)$. Specifically, under Assumption \eqref{ass:1} and Assumption \eqref{ass:2}, for every $f \in H^{k}(\mathbb{S}^1)$ with $k\in\mathbb{Z}$ (refer to Lemma \ref{lem:linear}):
\begin{equation*}
    \|\mathcal{L}_{\mu_\infty}f\|_{H^k}\leq \frac{1}{2\pi}\gamma_{max}\|f\|_{H^k}.
\end{equation*}
\begin{remark}
    Hereafter, $\frac{1}{2\pi}\gamma_{k}$ will be denoted simply as $\gamma_k$.
\end{remark}

Utilizing the Lagrangian flow, it's possible to bound the growth of the perturbation $\rho_t$ in a small neighborhood of $\mu_\infty$ using $\gamma_{max}$. In particular, the following lemma can be proved:

\begin{lemma}
\label{lem:bound_rho}
Let $k\geq0$. There exists $\epsilon > 0$ such that, if $\|\rho_0\|_{H^{-k}}<\epsilon$, then
\begin{equation*}
    T_1(\epsilon, \rho_0):= \frac{1}{\gamma_{max}}\ln\left(\frac{\epsilon}{\|\rho_0\|_{H^{-k}}}\right)
\end{equation*} 
is well-defined, and for every $t\in[0,T_1(\epsilon,\rho_0)]$, the following inequality holds:
$$
\|\rho_t\|_{H^{-k}}\leq 3\|\rho_0\|_{H^{-k}} e^{\gamma_{max}t}.$$
\end{lemma}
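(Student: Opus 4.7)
The plan is to propagate the $H^{-k}$ norm of $\rho_t$ via a duality argument, which transfers the transport term onto a smooth test function and thereby avoids a derivative loss. Writing the perturbation equation as $\partial_t \rho_t = \mathcal{L}_{\mu_\infty} \rho_t - \partial_\theta(\rho_t \, u_t)$ with $u_t := \nabla W * \rho_t$, and noting that $\mathcal{L}_{\mu_\infty}$ is self-adjoint (it is a Fourier multiplier), for any test function $\phi^* \in H^k$ I would introduce the backward adjoint problem
\[
\partial_t \phi_t + \mathcal{L}_{\mu_\infty} \phi_t + u_t \partial_\theta \phi_t = 0, \qquad \phi_T = \phi^*.
\]
A direct integration-by-parts computation gives $\frac{d}{dt}\langle \rho_t, \phi_t\rangle = 0$, hence $\langle \rho_T, \phi^*\rangle = \langle \rho_0, \phi_0\rangle$, and the duality characterization of the $H^{-k}$ norm yields $\|\rho_T\|_{H^{-k}} \leq \|\rho_0\|_{H^{-k}} \sup_{\|\phi^*\|_{H^k} \leq 1} \|\phi_0\|_{H^k}$.

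The second step is an $H^k$ energy estimate on the reverse-time flow $\tilde\phi_s := \phi_{T-s}$, which solves $\partial_s \tilde\phi_s = \mathcal{L}_{\mu_\infty} \tilde\phi_s + u_{T-s} \partial_\theta \tilde\phi_s$ with $\tilde\phi_0 = \phi^*$. Two ingredients control the right-hand side: the spectral bound $\langle \tilde\phi_s, \mathcal{L}_{\mu_\infty} \tilde\phi_s\rangle_{H^k} \leq \gamma_{\max} \|\tilde\phi_s\|_{H^k}^2$ inherited from the Fourier diagonalization of $\mathcal{L}_{\mu_\infty}$, and Lemma~\ref{lem:lemma_tecnico} applied with $f = u_{T-s}$, $g = \tilde\phi_s$, which yields $\langle u_{T-s} \partial_\theta \tilde\phi_s, \tilde\phi_s\rangle_{H^k} \leq C_k \|u_{T-s}\|_{W^{k,\infty}} \|\tilde\phi_s\|_{H^k}^2$. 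The crucial observation is that, since $W$ is smooth, convolution transfers all regularity onto $u_r$: pairing the kernel $\partial^j \nabla W(\theta-\cdot)$ against $\rho_r$ in the $H^k$--$H^{-k}$ duality gives $\|u_r\|_{W^{k,\infty}} \leq C_W \|\rho_r\|_{H^{-k}}$ uniformly in $r$. Gronwall's lemma then produces
\[
\|\phi_0\|_{H^k} \leq \|\phi^*\|_{H^k}\, \exp\!\left(\gamma_{\max} T + C_k C_W \int_0^T \|\rho_r\|_{H^{-k}}\,dr\right),
\]
and combining with the preceding display yields a self-consistent inequality for $\|\rho_T\|_{H^{-k}}$ in terms of its own past values.

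The final step is a continuity bootstrap on $[0, T_1]$. Setting $\Phi(t) := \|\rho_t\|_{H^{-k}}$ and assuming provisionally $\Phi(r) \leq 3 \Phi(0) e^{\gamma_{\max} r}$ on a maximal subinterval $[0, T^*] \subseteq [0, T_1]$, the integral in the exponent is bounded by $3\Phi(0) e^{\gamma_{\max} T^*}/\gamma_{\max} \leq 3\epsilon/\gamma_{\max}$, since by the definition of $T_1$ one has $\Phi(0) e^{\gamma_{\max} T_1} = \epsilon$. Choosing $\epsilon > 0$ small enough that $\exp(3 C_k C_W \epsilon / \gamma_{\max}) < 3$ makes the bootstrap strict, and the desired bound $\Phi(t) \leq 3\Phi(0)e^{\gamma_{\max}t}$ propagates up to $t = T_1$.

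The main technical obstacle is precisely what motivates the duality reformulation: a direct energy estimate on $\|\rho_t\|_{H^{-k}}^2$ would require controlling $\langle \partial_\theta(\rho_t u_t), \rho_t\rangle_{H^{-k}}$, whose natural estimate costs a derivative and produces an $H^{-k-1}$ norm on the right-hand side, preventing the inequality from closing in a single space. Routing everything through the adjoint equation pushes all derivatives onto the smooth $H^k$ side, where Lemma~\ref{lem:lemma_tecnico} handles the transport term without loss. A minor subtlety is that the adjoint equation needs $u_t$, and therefore $\rho_t$, to be known a priori on $[0, T_1]$, which is ensured by global existence of weak solutions to~\eqref{eq:continuity_eq} (cf.\ the remark after Theorem~\ref{cor:mean_field}).
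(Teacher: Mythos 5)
Your proposal is correct and follows essentially the same route as the paper: the backward adjoint (dual) problem to avoid derivative loss, the conservation of the pairing $\langle\rho_t,\psi_t\rangle$, the $H^k$ energy estimate on the reversed flow via the spectral bound for $\mathcal{L}_{\mu_\infty}$ and Lemma~\ref{lem:lemma_tecnico}, the smoothing bound $\|\nabla W*\rho\|_{W^{k,\infty}}\lesssim\|\rho\|_{H^{-k}}$, Gronwall, and the final continuation/bootstrap with $\epsilon$ chosen so the nonlinear correction in the exponent stays below $\ln 3$. No gaps.
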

\begin{remark}
The constant $\epsilon$ is independent of $\rho_0$. Furthermore,
    this lemma implies that, within the time interval $[0, T_1]$ the evolution of the measure $\mu_t$ remains within $B_{H^{-k}}(\mu_\infty,3\epsilon)$.
\end{remark}
\begin{proof}
Given a test function $\bar{\psi} \in H^k(\mathbb{S}^1)$, let $\psi_s$ be the solution of:
    \begin{equation}
    \label{eq:pde_psi}
    \begin{cases}
         \partial_s\psi_s =- \mathcal{L}_{\mu_\infty}(\psi_s)- (\nabla W*\rho_s)\partial_\theta\psi_s  & s \in [0, t],\\
         \psi_t = \bar{\psi}.
    \end{cases}        
    \end{equation}

Hence, by construction:
\begin{align*}
    \partial_s \langle \rho_s , \psi_s \rangle &= \langle \partial_s \rho, \psi \rangle +  \langle \rho, \partial_s \psi \rangle =\\
    &= \langle \mathcal{L}_{\mu_\infty}(\rho_s)- \partial_\theta (\rho_s \nabla W *\rho_s), \psi_s\rangle - \langle \rho_s, \mathcal{L}_{\mu_\infty}(\psi_s) + (\nabla W*\rho_s)\partial_\theta\psi_s \rangle =\\
    &= 0.
\end{align*}

This implies:
\begin{equation}
\label{eq:bound_rho_1}
\begin{split}
    \|\rho_t\|_{H^{-k}} &= \sup_{\|\bar\psi \|_{H^k} = 1}  \langle \rho_t , \bar\psi \rangle = \sup_{\|\psi_t \|_{H^k} = 1}  \langle \rho_t , \psi_t \rangle=\\
    &=\sup_{\|\psi_t \|_{H^k} = 1}  \langle \rho_0 , \psi_0 \rangle \leq \\
    &\leq \|\rho_0\|_{H^{-k}}\sup_{\|\psi_t \|_{H^k} = 1} \|\psi_0\|_{H^k}.
\end{split}
\end{equation}

Next, we need an estimate for $\|\psi_0\|_{H^k}$. The energy of the time-reversed Eq.~\eqref{eq:pde_psi} satisfies:
\begin{equation*}
    \begin{split}
        \frac{1}{2}\frac{d}{ds}\|\psi_{t-s}\|_{H^k}^2&=\langle \mathcal{L}_{\mu_\infty}\psi_{t-s}, \psi_{t-s}\rangle_{H^k}+\langle \nabla W *\rho_{t-s}\partial_\theta \psi_{t-s}, \psi_{t-s}\rangle_{H^k}\leq\\
        &\leq \gamma_{max}\|\psi_{t-s}\|_{H^k}^2+C\|\psi_{t-s}\|_{H^k}^2\|\nabla W*\rho_{t-s}\|_{W^{k,\infty}}\leq\\
        &\leq \gamma_{max}\|\psi_{t-s}\|_{H^k}^2+C\|\psi_{t-s}\|_{H^k}^2\|\nabla W\|_{H^{2k}}\|\rho_{t-s}\|_{H^{-k}}\leq\\  
        &\leq (\gamma_{max}+C \|\rho_{t-s}\|_{H^{-k}}) \|\psi_{t-s}\|_{H^k}^2,
    \end{split}
\end{equation*}
where the second row is an application of Lemma \ref{lem:lemma_tecnico}. Hence, by Gronwall's lemma:
\begin{equation*}
    \|\psi_{0}\|_{H^k}\leq \|\psi_t\|_{H^k}e^{\gamma_{max}t+C\int_0^t\|\rho_{s}\|_{H^{-k}}ds}.
\end{equation*}

Substituting this bound into Eq.~\eqref{eq:bound_rho_1}, we obtain for all $t\geq 0$:
\begin{equation}
\label{eq:bound_rho_2}
        \|\rho_t\|_{H^{-k}} \leq\|\rho_0\|_{H^{-k}}e^{\gamma_{max}t+C\int_0^t\|\rho_s\|_{H^{-k}}ds}.
\end{equation}

The conclusion follows from a standard continuation argument: assume $\|\rho_s\|_{H^{-k}}\leq 3\|\rho_0\|_{H^{-k}}e^{\gamma_{max}s}$. Then, by Eq.~\eqref{eq:bound_rho_2}, in the interval $\left[0, T_1(\epsilon,\rho_0)\right]$, it holds:
\begin{equation*}
    \begin{split}
         \|\rho_t\|_{H^{-k}} &\leq\|\rho_0\|_{H^{-k}}e^{\gamma_{max}t+C\int_0^t\|\rho_s\|_{H^{-k}}ds}\leq\\
    &\leq\|\rho_0\|_{H^{-k}}e^{\gamma_{max}t+ 3C\|\rho_0\|_{H^{-k}}\frac{1}{\gamma_{max}}e^{\gamma_{max}t}}\leq\\
    &\leq\|\rho_0\|_{H^{-k}}e^{\gamma_{max}t+ 3C\|\rho_0\|_{H^{-k}}\frac{1}{\gamma_{max}}e^{\gamma_{max}T_1(\epsilon, \rho_0)}}\leq\\
    &\leq\|\rho(0)\|_{H^{-k}}e^{\gamma_{max}t+ \frac{3C\epsilon}{\gamma_{max}}}.
    \end{split}
\end{equation*}
If $\epsilon$ is such that $\frac{3C\epsilon}{\gamma_{max}}\leq 1$, then  $\|\rho_t\|_{H^{-k}}< 3\|\rho_0\|_{H^{-k}}e^{\gamma_{max}t}$ in the interval $[0, T_1(\epsilon, \rho_0)]$. This concludes the continuation argument.
\end{proof}

Now we want to control the distance between $\rho$ and the solution $\rho^L$ of the linearized PDE:
\begin{equation}
    \label{eq:pde_lin}
    \begin{cases}
        \partial_t \rho^L=\mathcal{L}_{\mu_\infty}(\rho^L),\\
        \rho^L_0 = \rho_0.
    \end{cases}
\end{equation}

By considering a loss of regularity we derive the following estimate:
\begin{lemma}
\label{lem:linear_estimate}
Let $k\geq 0$. For all $t \in [0, T_1(\epsilon,\rho_0)]$, the following holds:
\begin{equation*}
   \|\rho_t-\rho^L_t\|_{H^{-k-1}}\leq C \|\rho_0\|_{H^{-k}}^2 e^{2\gamma_{max}t}.
\end{equation*}
\end{lemma}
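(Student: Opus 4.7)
The plan is to derive an equation for the deviation $\eta_t := \rho_t - \rho^L_t$ and treat it as a linear equation driven by the quadratic source $-\partial_\theta(\rho_t \nabla W * \rho_t)$. Subtracting \eqref{eq:pde_lin} from \eqref{eq:rho} gives
$$
\partial_t \eta_t = \mathcal{L}_{\mu_\infty}(\eta_t) - \partial_\theta\bigl(\rho_t\,\nabla W * \rho_t\bigr), \qquad \eta_0 = 0,
$$
so morally, by Duhamel's formula,
$\eta_t = -\int_0^t e^{(t-s)\mathcal{L}_{\mu_\infty}}\partial_\theta(\rho_s \nabla W * \rho_s)\,ds$, and the quadratic bound on $\rho_s$ from Lemma~\ref{lem:bound_rho} is exactly the factor to integrate against the semigroup. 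The $\partial_\theta$ inside the source is what will eventually force the drop from $H^{-k}$ to $H^{-k-1}$.

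To implement this rigorously while avoiding further derivative loss, I would mimic the duality trick of Lemma~\ref{lem:bound_rho}. Given a test function $\bar\psi$ with $\|\bar\psi\|_{H^{k+1}}=1$, let $\psi_s$ solve the backward linear problem
$$
\begin{cases}
\partial_s \psi_s + \mathcal{L}_{\mu_\infty}(\psi_s) = 0, & s\in[0,t],\\
\psi_t = \bar\psi,
\end{cases}
$$
which makes sense since $\mathcal{L}_{\mu_\infty}$ is self-adjoint (as $W$ is a function of $\cos\theta$, so $\Delta W$ is even). A direct computation combining the two PDEs, using $\eta_0=0$ and integration by parts in $\theta$, yields
$$
\langle \eta_t,\bar\psi\rangle \;=\; \int_0^t \bigl\langle \rho_s\,\nabla W * \rho_s,\ \partial_\theta \psi_s\bigr\rangle\,ds.
$$
The key feature of this identity is that the troublesome derivative has migrated onto the smooth dual variable $\psi_s$, so it is paid only once, via the single index drop.

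To estimate each factor, I would use the spectral bound $\|\mathcal{L}_{\mu_\infty}\|_{H^{s}\to H^{s}}\le \gamma_{max}$ (Lemma~\ref{lem:linear}), which integrates to $\|\psi_s\|_{H^{k+1}}\le e^{(t-s)\gamma_{max}}$, hence $\|\partial_\theta\psi_s\|_{H^k}\le e^{(t-s)\gamma_{max}}$. For the source, smoothness of $W$ (Assumption~\ref{ass:1}) together with the rapid decay of its Fourier coefficients gives $\|\nabla W * \rho_s\|_{W^{k,\infty}}\le C\|\rho_s\|_{H^{-k}}$ via Sobolev embedding, and a standard duality product estimate then yields $\|\rho_s\,\nabla W*\rho_s\|_{H^{-k}}\le C\|\rho_s\|_{H^{-k}}^2$. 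Inserting the bound $\|\rho_s\|_{H^{-k}}\le 3\|\rho_0\|_{H^{-k}}e^{\gamma_{max}s}$ from Lemma~\ref{lem:bound_rho}, valid on $[0,T_1(\epsilon,\rho_0)]$, and computing,
$$
|\langle \eta_t,\bar\psi\rangle| \;\le\; C\|\rho_0\|_{H^{-k}}^2 \int_0^t e^{2\gamma_{max}s}e^{(t-s)\gamma_{max}}\,ds \;\le\; C'\|\rho_0\|_{H^{-k}}^2 e^{2\gamma_{max}t},
$$
and taking the supremum over $\bar\psi$ gives the claim.

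The main obstacle is handling the derivative in the source $\partial_\theta(\rho_t\,\nabla W * \rho_t)$: a direct energy estimate on $\eta_t$ in $H^{-k}$ would lose a derivative on $\rho_t$ itself, for which Lemma~\ref{lem:bound_rho} only provides control in $H^{-k}$. The duality argument transfers this problematic derivative onto the dual variable $\psi_s$, which lives in a higher Sobolev space by construction; combined with the quadratic bound on $\|\rho_s\|_{H^{-k}}^2$ and the self-adjointness of $\mathcal{L}_{\mu_\infty}$, this is precisely what closes the estimate with the single-index loss appearing in the statement.
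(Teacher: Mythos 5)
Your proof is correct. The route differs from the paper's in presentation but not in substance: the paper writes the same equation for $\eta_t=\rho_t-\rho^L_t$ and then runs a \emph{direct} differential inequality in $H^{-k-1}$,
\begin{equation*}
\frac{d}{dt}\|\eta_t\|_{H^{-k-1}}\le \gamma_{max}\|\eta_t\|_{H^{-k-1}}+\|\rho_t\,\nabla W*\rho_t\|_{H^{-k}}\le \gamma_{max}\|\eta_t\|_{H^{-k-1}}+C\|\rho_0\|_{H^{-k}}^2e^{2\gamma_{max}t},
\end{equation*}
and closes with the Gronwall-type Lemma~\ref{lem:gronwall} (with $a=\gamma_{max}$, $b=2\gamma_{max}$, $\eta_0=0$). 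You instead integrate the same information via Duhamel, implemented through a backward dual flow. The two are equivalent here, and your key estimates --- the single index drop to absorb $\partial_\theta$, the product bound $\|\rho\,\nabla W*\rho\|_{H^{-k}}\le C\|\rho\|_{H^{-k}}^2$, and the input $\|\rho_s\|_{H^{-k}}\le 3\|\rho_0\|_{H^{-k}}e^{\gamma_{max}s}$ from Lemma~\ref{lem:bound_rho} --- are exactly the paper's. The one thing worth noting is that the duality machinery is not actually needed for this lemma: unlike in Lemma~\ref{lem:bound_rho}, where the dual flow is essential to avoid losing a derivative through the nonlinear transport term, here the linear part $\mathcal{L}_{\mu_\infty}$ is bounded on $H^{-k-1}$ (Lemma~\ref{lem:linear} holds for negative indices) and the derivative in the source is paid once and for all by measuring the difference in $H^{-k-1}$ rather than $H^{-k}$. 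So the direct energy estimate already closes; your version buys nothing extra but costs the (correct) observation that $\mathcal{L}_{\mu_\infty}$ is self-adjoint, which you do justify via evenness of $W\circ\cos$.
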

\begin{proof}

    The difference $\rho-\rho^L$ satisfies the PDE:
    \begin{equation*}
        \partial_t (\rho-\rho^L) = \mathcal{L}_{\mu_\infty}(\rho_t-\rho^L_t)-\partial_\theta(\rho_t \nabla W*\rho_t).
    \end{equation*}
Thus,
\begin{equation*}
    \begin{split}
        \frac{d}{dt}\|\rho_t-\rho^L_t\|_{H^{-k-1}}&\leq\gamma_{max}\|\rho_t-\rho^L_t\|_{H^{-k-1}}+\|\rho_t \nabla W*\rho_t\|_{H^{-k}}\leq\\
        &\leq \gamma_{max}\|\rho_t-\rho^L_t\|_{H^{-k-1}} +\|\rho_t \|_{H^{-k}}\|\nabla W*\rho_t\|_{W^{k, \infty}}\leq\\
        &\leq \gamma_{max}\|\rho_t-\rho^L_t\|_{H^{-k-1}} +C\|\rho_t \|_{H^{-k}}^2.
    \end{split}
\end{equation*}
Applying Lemma \ref{lem:bound_rho}, we obtain:
\begin{equation*}
        \frac{d}{dt}\|\rho_t-\rho^L_t\|_{H^{-k-1}}\leq \gamma_{max}\|\rho_t-\rho_L\|_{H^{-k-1}} + C \|\rho_0\|^2_{H^{-k}}e^{2\gamma_{max}t}.
\end{equation*}
The conclusion is a consequence of Gronwall's lemma (Lemma \ref{lem:gronwall}).
\end{proof}

We can conclude this section with the following proposition, which describe the measure $\mu_t$ at the exit point from $B_{H^{-k}}(\mu_\infty, 3\epsilon)$. In particular, we prove that after a time $T_1$, defined as:
\begin{equation}
\label{eq:T_1}
    T_1(\epsilon,\rho_0):= \frac{1}{\gamma_{max}}\ln\left(\frac{\epsilon}{\|\rho_0\|_{H^{-k}}}\right)
\end{equation}
the perturbation of the uniform measure consists of a dominant mode characterized by $k_{max}$, along with a remainder made up of two components: the error from discarding the other modes and the error resulting from linearizing the PDE.

\begin{prop}
\label{prop:linear_phase}
    Let $k\geq 0$. There exists $\epsilon_0>0$ such that if $\|\rho_0\|_{H^{-k}}<\epsilon<\epsilon_0$, then:
    \begin{equation*}
        \mu_{T_1(\epsilon,\rho)}=\mu_\infty + \epsilon\frac{(\hat{\rho}_0)_{k_{max}}}{\|\rho_0\|_{H^{-k}}}\cos(k_{max}\theta) + R(\epsilon, \|\rho_0\|_{H^{-k}}),
    \end{equation*}
where $T_1(\epsilon,\rho_0)$ is defined in Eq.~\eqref{eq:T_1}. Furthermore, let $\gamma^-:=\max_{k\neq\pm k_{max}} \gamma_k$, then:
 \begin{equation*}
        R(\epsilon, \|\rho_0\|_{H^{-k}})= O_{H^{-k-1}}\left(\|\rho_0\|_{H^{-k}}\left(\frac{\epsilon}{\|\rho_0\|_{H^{-k}}}\right)^{\frac{\gamma^{-}}{\gamma_{max}}}+\epsilon^2\right).
    \end{equation*}
\end{prop}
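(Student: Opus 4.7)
The plan is to decompose the exit-time measure as $\mu_{T_1} = \mu_\infty + \rho^L_{T_1} + (\rho_{T_1} - \rho^L_{T_1})$, where $\rho^L_t$ solves the linearized equation \eqref{eq:pde_lin}, and then to further split the linear piece $\rho^L_{T_1}$ into its dominant $k_{max}$-th Fourier component and the remaining modes. The explicit term in the claimed expansion will come from the dominant mode, while the other two contributions---modes $j \neq \pm k_{max}$ of the linear evolution, and the nonlinear correction---will be absorbed into $R$ and matched against the two pieces of the asserted bound.

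For the linear part, I would exploit the spectral structure of $\mathcal{L}_{\mu_\infty}$ on $\mathbb{S}^1$: the Fourier modes $e^{ij\theta}$ are eigenfunctions with eigenvalues $\gamma_j$ (cf.\ Lemma \ref{lem:linear}), so the solution of \eqref{eq:pde_lin} admits the explicit expansion $\rho^L_t = \sum_j (\hat\rho_0)_j e^{\gamma_j t} e^{ij\theta}$. Evaluated at $t = T_1$ and combined with the defining identity $e^{\gamma_{max} T_1} = \epsilon / \|\rho_0\|_{H^{-k}}$, the pair $j = \pm k_{max}$ (combined symmetrically since $\rho_0$ is real) produces exactly $\epsilon\,(\hat\rho_0)_{k_{max}}\|\rho_0\|_{H^{-k}}^{-1}\cos(k_{max}\theta)$. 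For the remaining modes, using $\gamma_j \le \gamma^-$ for $j \ne \pm k_{max}$, a direct Plancherel-type estimate yields
\begin{equation*}
\Big\| \sum_{j \neq \pm k_{max}} (\hat\rho_0)_j\, e^{\gamma_j T_1}\, e^{ij\theta} \Big\|_{H^{-k-1}} \leq e^{\gamma^- T_1} \|\rho_0\|_{H^{-k}} = \|\rho_0\|_{H^{-k}} \Big(\tfrac{\epsilon}{\|\rho_0\|_{H^{-k}}}\Big)^{\gamma^-/\gamma_{max}},
\end{equation*}
which is precisely the first piece of the claimed remainder.

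For the nonlinear correction, I would apply Lemma \ref{lem:linear_estimate} at $t = T_1$ to obtain $\|\rho_{T_1} - \rho^L_{T_1}\|_{H^{-k-1}} \leq C \|\rho_0\|_{H^{-k}}^2 e^{2\gamma_{max} T_1}$, which, by the defining relation for $T_1$, collapses to $C \epsilon^2$. This supplies the second piece of the remainder. The threshold $\epsilon_0$ is taken to be the $\epsilon$ appearing in Lemma \ref{lem:bound_rho}, which guarantees that both Lemma \ref{lem:bound_rho} and Lemma \ref{lem:linear_estimate} are valid throughout $[0, T_1]$; assembling the three pieces concludes the decomposition.

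I do not expect a serious obstacle: both the linear spectral computation and the nonlinear estimate are essentially immediate from previously established lemmas, and the only delicate ingredient is the one-derivative loss incurred by Lemma \ref{lem:linear_estimate}, which is precisely the reason the remainder must be measured in $H^{-k-1}$ rather than $H^{-k}$. This loss is compatible with the linear tail estimate (which is actually available without loss of regularity, so the embedding $H^{-k}\hookrightarrow H^{-k-1}$ is harmless) and is intrinsic to the nonlinear correction. The only point requiring some care is a clean bookkeeping of the real-valued Fourier convention so that the coefficient in front of $\cos(k_{max}\theta)$ matches the statement exactly; this is a cosmetic issue settled by fixing the cosine/sine basis once and for all.
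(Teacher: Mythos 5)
Your proposal is correct and follows essentially the same route as the paper's own proof: both use the explicit Fourier-mode solution of the linearized equation to isolate the $\pm k_{max}$ modes as the $\cos(k_{max}\theta)$ term, bound the remaining modes by $e^{\gamma^- T_1}\|\rho_0\|_{H^{-k}}$, and control the nonlinear correction via Lemma~\ref{lem:linear_estimate} together with the identity $e^{\gamma_{max}T_1}=\epsilon/\|\rho_0\|_{H^{-k}}$ to get the $\epsilon^2$ term. No gaps; your remark about the one-derivative loss being forced only by the nonlinear piece matches the structure of the paper's argument.
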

\begin{proof}
Notice that the solution to Eq.~\eqref{eq:pde_lin} can be explicitly expressed in terms of the Fourier coefficients of the initial condition:
\begin{equation}
    \label{eq:explicit_sol}
    \rho^L_t=\sum_{k\in \mathbb{Z}}(\hat{\rho}_0)_{k}e^{\gamma_k t} e^{ikx}.
\end{equation}

Let $\gamma^-:=\max_{k\neq k_{max}} \gamma_k$. By substituting Eq.~\eqref{eq:explicit_sol} into Lemma \ref{lem:linear_estimate} and using the definition of $T_1$, we get:
\begin{equation*}
\begin{split}
    \rho_{T_1}=&\rho^L_{T_1} + O_{H^{-k-1}}\left(\|\rho_0\|_{H^{-k}}^2e^{2\gamma_{max}T_1}\right)=\\
    =&(\hat{\rho}_0)_{k_{max}}e^{\gamma_{max} T_1} \cos(k_{max}x)+\sum_{k\neq\pm k_{max}}(\hat{\rho}_0)_{k}e^{\gamma_k T_1} e^{ikx} +O_{H^{-k-1}}\left(\|\rho_0\|_{H^{-k}}^2e^{2\gamma_{max}T_1}\right)=\\
    =&\epsilon \frac{(\hat{\rho}_0)_{k_{max}}}{\|\rho_0\|_{H^{-k}}} \cos(k_{max}x) + \sum_{k\neq\pm k_{max}}(\hat{\rho}_0)_{k}\left(\frac{\epsilon}{\|\rho_0\|_{H^{-k}}}\right)^{\frac{\gamma^-}{\gamma_{max}}}e^{ikx} +O_{H^{-k-1}}\left(\epsilon^2\right)=\\
    =&\epsilon \frac{(\hat{\rho}_0)_{k_{max}}}{\|\rho_0\|_{H^{-k}}} \cos(k_{max}x) + O_{H^{-k-1}}\left(\|\rho_0\|_{H^{-k}}\left(\frac{\epsilon}{\|\rho_0\|_{H^{-k}}}\right)^{\frac{\gamma^{-}}{\gamma_{max}}}+\epsilon^2\right).
\end{split}
\end{equation*}
\end{proof}
We turn to our discussion of the subsequent ``quasi-linear'' phase.

\subsection{The quasi-linear phase}
\label{sec:quasilinear_phase}
When the measure $\mu_t$ exits the ball $B_{H^{-k}}(\mu_\infty, 3\epsilon)$, the estimates based on the linearized equation are no longer sufficient to describe the system's evolution. To address this, we build a nonlinear approximation following Grenier's iterative scheme (\citep{grenier2000nonlinear, Han_Kwan_2016}).

In the preceding paragraph, we isolated the dominant mode $\cos(k_{max}\theta)$. Let $\alpha>0$ and consider the solution of the following nonlinear PDE:
\begin{equation}
\label{eq:def_fa}
\begin{cases}
    \partial_t f^\alpha = -\partial_\theta \left(f^\alpha \nabla W*f^\alpha\right)\\
    f^{\alpha}_0 = \mu_\infty + \alpha \cos(k_{max}\theta).
\end{cases}
\end{equation}

We will demonstrate the existence of $\delta >0$, independent of $\alpha$, such that the evolution of $f^\alpha$ within the ball $B_{H^{-k}}(\mu_\infty, \delta)$ is still controlled by $\gamma_{max}$. 
In particular, we prove in Proposition \ref{prop:growth_fa} that after a time $T_2$:
\begin{equation} 
\label{eq:T_2} 
T_2(\delta, \alpha) := \frac{1}{\gamma_{\text{max}}} \ln(\delta / \alpha), 
\end{equation} the nonlinear evolution of the dominant mode moves outside a neighborhood of the uniform measure, with the radius of this neighborhood $\delta$ being independent of $\alpha$. Moreover, in Proposition \ref{prop:quasilinear_err}, we provide a bound on the distance between $f^\alpha$ and any distribution close to it at time $0$, valid over the time horizon $[0,T_2]$.

Let's introduce the following definitions essential for Grenier's scheme:
\begin{equation*}
    \begin{split}
        g_1(t,\theta)&:= e^{\gamma_{max}t}\cos(k_{max}\theta),\\
        f^{\alpha, K}(t,\theta)&:=\mu_\infty + \sum_{j=1}^K\alpha^j g_j ,\\
\end{split}
\end{equation*}
where the functions $g_k$ are defined recursively by:
\begin{equation}
\label{eq:def_gk}
\begin{cases}
    (\partial_t-\mathcal{L}_{\mu_\infty})g_k= - \sum_{j=1}^{k-1}\partial_\theta\left(g_j \nabla W * g_{k-j}\right)\\
    g_k(0)=0 \quad \forall k\geq 2.
\end{cases}
\end{equation}

Notice that:
\begin{equation}
\label{eq:f_ak}
    \partial_t f^{\alpha, K}+\partial_\theta(f^{\alpha,K}\nabla W * f^{\alpha, K})=R^{\alpha,K}:= - \sum_{\substack{1\leq j, l\leq K \\ j+l\geq K+1}} \alpha^{j+l} \partial_\theta(g_j \nabla W * g_l).
\end{equation}

\begin{lemma}
\label{lem:bound_gk}
    Let $K>0$ and $s\geq 0$. Then $\forall k\leq K$:
    $$
    \|g_k\|_{H^s}\leq C_K e^{k \gamma_{max}t}.
    $$
\end{lemma}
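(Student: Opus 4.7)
The plan is to proceed by strong induction on $k$, using Duhamel's representation for the linear inhomogeneous equation~(\ref{eq:def_gk}). The base case $k=1$ is immediate from the explicit formula $g_1(t,\theta) = e^{\gamma_{max}t}\cos(k_{max}\theta)$, which gives $\|g_1(t)\|_{H^s} \leq (1+k_{max}^2)^{s/2}\, e^{\gamma_{max}t}$.

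For the inductive step, the first ingredient is a semigroup bound for $\mathcal{L}_{\mu_\infty}$. As noted in Section~\ref{sec:emergence_sym}, this operator diagonalizes in the Fourier basis of $\mathbb{S}^1$, acting as multiplication by $\gamma_l$ on the $l$-th mode. Under Assumption~\ref{ass:2}, $\sup_l \gamma_l = \gamma_{max}$, and since $\mathcal{L}_{\mu_\infty}$ commutes with $\partial_\theta$, the corresponding semigroup satisfies
\[
\|e^{(t-\tau)\mathcal{L}_{\mu_\infty}} f\|_{H^s} \leq e^{\gamma_{max}(t-\tau)} \|f\|_{H^s}
\]
for every $s\in\mathbb{R}$. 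Applying Duhamel's formula to~(\ref{eq:def_gk}) yields
\[
g_k(t) = -\sum_{j=1}^{k-1}\int_0^t e^{(t-\tau)\mathcal{L}_{\mu_\infty}} \partial_\theta\bigl(g_j(\tau)\,\nabla W * g_{k-j}(\tau)\bigr)\, d\tau.
\]

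The second ingredient is a bound on the forcing. Since $W$ is smooth by Assumption~\ref{ass:1}, convolution with $\nabla W$ is regularizing: one has $\|\nabla W * h\|_{W^{s+1,\infty}} \leq C_{W,s}\,\|h\|_{H^s}$. Combining this with the standard product estimate in $H^s$ and absorbing the $\partial_\theta$ into the smoothness of $\nabla W$, one obtains
\[
\|\partial_\theta(g_j\, \nabla W * g_{k-j})\|_{H^s} \leq C\, \|g_j\|_{H^s}\,\|g_{k-j}\|_{H^s}.
\]
By the inductive hypothesis, this is bounded by $C\,C_{K}^2\, e^{k\gamma_{max}\tau}$. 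Inserting into the Duhamel formula and using the semigroup estimate yields
\[
\|g_k(t)\|_{H^s} \leq C\,K\, C_K^2 \int_0^t e^{(t-\tau)\gamma_{max}}\,e^{k\gamma_{max}\tau}\, d\tau \leq \frac{C\,K\,C_K^2}{(k-1)\gamma_{max}}\, e^{k\gamma_{max}t},
\]
which closes the induction after enlarging $C_K$ appropriately.

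The main delicate point is to avoid derivative loss from the factor $\partial_\theta$ in the forcing: the key observation is that this derivative can be absorbed into the smooth convolution kernel $\nabla W$, so the forcing is controlled by Sobolev norms of $g_j$ and $g_{k-j}$ at the \emph{same} order $s$, which is precisely what is needed to close the induction. Everything else reduces to a routine application of Duhamel's formula together with the diagonal structure of $\mathcal{L}_{\mu_\infty}$ in the Fourier basis.
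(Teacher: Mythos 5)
Your overall architecture (strong induction on $k$, a growth bound $e^{\gamma_{max}t}$ for the linear propagator, and a quadratic bound on the forcing, assembled via Duhamel) is equivalent to the paper's (energy estimate plus Gronwall's Lemma~\ref{lem:gronwall}), but your ``key observation'' is where the argument breaks. You claim
\[
\|\partial_\theta(g_j\,\nabla W * g_{k-j})\|_{H^s}\leq C\,\|g_j\|_{H^s}\,\|g_{k-j}\|_{H^s},
\]
on the grounds that the derivative ``can be absorbed into the smooth convolution kernel.'' By Leibniz,
\[
\partial_\theta\bigl(g_j\,\nabla W * g_{k-j}\bigr)=(\partial_\theta g_j)\,(\nabla W * g_{k-j})+g_j\,(\Delta W * g_{k-j}),
\]
and only the second term is absorbed by the smoothness of $W$. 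The first term contains $\partial_\theta g_j$, which requires control of $g_j$ in $H^{s+1}$, not $H^s$; the semigroup $e^{t\mathcal{L}_{\mu_\infty}}$ is a bounded Fourier multiplier and does not smooth, so Duhamel cannot recover this derivative either. As written, your induction does not close: the forcing at order $s$ is controlled by the previous iterates at order $s+1$.

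The fix is exactly the point of the paper's proof: accept the loss of one derivative per step and run the induction on a descending ladder of Sobolev indices, proving $\|g_k\|_{H^{n-k}}\leq C_k e^{k\gamma_{max}t}$ with $n:=K+s$, so that after at most $K$ steps one still lands in $H^s$ (this works because $n-k+1\leq n-j$ and $n-k+1\leq n-(k-j)$ for $1\leq j\leq k-1$). Equivalently, since the lemma is asserted for \emph{every} $s\geq 0$ and the induction has finitely many steps, you may invoke the inductive hypothesis for $g_j$ and $g_{k-j}$ at order $s+1$ rather than $s$; either repair closes your Duhamel argument with only cosmetic changes. Your base case, semigroup bound, and final time integral are otherwise fine.
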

\begin{proof}
The proof proceeds by induction, showing that $\|g_k\|_{H^{n-k}}\leq C_k e^{k\gamma_{max}t}$, where $n:=K+s$.

By definition, we immediately get $\|g_1\|_{H^n}\leq C e^{\gamma_{max}t}$. 

For the inductive step, consider $k>0$ and assume the hypothesis holds up to $k-1$. Then by Eq.~\eqref{eq:def_gk}:
\begin{equation*}
    \begin{split}
       \frac{d}{dt} \|g_k\|_{H^{n-k}}&\leq  \|\mathcal{L}_{\mu_\infty}g_k\|_{H^{n-k}}+\sum_{j=1}^{k-1}\|\partial_\theta(g_j \nabla W*g_{k-j})\|_{H^{n-k}}\leq\\
        &\leq \gamma_{max}  \|g_k\|_{H^{n-k}}+ \sum_{j=1}^{k-1}\|g_j \nabla W*g_{k-j}\|_{H^{n-k+1}}\leq\\
        &\leq \gamma_{max}  \|g_k\|_{H^{n-k}} + \sum_{j=1}^{k-1}\|g_j\|_{H^{n-k+1}} \|\nabla W*g_{k-j}\|_{W^{n-k+1,\infty}}\leq\\
        &\leq \gamma_{max}  \|g_k\|_{H^{n-k}} + \sum_{j=1}^{k-1}\|g_j\|_{H^{n-k+1}} \|\nabla W\|_{L^{2}}\|g_{k-j}\|_{H^{n-k+1}}\leq\\
        &\leq \gamma_{max}  \|g_k\|_{H^{n-k}} + C\sum_{j=1}^{k-1}\|g_j\|_{H^{n-j}} \|g_{k-j}\|_{H^{n-(k-j)}}\leq\\
        &\leq \gamma_{max}  \|g_k\|_{H^{n-k}} + C_ke^{k \gamma_{max}t}.
    \end{split}
\end{equation*}
Therefore, by Gronwall's lemma (Lemma \ref{lem:gronwall}):
\begin{equation*}
    \begin{split}
        \|g_k\|_{H^{n-k}} &\leq C_k e^{k\gamma_{max} t}.
    \end{split}
\end{equation*}
To conclude, recall the definition of $n$ as $K+s$.
\end{proof}

\begin{remark}
\label{rem:f_ak}
This lemma implies that if $\alpha e^{\gamma_{max}t}\leq \delta<1$, then:
$$
\|f^{\alpha,K}-\mu_\infty\|_{H^s} \leq C_K \sum_{j=1}^K (\alpha e^{\gamma_{max}t})^j \leq  \tilde{C}_K (\alpha e^{\gamma_{max}t}),
$$
and similarly:
$$
\|f^{\alpha,K}-\mu_\infty\|_{H^s}\geq \|\alpha g_1\|_{H^s}-C_K \sum_{j=2}^K (\alpha e^{\gamma_{max}t})^j \geq  \bar{C}_K \alpha e^{\gamma_{max}t}.
$$
\end{remark}

We now need to estimate the remainder $R^{\alpha, K}$ that appears in Eq.~\eqref{eq:f_ak}:
\begin{lemma}
\label{lem:R_ak}
    Let $K>0$, $s\geq 0$ and $\delta <1$. If $\alpha e^{\gamma_{max}t}\leq \delta$, then:
    $$
    \|R^{\alpha, K}\|_{H^s}\leq \tilde{C}_K \left(\alpha e^{\gamma_{max}t}\right)^{K+1}.
    $$
\end{lemma}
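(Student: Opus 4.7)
The plan is to bound $R^{\alpha,K}$ term by term using the growth estimate for the $g_k$'s supplied by Lemma~\ref{lem:bound_gk}, and then exploit the smallness hypothesis $\alpha e^{\gamma_{max} t} \leq \delta < 1$ to absorb all exponents $j+l \geq K+1$ into the worst one, $K+1$. The structure mirrors exactly what is done inside the induction of Lemma~\ref{lem:bound_gk}.

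First, I would apply the triangle inequality to the defining sum~\eqref{eq:f_ak} and bring the derivative out, losing one order of regularity:
\begin{equation*}
\|R^{\alpha,K}\|_{H^s} \;\leq\; \sum_{\substack{1\leq j,l\leq K\\ j+l\geq K+1}} \alpha^{j+l}\,\|g_j\,\nabla W * g_l\|_{H^{s+1}}.
\end{equation*}
Next I would estimate each product using the same algebra-in-Sobolev-spaces argument used to prove Lemma~\ref{lem:bound_gk}, namely
\begin{equation*}
\|g_j\,\nabla W * g_l\|_{H^{s+1}} \;\leq\; C\,\|g_j\|_{H^{s+1}}\,\|\nabla W * g_l\|_{W^{s+1,\infty}} \;\leq\; C'\,\|g_j\|_{H^{s+1}}\,\|\nabla W\|_{H^{s+2}}\,\|g_l\|_{H^{s+1}},
\end{equation*}
where the last step uses the smoothness of $W$ (Assumption~\ref{ass:1}) together with the standard convolution bound on $\mathbb{S}^1$. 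Inserting the pointwise-in-time estimate $\|g_k\|_{H^{s+1}} \leq C_K\,e^{k\gamma_{max} t}$ from Lemma~\ref{lem:bound_gk} produces
\begin{equation*}
\alpha^{j+l}\,\|g_j\,\nabla W * g_l\|_{H^{s+1}} \;\leq\; C''_K\,\bigl(\alpha\,e^{\gamma_{max} t}\bigr)^{j+l}.
\end{equation*}

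The final and key observation is that under the hypothesis $\alpha e^{\gamma_{max} t}\leq \delta<1$, and since the sum ranges over $j+l\geq K+1$,
\begin{equation*}
(\alpha\,e^{\gamma_{max} t})^{j+l} \;=\; (\alpha\,e^{\gamma_{max} t})^{K+1}\,(\alpha\,e^{\gamma_{max} t})^{j+l-K-1} \;\leq\; (\alpha\,e^{\gamma_{max} t})^{K+1},
\end{equation*}
so every term of the sum is dominated by the worst power $(K+1)$. Since there are at most $K^2$ pairs $(j,l)$ in the admissible range, collecting constants yields
\begin{equation*}
\|R^{\alpha,K}\|_{H^s}\;\leq\; \tilde C_K\,\bigl(\alpha\,e^{\gamma_{max} t}\bigr)^{K+1}.
\end{equation*}
I do not foresee any genuine obstacle here: the only item to double-check is the Sobolev product/convolution estimate, but this is standard on the compact manifold $\mathbb{S}^1$ and is exactly the same inequality already invoked in the proof of Lemma~\ref{lem:bound_gk}. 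The lemma is, in essence, just book-keeping once the growth of each $g_k$ has been controlled.
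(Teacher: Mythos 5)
Your proposal is correct and follows essentially the same route as the paper's proof: triangle inequality on the defining sum, one derivative absorbed into $H^{s+1}$, the product estimate $\|g_j\,\nabla W * g_l\|_{H^{s+1}} \leq C_K\|g_j\|_{H^{s+1}}\|g_l\|_{H^{s+1}}$, the growth bound of Lemma~\ref{lem:bound_gk}, and finally the hypothesis $\alpha e^{\gamma_{max}t}\leq\delta<1$ to dominate every power $j+l\geq K+1$ by the $(K+1)$-st. The only cosmetic difference is the precise norm placed on $\nabla W$ in the convolution step, which is immaterial given Assumption~\ref{ass:1}.
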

\begin{proof}
    By definition of $R^{\alpha, K}$:
    \begin{equation*}
        \begin{split}
            \|R^{\alpha, K}\|_{H^s}&\leq \sum_{\substack{1\leq j, l\leq K \\ j+l\geq K+1}} \alpha^{j+l} \|\partial_\theta(g_j \nabla W * g_l)\|_{H^s}\leq\\
            &\leq\sum_{\substack{1\leq j, l\leq K \\ j+l\geq K+1}} \alpha^{j+l} \|g_j \nabla W * g_l\|_{H^{s+1}}\leq\\
            &\leq C_K \sum_{\substack{1\leq j, l\leq K \\ j+l\geq K+1}} \alpha^{j+l} \|g_j\|_{H^{s+1}}\| g_l\|_{H^{s+1}}.
        \end{split}
    \end{equation*}
Finally, Lemma \ref{lem:bound_gk} and the assumption imply:
\begin{equation*}
       \|R^{\alpha, K}\|_{H^s}\leq C_K \sum_{\substack{1\leq j, l\leq K \\ j+l\geq K+1}} \alpha^{j+l} e^{(j+l)\gamma_{max}t}\leq \tilde{C}_K \left( \alpha e^{\gamma_{max}t}\right)^{K+1}.
\end{equation*}           
\end{proof}

Now, we can compare the evolution of the dominant mode $f^\alpha$ defined in Eq.~\eqref{eq:def_fa} and Grenier's approximation $f^{\alpha, K}$. Let $r^{\alpha, K}:= f^\alpha-f^{\alpha, K}$.

\begin{lemma}
\label{lem:r_ak}
   Given $s\geq 0$, there exist constants $K_0$ and $\delta>0$ (independent of $\alpha$) such that, for every $K\geq K_0$ and for every $t$ such that $\alpha e^{\gamma_{max}t}<\delta$, the following bound holds:
    \begin{equation*}
       \|r^{\alpha,K}_t\|_{H^s}\leq C(\alpha e^{\gamma_{max}t})^K,
    \end{equation*}
where the constant $C$ depends only on $s, K$ and $W$.
\end{lemma}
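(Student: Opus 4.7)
The plan is to derive a nonlinear PDE for $r := r^{\alpha,K} = f^\alpha - f^{\alpha,K}$ starting from zero initial data, and then close a bootstrap argument on the renormalized quantity $u(t) := \|r(t)\|_{H^s}/\omega(t)^K$, where $\omega(t) := \alpha e^{\gamma_{max}t}$. First, I would observe that $r(0) = 0$: since $g_1(0) = \cos(k_{max}\theta)$ and $g_j(0) = 0$ for $j \geq 2$, we have $f^{\alpha,K}(0) = \mu_\infty + \alpha\cos(k_{max}\theta) = f^\alpha(0)$. Subtracting~\eqref{eq:f_ak} from~\eqref{eq:def_fa}, writing $\tilde f := f^{\alpha,K} - \mu_\infty$, and using $\nabla W * \mu_\infty = 0$ (which follows from the periodicity of $W$), the expansion of $f^\alpha\nabla W * f^\alpha - f^{\alpha,K}\nabla W * f^{\alpha,K}$ in terms of $r$ and $\tilde f$ gives
$$
\partial_t r = \mathcal{L}_{\mu_\infty}(r) - \partial_\theta\bigl(\tilde f\,(\nabla W * r) + r\,(\nabla W * \tilde f) + r\,(\nabla W * r)\bigr) - R^{\alpha,K}.
$$

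Next I would perform an $H^s$ energy estimate. The linear term contributes $\gamma_{max}\|r\|_{H^s}^2$, using that $\mathcal{L}_{\mu_\infty}$ is diagonal in the Fourier basis with eigenvalues bounded by $\gamma_{max}$. The three nonlinear terms are handled via Lemma~\ref{lem:lemma_tecnico}, classical Moser-type product estimates in $H^s$, and the smoothing produced by convolution against the $C^\infty$ kernel $\nabla W$. Combining this with Remark~\ref{rem:f_ak} (which gives $\|\tilde f\|_{H^s} \leq C_K\omega$) and Lemma~\ref{lem:R_ak} (which gives $\|R^{\alpha,K}\|_{H^s} \leq \tilde C_K\omega^{K+1}$), the outcome is
$$
\frac{d}{dt}\|r\|_{H^s} \leq (\gamma_{max} + C_K\omega)\|r\|_{H^s} + C\|r\|_{H^s}^2 + \tilde C_K\omega^{K+1}.
$$

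Since $\omega' = \gamma_{max}\omega$, setting $u := \|r\|_{H^s}/\omega^K$ with $u(0) = 0$ transforms the above into
$$
u' \leq -(K-1)\gamma_{max}\,u + C_K\omega\,u + C\omega^K u^2 + \tilde C_K\omega.
$$
I would then run a continuation argument under the bootstrap hypothesis $u(t) \leq M$ on the interval where $\omega \leq \delta$: for $K \geq K_0$ and $\delta$ small enough (both independent of $\alpha$) so that $C_K\delta + C\delta^K M \leq (K-1)\gamma_{max}/2$, Gronwall's inequality yields $u(t) \leq 2\tilde C_K\delta/((K-1)\gamma_{max})$, which is less than $M$ if $M$ is chosen as twice this value. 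This closes the bootstrap and gives $\|r^{\alpha,K}\|_{H^s} \leq C(\alpha e^{\gamma_{max}t})^K$ on the desired time interval, with $C$ depending only on $s$, $K$, and $W$. The main obstacle is the calibration of $K_0$ and $\delta$: one must ensure that the quasilinear cross-term of order $\omega\,u$ and the quadratic-in-$r$ term of order $\omega^K u^2$ do not overwhelm the negative drift $(K-1)\gamma_{max}\,u$ produced by the renormalization, which is precisely why Grenier's scheme demands an approximation order $K \geq K_0$ above a threshold determined by $W$ and $s$, while $\delta$ must stay a fixed small constant independent of $\alpha$ for the bound to hold uniformly throughout the quasi-linear phase.
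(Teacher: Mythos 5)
Your proposal is correct and follows essentially the same route as the paper: derive the equation satisfied by $r^{\alpha,K}$ with zero initial data, run an $H^s$ energy estimate using Lemma~\ref{lem:lemma_tecnico} together with the bounds $\|f^{\alpha,K}-\mu_\infty\|_{H^s}\lesssim \alpha e^{\gamma_{max}t}$ and $\|R^{\alpha,K}\|_{H^s}\lesssim(\alpha e^{\gamma_{max}t})^{K+1}$, and close with a continuation argument exploiting that the forcing grows at rate $(K+1)\gamma_{max}$, faster than the homogeneous growth once $K$ is large. The only (harmless, slightly sharper) deviation is that you peel off $\mathcal{L}_{\mu_\infty}$ explicitly and renormalize by $\omega^K$, which reduces the homogeneous drift to $\gamma_{max}+O(\delta)$ rather than the paper's constant $3C$, whereas the paper keeps $\mu_\infty$ inside $f^{\alpha,K}$ and instead imposes $3C<K\gamma_{max}$.
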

\begin{proof}
Combining Eq.~\eqref{eq:def_fa} and Eq.~\eqref{eq:f_ak}, $r^{\alpha, K}$ solves:
\begin{equation*}
    \partial_t r^{\alpha, K}+\partial_\theta\left( f^{\alpha,K} \nabla W * r^{\alpha, K}\right)+\partial_\theta\left(r^{\alpha, K} \nabla W * f^{\alpha,K} \right)+\partial_\theta\left( r^{\alpha, K} \nabla W * r^{\alpha, K}\right)=R^{\alpha, K}.
\end{equation*}

To study its energy:
\begin{equation}
    \begin{split}
        \frac{1}{2}\frac{d}{dt}\| r^{\alpha, K}\|^2_{H^s}=& -\langle \partial_\theta\left( f^{\alpha,K} \nabla W * r^{\alpha, K}\right), r^{\alpha, K}\rangle_{H^s}+\\
        &-\langle\partial_\theta\left(r^{\alpha, K} \nabla W * f^{\alpha,K} \right), r^{\alpha, K}\rangle_{H^s}+\\
        &-\langle \partial_\theta\left( r^{\alpha, K}\nabla W *r^{\alpha, K}\right) , r^{\alpha, K}\rangle_{H^s} +\\
        & + \langle R^{\alpha, K}, r^{\alpha, K}\rangle_{H^s}.
    \end{split}
\end{equation}

The three terms can be estimated separately:
\begin{itemize}
    \item first term:
    \begin{equation*}
    \begin{split}
        -\langle \partial_\theta\left( f^{\alpha,K} \nabla W * r^{\alpha, K}\right), r^{\alpha, K}\rangle_{H^s}=& -\langle \left(\partial_\theta f^{\alpha,K}\right) \nabla W * r^{\alpha, K}, r^{\alpha, K}\rangle_{H^s}+\\
        &-\langle f^{\alpha,K} \Delta W * r^{\alpha, K}, r^{\alpha, K}\rangle_{H^s}\leq\\
        \leq& C \|f^{\alpha,K}\|_{H^{s+1}} \|r^{\alpha, K}\|^2_{H^s}.
    \end{split}
    \end{equation*}
 \item second term (using Lemma \ref{lem:lemma_tecnico}):
 \begin{equation*}
    \begin{split}
       -\langle\partial_\theta\left(r^{\alpha, K} \nabla W * f^{\alpha,K} \right), r^{\alpha, K}\rangle_{H^s} \leq C \|f^{\alpha, K}\|_{H^{s}}\|r^{\alpha, K}\|_{H^s}^2.
    \end{split}
    \end{equation*}
 \item third term (again by Lemma \ref{lem:lemma_tecnico}):
 \begin{equation*}
    \begin{split}
       -\langle \partial_\theta\left( r^{\alpha, K}\nabla W * r^{\alpha, K}\right) , r^{\alpha, K}\rangle_{H^s} \leq C \|r^{\alpha, K}\|_{H^s}^3.
    \end{split}
    \end{equation*}
\end{itemize}
Combining these estimates, and utilizing Lemma \ref{rem:f_ak} and Lemma \ref{lem:R_ak}:
\begin{equation*}
\begin{split}
     \frac{d}{dt}\|r^{\alpha, K}\|_{H^s} \leq &C(\|f^{\alpha,K}\|_{H^{s+1}}+\|r^{\alpha, K}\|_{H^s})\|r^{\alpha, K}\|_{H^s} + \|R^{\alpha, K}\|_{H^s} \\
    \leq& C(\|\mu_\infty\|_{H^s}+C_K\alpha e^{\gamma_{max}t }+\|r^{\alpha, K}\|_{H^s})\|r^{\alpha, K}\|_{H^s}+\\
    + & C_K(\alpha e^{\gamma_{max}t})^{K+1}.
\end{split}
\end{equation*}

The conclusion follows via the standard continuation argument.

Assume $\|r^{\alpha, K}\|_{H^s}\leq C \left(\alpha e^{\gamma_{max}t}\right)^K$ and $t\in \left[0, \frac{1}{\gamma_{max}}\ln\left(\frac{\delta}{\alpha}\right)\right]$. If $\delta$ is sufficiently small such that $C_K\delta<1$, and $K$ is sufficiently large such that $3C<K\gamma_{max}$, then:
\begin{equation*}
\begin{split}
     \frac{d}{dt}\|r^{\alpha, K}\|_{H^s} &\leq C(\|\mu_\infty\|_{H^s}+C_K\alpha e^{\gamma_{max}t }+C_K \left(\alpha e^{\gamma_{max}t}\right)^K)\|r^{\alpha, K}\|_{H^s}+\\
     &+C_{K}(\alpha e^{\gamma_{max}t})^{K+1}\leq\\
     &\leq 3C\|r^{\alpha, K}\|_{H^s}+ C_K(\alpha e^{\gamma_{max}t})^{K+1}.
\end{split}
\end{equation*}
By applying Lemma \ref{lem:gronwall}, which requires the assumption $3C<K\gamma_{max}$, we obtain:
\begin{equation*}
    \|r^{\alpha, K}\|_{H^s} \leq \frac{C_K\alpha^{K+1}}{\gamma_{max}}e^{(K+1)\gamma_{max} t}\leq \delta C_K e^{K\gamma_{max}t}.
\end{equation*}
Since $\delta<1$, the solution remains within the desired bound and can be continued.
\end{proof}

The results presented in this section lead to the following proposition, which provides a lower bound on the growth rate of the dominant mode. 
\begin{prop}
\label{prop:growth_fa}
  Let $f^\alpha$ be the solution of Eq.~\eqref{eq:def_fa}. Given $s\geq 0$, there exists $\delta>0$ (independent of $\alpha$) such that  for $T_2(\delta, \alpha)$ defined as in Eq.~\eqref{eq:T_2}, the following holds:
  \begin{equation}
    \|f^\alpha_{T^2_\alpha}-\mu_\infty\|_{H^s}\geq C\delta.
\end{equation}

\end{prop}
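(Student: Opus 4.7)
The plan is to bootstrap from the fact that, by construction, the Grenier approximant $f^{\alpha,K}$ is dominated by its first-order term $\alpha g_1$ whenever we stay within the region $\alpha e^{\gamma_{\max} t} \leq \delta < 1$, and that $f^\alpha$ is exponentially close to $f^{\alpha,K}$ on the same time window. Concretely, I would start from the triangle inequality
\begin{equation*}
\|f^{\alpha}_{T_2}-\mu_\infty\|_{H^s} \;\geq\; \|f^{\alpha,K}_{T_2}-\mu_\infty\|_{H^s} \;-\; \|r^{\alpha,K}_{T_2}\|_{H^s},
\end{equation*}
and estimate the two terms separately using the machinery already developed.

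For the first term, the key observation is that at time $t=T_2(\delta,\alpha)$ one has exactly $\alpha e^{\gamma_{\max}T_2} = \delta$. The lower bound displayed in Remark~\ref{rem:f_ak}, applied to the dominant mode $g_1(T_2,\theta)=e^{\gamma_{\max}T_2}\cos(k_{\max}\theta)$, yields
\begin{equation*}
\|f^{\alpha,K}_{T_2}-\mu_\infty\|_{H^s} \;\geq\; \alpha e^{\gamma_{\max} T_2}\,\|\cos(k_{\max}\theta)\|_{H^s} \;-\; C_K \sum_{j=2}^{K} \delta^j \;\geq\; \bar{C}_K\,\delta,
\end{equation*}
provided $\delta$ is chosen small enough that the geometric tail in $\delta^j$ absorbs at most half of the leading contribution. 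This step only uses that $\|\cos(k_{\max}\theta)\|_{H^s}$ is a fixed positive constant.

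For the second term, I would invoke Lemma~\ref{lem:r_ak}: fixing $K \geq K_0(s)$ large enough so that $3C < K\gamma_{\max}$, and taking $\delta$ smaller than the threshold provided by that lemma, one obtains
\begin{equation*}
\|r^{\alpha,K}_{T_2}\|_{H^s} \;\leq\; C\,(\alpha e^{\gamma_{\max} T_2})^{K} \;=\; C\,\delta^{K}.
\end{equation*}
Since $K\geq 2$, for $\delta$ sufficiently small one has $C\delta^K \leq \tfrac{1}{2}\bar{C}_K\,\delta$, and combining with the previous step yields
\begin{equation*}
\|f^{\alpha}_{T_2}-\mu_\infty\|_{H^s} \;\geq\; \tfrac{1}{2}\bar{C}_K\,\delta,
\end{equation*}
which is the claim with $C=\bar{C}_K/2$.

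There is no real obstacle here beyond bookkeeping: all the heavy lifting is already contained in Lemmas~\ref{lem:bound_gk}--\ref{lem:r_ak} and the remark after Lemma~\ref{lem:bound_gk}. The only thing requiring care is the ordering of the quantifiers, namely choosing $K=K(s)$ first (large enough for Lemma~\ref{lem:r_ak} to apply) and then $\delta=\delta(s,K)$ small enough so that (i) Lemma~\ref{lem:r_ak} applies, (ii) Remark~\ref{rem:f_ak} yields the lower bound $\bar{C}_K\delta$, and (iii) the remainder $C\delta^K$ is dominated by half of $\bar{C}_K\delta$. Crucially, none of these choices depend on $\alpha$, so $\delta$ can be fixed uniformly in $\alpha$ and the time $T_2(\delta,\alpha)=\gamma_{\max}^{-1}\log(\delta/\alpha)$ diverges as $\alpha\to 0$, as required for the subsequent meta-stability analysis.
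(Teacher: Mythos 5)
Your argument is correct and is essentially identical to the paper's proof: both use the triangle inequality $\|f^\alpha_{T_2}-\mu_\infty\|_{H^s}\geq\|f^{\alpha,K}_{T_2}-\mu_\infty\|_{H^s}-\|r^{\alpha,K}_{T_2}\|_{H^s}$, lower-bound the first term via Remark~\ref{rem:f_ak} and upper-bound the second via Lemma~\ref{lem:r_ak}, then evaluate at $t=T_2$ where $\alpha e^{\gamma_{\max}T_2}=\delta$. Your explicit attention to the quantifier ordering ($K$ first, then $\delta$, both independent of $\alpha$) is a welcome clarification of what the paper leaves implicit.
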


\begin{proof}
By Lemma \ref{rem:f_ak} and Lemma \ref{lem:r_ak}, we have:
\begin{equation*}
\begin{split}
    \|f^\alpha-\mu_\infty\|_{H^s}&\geq \|f^{\alpha,K}-\mu_\infty\|_{H^s}-\|f^\alpha - f^{\alpha, K}\|_{H^s}\geq \\
    &\geq C_K\alpha e^{\gamma_{max}t}-C_K(\alpha e^{\gamma_{max}t})^K\geq\\
    &\geq C_K\alpha e^{\gamma_{max}t}.
\end{split}
\end{equation*}
To conclude, observe that the maximum $t$ for which the assumption in Lemma \ref{lem:r_ak} holds is precisely $T_2(\delta, \alpha)$.
\end{proof}

\begin{remark}
Note that the same estimate holds for $s<0$. Indeed, the only lower bound is needed for $g_1$, which grows as $e^{\gamma_{max}t}$ in every $H^{s}$ space.
\end{remark}

To conclude this section, we need to establish an estimate on the distance between a given solution $\mu_t$ and the evolution of the dominant mode $f^\alpha_t$.
\begin{prop}
\label{prop:quasilinear_err}
    Consider a solution $\mu$ of Eq.~\eqref{eq:mu} with initial condition $\mu_0$, and define $h_t:=\mu_t-f^\alpha_t$. If $\|h_0\|_{H^{-k}}=\|\mu_0-f^\alpha_0\|_{H^{-k}}\leq C\alpha$, then there exists $\delta>0$ (independent of $\alpha$), such that:
    \begin{equation*}
        \|h_t\|_{H^{-k}}\leq 3\|h_0\|_{H^{-k}}e^{\gamma_{max}t},
    \end{equation*}
    for every $t \in \left[0, T_2(\delta,\alpha) \right]$.
\end{prop}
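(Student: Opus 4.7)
The strategy is to extend the duality argument used in Lemma~\ref{lem:bound_rho} from the constant background $\mu_\infty$ to the time-dependent background $f^\alpha_t$. Subtracting~\eqref{eq:def_fa} from~\eqref{eq:mu} and exploiting the identity $\nabla W * \mu_\infty = 0$, the perturbation $h_t$ satisfies
\begin{equation*}
\partial_t h_t = \mathcal{L}_{\mu_\infty}(h_t) - \partial_\theta\bigl(h_t\, U_t\bigr) - \partial_\theta\bigl((f^\alpha_t - \mu_\infty)\,\nabla W * h_t\bigr) - \partial_\theta\bigl(h_t\, \nabla W * h_t\bigr),
\end{equation*}
where $U_t := \nabla W * (f^\alpha_t - \mu_\infty)$. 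The dominant part is the same linearised operator $\mathcal{L}_{\mu_\infty}$ that governs Lemma~\ref{lem:bound_rho}; the remaining three terms carry transport structure and are ``small'' in the quasi-linear window since, by Remark~\ref{rem:f_ak}, $\|f^\alpha_s - \mu_\infty\|_{H^s} \lesssim \alpha e^{\gamma_{max}s} \leq \delta$, and $\|h_s\|_{H^{-k}}$ will be kept small by a bootstrap.

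\textbf{Duality and adjoint energy estimate.} Given $\bar\psi \in H^k(\mathbb{S}^1)$ with $\|\bar\psi\|_{H^k}=1$, I would solve backwards on $[0,t]$ the adjoint transport equation tailored so that $\partial_s \langle h_s, \psi_s\rangle = 0$; schematically this reads
\begin{equation*}
\partial_s \psi_s = -\mathcal{L}_{\mu_\infty}\psi_s - U_s\,\partial_\theta \psi_s - (\nabla W * h_s)\,\partial_\theta \psi_s + \nabla W * \bigl((f^\alpha_s - \mu_\infty)\partial_\theta \psi_s\bigr),
\end{equation*}
with $\psi_t = \bar\psi$ (where the last term comes from dualising $-\partial_\theta((f^\alpha_s-\mu_\infty)\nabla W * h_s)$ via the odd symmetry of $\nabla W$). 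Then $\|h_t\|_{H^{-k}} \leq \|h_0\|_{H^{-k}} \sup_{\|\bar\psi\|_{H^k}=1} \|\psi_0\|_{H^k}$. For the energy estimate on $\psi$, combining Lemma~\ref{lem:linear} (boundedness of $\mathcal{L}_{\mu_\infty}$ on $H^k$ by $\gamma_{max}$), Lemma~\ref{lem:lemma_tecnico} applied to each transport term, and the smoothing $\|\nabla W * \phi\|_{W^{k,\infty}} \lesssim \|\phi\|_{H^{-k}}$ inherited from the smoothness of $W$, one obtains
\begin{equation*}
\frac{d}{ds}\|\psi_{t-s}\|_{H^k} \leq \Bigl(\gamma_{max} + C\|f^\alpha_{t-s} - \mu_\infty\|_{H^{-k}} + C\|h_{t-s}\|_{H^{-k}}\Bigr)\|\psi_{t-s}\|_{H^k},
\end{equation*}
and Gronwall yields $\|\psi_0\|_{H^k} \leq \exp\!\bigl(\gamma_{max} t + C\int_0^t (\|f^\alpha_s-\mu_\infty\|_{H^{-k}} + \|h_s\|_{H^{-k}})\,ds\bigr)$.

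\textbf{Bootstrap closure.} Assuming inductively that $\|h_s\|_{H^{-k}} \leq 3\|h_0\|_{H^{-k}} e^{\gamma_{max}s}$ on $[0, T_2(\delta,\alpha)]$, the hypothesis $\|h_0\|_{H^{-k}} \leq C\alpha$ together with $e^{\gamma_{max}T_2} = \delta/\alpha$ gives $\int_0^{T_2}\|h_s\|_{H^{-k}}\,ds \leq 3C\delta/\gamma_{max}$, while by Remark~\ref{rem:f_ak} analogously $\int_0^{T_2}\|f^\alpha_s-\mu_\infty\|_{H^{-k}}\,ds \leq C'\delta/\gamma_{max}$. Choosing $\delta$ small enough, independently of $\alpha$, so that $\exp(C''\delta/\gamma_{max}) \leq 3$ makes the continuation hypothesis self-improving and closes the estimate.

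\textbf{Main obstacle.} The awkward piece is the quadratic term $\partial_\theta(h_t\,\nabla W * h_t)$: including it in the adjoint makes $\psi_s$ depend on $h_s$ itself, so one would need a Schauder/fixed-point argument to establish well-posedness of the coupled adjoint-primal system. I expect the cleaner route is to drop this term from the definition of the adjoint, let it appear as a source in $\partial_s\langle h_s,\psi_s\rangle$, and estimate its contribution via Lemma~\ref{lem:lemma_tecnico} as $C\|h_s\|_{H^{-k}}\|\psi_s\|_{H^k}\cdot\|\psi_s\|_{H^k}^{-1}$-type bounds, absorbing the additional $C\int_0^{T_2}\|h_s\|_{H^{-k}}\,ds \lesssim \delta$ into the same smallness of $\delta$ that closes the bootstrap.
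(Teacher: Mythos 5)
Your proposal is correct and follows essentially the same route as the paper: the same decomposition of the equation for $h_t$ around $\mathcal{L}_{\mu_\infty}$, the same backward adjoint transport equation enforcing $\partial_s\langle h_s,\psi_s\rangle=0$, the same energy estimate $\partial_s\|\psi_{t-s}\|_{H^k}\leq(\gamma_{max}+C\|f^\alpha_{t-s}-\mu_\infty\|_{L^2}+C\|h_{t-s}\|_{H^{-k}})\|\psi_{t-s}\|_{H^k}$, and the same continuation argument closed by choosing $\delta$ small enough that $C\delta/\gamma_{max}\leq 1$. The ``main obstacle'' you flag is not one: since $\mu_t$ and $f^\alpha_t$ are both already-given solutions, $h_s$ is a known time-dependent coefficient, so the adjoint equation with the term $-(\nabla W*h_s)\partial_\theta\psi_s$ included is simply a linear backward transport equation requiring no fixed-point argument, and the paper keeps this term in the adjoint exactly as in your first formulation.
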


\begin{remark} We will apply this proposition to $\mu_0 = \mu_\infty + \rho(T_1)$.
\end{remark}

\begin{proof}
Using Eq.~\eqref{eq:mu} and Eq.~\eqref{eq:def_fa}, the distribution $h$ satisfies:
\begin{equation*}
\begin{split}
        \partial_t h =& -\partial_\theta(\mu \nabla W*\mu) +\partial_\theta(f^\alpha \nabla W*f^\alpha)\\
        =& -\partial_\theta(h \nabla W*f^\alpha)-\partial_\theta(f^\alpha \nabla W*h)-\partial_\theta(h \nabla W*h)\\
        =& - \mu_\infty  \Delta W*h\\
        &-\partial_\theta(h \nabla W*(f^\alpha-\mu_\infty))\\
        &- \partial_\theta((f^\alpha-\mu_\infty) \nabla W*h)\\
        &- \partial_\theta(h \nabla W*h).
\end{split}
\end{equation*}

We can adapt the Lagrangian flow argument used in Lemma \ref{lem:bound_rho} to bound $\|h\|_{H^{-k}}$. Let $\bar{\psi}\in H^{k}(\mathbb{S}^1)$, and define $\psi_t$ as the solution of:
\begin{equation*}
    \begin{cases}
        \partial_s \psi_{s} =& \mu_\infty \Delta W * \psi_{s} \\
        & -[\nabla W*(f^\alpha_{s}-\mu_\infty)] \partial_\theta \psi_{s} \\
        & -\nabla W*[(f^\alpha_{s}-\mu_\infty) \partial_\theta \psi_{s}]\\
        & - [\nabla W * h_{s}] \partial_\theta \psi_{s}\\
        \psi_t = \bar{\psi}.
    \end{cases}
\end{equation*}

To estimate the energy of $\psi_t$, we proceed as follows:

\begin{equation*}
    \begin{split}
        \frac{1}{2}\partial_s\|\psi_{t-s}\|_{H^k}^2 =& -\mu_\infty \langle \Delta W * \psi_{t-s}, \psi_{t-s} \rangle_{H^k} +\\
        &+ \langle [\nabla W*(f^\alpha-\mu_\infty)] \partial_\theta \psi_{t-s} , \psi_{t-s}\rangle_{H^k}+\\
        &+\langle \nabla W*[(f^\alpha-\mu_\infty) \partial_\theta \psi_{t-s}],\psi_{t-s}\rangle_{H^k}+\\
        &+\langle [\nabla W * h] \partial_\theta \psi_{t-s}, \psi_{t-s}\rangle_{H^k}
    \end{split}
\end{equation*}

We estimate each term separately:
\begin{itemize}
    \item first term (Lemma \ref{lem:linear}): 
    $$
    -\mu_\infty \langle \Delta W * \psi_{t-s}, \psi_{t-s} \rangle_{H^k} \leq \gamma_{max}\| \psi_{t-s}\|_{H^k}^2.
    $$
    \item second term (Lemma \ref{lem:lemma_tecnico}):
    \begin{equation*}
        \begin{split}
            \langle [\nabla W*(f^\alpha-\mu_\infty)] \partial_\theta \psi_{t-s} , \psi_{t-s}\rangle_{H^k}\leq & C\|\psi_{t-s}\|_{H^k}^2\|\nabla W*(f^{\alpha}_{t-s}-\mu_\infty)\|_{W^{k,\infty}}\leq\\
            \leq & C\|\psi_{t-s}\|_{H^k}^2 \|f^{\alpha}_{t-s}-\mu_\infty\|_{L^{2}} .
        \end{split}
    \end{equation*}
\item third term:
 \begin{equation*}
        \begin{split}
          \langle \nabla W*[(f^\alpha-\mu_\infty) \partial_\theta \psi_{t-s}],\psi_{t-s}\rangle_{H^k}= &\langle (f^\alpha-\mu_\infty) \partial_\theta \psi_{t-s},\nabla W*\psi_{t-s}\rangle_{L^2}+\\
          +&\langle (f^\alpha-\mu_\infty) \partial_\theta \psi_{t-s},\nabla^3 W*\psi_{t-s}\rangle_{L^2}+\\
          +&\langle (f^\alpha-\mu_\infty) \partial_\theta \psi_{t-s},\nabla^5 W*\psi_{t-s}\rangle_{L^2}+...\\
      \leq & \|f^\alpha-\mu_\infty\|_{L^2}\|\psi_{t-s}\|_{H^1}\|\nabla W*\psi_{t-s}\|_{L^\infty}+...\\
      \leq & C \|f^\alpha-\mu_\infty\|_{L^2}\|\psi_{t-s}\|_{H^1}^2.
        \end{split}
    \end{equation*}
\item fourth term (Lemma \ref{lem:lemma_tecnico}):
\begin{equation*}
    \begin{split}
         \langle [\nabla W * h] \partial_\theta \psi_{t-s}, \psi_{t-s}\rangle_{H^k}\leq C\|\psi_{t-s}\|^2_{H^k}\|h\|_{H^{-k}}.
    \end{split}
\end{equation*}
\end{itemize}
Summarizing, we obtain the following differential inequality:
\begin{equation*}
    \partial_s\|\psi_{t-s}\|_{H^k}\leq (\gamma_{max}+C\|f^{\alpha}_{t-s}-\mu_\infty\|_{L^2}+\|h_{t-s}\|_{H^{-k}})\|\psi_{t-s}\|_{H^k},
\end{equation*}
and by Gronwall's lemma:
\begin{equation}
\label{eq:bound_psi}
    \|\psi_{0}\|_{H^k}\leq \|\psi_t\|_{H^k}e^{\gamma_{max}t+\int_0^t C\|f^{\alpha}_{s}-\mu_\infty\|_{L^2}+\|h_{s}\|_{H^{-k}}ds}.
\end{equation}

Since by construction $\partial_t \langle h_t, \psi_t\rangle = 0$, that implies together with Eq.~\eqref{eq:bound_psi}:
\begin{equation*}
    \begin{split}
        \|h_t\|_{H^{-
        k}} &:=  \sup_{\|\bar{\psi} \|_{H^k} = 1}  \langle h_t , \bar\psi \rangle =  \sup_{\|\psi_t \|_{H^k} = 1}  \langle h_t , \psi_t \rangle = \\
        & = \sup_{\|\psi_t\|_{H^k} = 1}  \langle h_0 , \psi_0 \rangle \leq \|h_0\|_{H^{-k}}\sup_{\|\psi_t \|_{H^k} = 1} \|\psi_0\|_{H^k} \\
        &\leq\|h_0\|_{H^{-k}}e^{\gamma_{max}t+\int_0^t C\|f^{\alpha}_{s}-\mu_\infty\|_{L^2}+\|h_{s}\|_{H^{-k}}ds}.
    \end{split}
\end{equation*}

The conclusion follows by a continuation argument. Suppose $\|h_t\|_{H^{-k}}\leq 3 \|h_0\|_{H^{-k}}e^{\gamma_{max}t}$ and $t\in[0, T_2]$. Then:

\begin{equation*}
\begin{split}
\|h_t\|_{H^{-k}}\leq&\|h(0)\|_{H^{-k}}e^{\gamma_{max}t+\int_0^t  C \alpha e^{\gamma_{max}s}+3 \|h_0\|_{H^{-k}}e^{\gamma_{max}s}ds}\leq\\
\leq&\|h(0)\|_{H^{-k}}e^{\gamma_{max}t+\frac{C \alpha}{\gamma_{max}} e^{\gamma_{max}t}}\\
\leq & \|h(0)\|_{H^{-k}}e^{\gamma_{max}t+\frac{C \delta}{\gamma_{max}}}
\end{split}
\end{equation*}
where the hypothesis $\|h_0\|_{H^{-k}}\leq C\alpha$ was used in the second line. By choosing $\delta$ small enough, such that $\frac{C \delta}{\gamma_{max}}\leq 1$, we get the desired estimate.
\end{proof}    

\subsection{Combining the two phases}
In Section \ref{sec:linear_phase}, we analyzed the initial evolution of a perturbation $\mu=\mu_\infty+\rho$ of the uniform measure by comparing it with the solution of the linearized PDE.
After a time $T_1$, the dominant mode emerges. As the perturbation exits the neighborhood $B_{H^{-k}}(\mu_\infty, 3\epsilon)$, a higher order approximation using Grenier's scheme is required to track the nonlinear evolution of this dominant mode, as described in Section \ref{sec:quasilinear_phase}. We now consolidate these findings into the following proposition, which we specify to the case of empirical measures.

Let $\{\mu^N_0\}_{N\in\mathbb{N}}$ denote a sequence of random measures in $\mathcal{P}(\mathbb{S}^1)$, where $\mu^N_0$ is the empirical measure associated with $N$ i.i.d. particles sampled from the uniform measure on $\mathbb{S}^1$. 
 Define the radius of the ball in which the linear phase occurs as follows:
 \begin{equation*}
     \alpha^N:= \frac{(\widehat{\mu^N_0-\mu_\infty})_{k_{max}}}{N^{1/4}\|\mu^N_0-\mu_\infty\|_{H^{-1}}},
\end{equation*}
and consider the corresponding solutions $\mu^N_t$ and $f^N_t$ of Eq.~\eqref{eq:mu} with initial conditions $\mu^N_0$ and $\mu_\infty + \alpha^N \cos(k_{max}\theta)$, respectively.
\begin{remark}
    Since $(\widehat{\mu^N_0-\mu_\infty})_{k_{max}}=O(\|\mu^N_0-\mu_\infty\|_{H^{-1}})$, $\alpha_N\to 0$ as $N\to \infty$.
\end{remark}

\begin{prop}
\label{prop:two_phases}
There exist a constant $\delta>0$ and two sequences of times $T^N_1$ and $T^N_2$ (depending on $\mu_0^N$) such that:
\begin{equation*}
    \|f^N_{T^N_2}-\mu_\infty\|_{H^{-2}(\mathbb{S}^1)}>\delta
\end{equation*}
uniformly in $N$, and
\begin{equation*}
\|\mu^N_{T^N_1+T^N_2}-f^N_{T^N_2}\|_{H^{-2}(\mathbb{S}^1)}\to 0,
\end{equation*}
as $N\to \infty$ in probability.
\end{prop}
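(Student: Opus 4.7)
The plan is to patch together the linear phase (Proposition \ref{prop:linear_phase}) and the quasi-linear phase (Propositions \ref{prop:growth_fa} and \ref{prop:quasilinear_err}) at the intermediate time $T_1^N$, using the $k_{max}$-Fourier mode of $\mu_0^N$ at the exit of the linear phase as the amplitude $\alpha^N$ of the reference nonlinear solution $f^N$.

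First I would set $\rho_0^N := \mu_0^N - \mu_\infty$ and use standard second-moment computations for the Fourier coefficients of the empirical measure of $N$ i.i.d.\ uniform samples to show that, with probability tending to $1$, $\|\rho_0^N\|_{H^{-1}} = \Theta(N^{-1/2})$ and $|(\hat\rho_0^N)_{k_{max}}| = \Theta(N^{-1/2})$, so that $\alpha^N = \Theta(N^{-1/4})$ in probability. I would then choose $\epsilon_N := N^{-1/4}$, set $T_1^N := T_1(\epsilon_N, \rho_0^N) = \gamma_{max}^{-1}\log(\epsilon_N/\|\rho_0^N\|_{H^{-1}})$ and $T_2^N := T_2(\delta,\alpha^N)$ for some small $\delta>0$ to be fixed later. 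For $N$ large, $\epsilon_N < \epsilon_0$ with high probability, so Proposition \ref{prop:linear_phase} (with $k=1$) applies and yields
\[ \mu^N_{T_1^N} = \mu_\infty + \alpha^N \cos(k_{max}\theta) + R_N, \]
where the remainder estimate of that proposition, combined with $\gamma^- < \gamma_{max}$ and the size estimates above, gives $\|R_N\|_{H^{-2}} = o(N^{-1/4}) = o(\alpha^N)$ in probability.

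For the first inequality, I would apply Proposition \ref{prop:growth_fa} directly to $f^N$, whose initial datum is of the form required by that proposition, and invoke the remark following it to extend the lower bound to the negative-index Sobolev space $H^{-2}$. This yields $\|f^N_{T_2^N} - \mu_\infty\|_{H^{-2}} \geq C\delta$; after relabelling $\delta \leftarrow \delta/C$, we obtain the required uniform-in-$N$ lower bound. For the second inequality, I would consider $h_t := \mu^N_{T_1^N + t} - f^N_{T_1^N + t}$ for $t \in [0, T_2^N]$, observe that $h_0 = R_N$ has $\|h_0\|_{H^{-2}} = o(\alpha^N) \leq C\alpha^N$ with high probability, and apply Proposition \ref{prop:quasilinear_err} (with $k=2$) to get
\[ \|h_{T_2^N}\|_{H^{-2}} \leq 3\|R_N\|_{H^{-2}}\, e^{\gamma_{max} T_2^N} = 3\|R_N\|_{H^{-2}} \cdot \frac{\delta}{\alpha^N} = o(1) \]
in probability, which is precisely the desired conclusion.

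The main obstacle I expect is the probabilistic step needed to close the loop: one must show on a single event of probability $1-o(1)$ that $\alpha^N = \Theta(N^{-1/4})$ (so that the denominator in $\alpha^N$ is not anomalously small and $T_2^N$ is well-defined of logarithmic order) and simultaneously that $R_N$ is genuinely $o(\alpha^N)$ in $H^{-2}$. Both are Chebyshev/Hoeffding-type statements about Fourier coefficients of the empirical measure, but care is needed to handle the random denominator $\|\rho_0^N\|_{H^{-1}}$ entering the definition of $\alpha^N$, and to verify that the event on which Proposition \ref{prop:linear_phase} applies has probability $1-o(1)$. Everything else is a direct combination of the deterministic propositions already established.
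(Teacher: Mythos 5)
Your proposal is correct and follows essentially the same route as the paper: apply Proposition~\ref{prop:linear_phase} with $\epsilon=N^{-1/4}$ to decompose $\mu^N_{T_1^N}$, verify via CLT/Chebyshev estimates (the paper's Lemma~\ref{lem:probability_limits}) that $\|R_N\|_{H^{-2}}=o(\alpha^N)$ in probability despite the random denominator, then feed this into Proposition~\ref{prop:quasilinear_err} and conclude the lower bound from Proposition~\ref{prop:growth_fa}. The only slip is notational: the comparison function on $[0,T_2^N]$ should be $f^N_t$ (the nonlinear solution restarted at amplitude $\alpha^N$), not $f^N_{T_1^N+t}$, so that $h_0=R_N$ as you intend.
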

\begin{remark}
    By Proposition \ref{prop:growth_fa} and Remark \ref{rem:f_ak} we have both a lower and an upper bound for $\|f^N_{T^N_2}-\mu_\infty\|_{H^{-2}(\mathbb{S}^1)}$, uniform in $N$.
\end{remark}
\begin{proof} 
For clarity, denote $\rho^N_t:=\mu^N_t-\mu_\infty$.
Applying Proposition \ref{prop:linear_phase} with $\epsilon=N^{-1/4}$ and noting that $N^{1/4}\|\rho_0^N\|_{H^{-1}}\to 0$ in probability (Lemma \ref{lem:probability_limits}), we obtain:
 \begin{equation*}
        \mu^N_{T_1^N}=\mu_\infty + \alpha^N\cos(k_{max}\theta) + R^N,
    \end{equation*}
where $T^N_1:= \frac{1}{\gamma_{max}}\ln\left(\frac{1}{N^{1/4}\|\rho_0^N\|_{H^{-1}}}\right)$ and:
 \begin{equation*}
        R^N= O_{H^{-2}}\left(\|\rho_0\|_{H^{-1}}\left(\frac{\epsilon}{\|\rho_0\|_{H^{-1}}}\right)^{\frac{\gamma^{-}}{\gamma_{max}}}+\epsilon^2\right).
\end{equation*}

Note that, by Lemma \ref{lem:probability_limits}, we have:
\begin{equation}
 \label{eq:R^N}
    \frac{1}{\alpha^N}\|R^N\|_{H^{-2}}\to 0
\end{equation}
in probability. This implies that the hypothesis of Proposition \ref{prop:quasilinear_err} are asymptotically satisfied for the initial condition $\mu^N_{T_1^N}$. Therefore, there exists a constant $\delta>0$ such that $f^N_t$, defined as above, satisfies:
\begin{equation*}
\|\mu^N_{T^N_1+t}-f^N_t\|_{H^{-2}}\leq 3\|R^N\|_{H^{-2}}e^{\gamma_{max}t},
\end{equation*}
for all $t\in\left[0, T^N_2\right]$, with $T^N_2:=\frac{1}{\gamma_{max}}\ln\left(\frac{\delta}{\alpha^N}\right)$.

Thus,
\begin{equation*}
    \|\mu^N_{T_1^N+T_2^N}-f^N_{T_2^N}\|_{H^{-2}}\leq 3\frac{\delta}{\alpha^N}\|R^N\|_{H^{-2}}\to 0
\end{equation*}
in probability, as recalled in Eq.~\eqref{eq:R^N}.

Finally, observe that $\|f^N_{T^N_2}-\mu_\infty\|_{H^{-2}(\mathbb{S}^1)}>\delta$ is an immediate consequence of Proposition \ref{prop:growth_fa}.
\end{proof}
\begin{remark}
\label{rem:wass}
    The convergence also holds in Wasserstein distance, as proved in Lemma \ref{lem:wass_dist}.
\end{remark}

\subsection{The clustering phase}
In the previous steps, only an approximate understanding of the trajectory of the dominant mode, $f^{\alpha}$, was necessary. Now, however, we focus on a more specific case. Numerical experiments (see Figure \ref{fig:wass}) conducted on the Transformers model, $W(\theta)=\frac{1}{\beta}e^{\beta\cos(\theta)}$, reveal that after an initial phase where $f^\alpha$ surpasses a fixed threshold, i.e., $W_1(f^\alpha_t, \mu_\infty)>\delta$, the system converges to $k_{max}$ equally spaced clusters. The time required for the transition from such threshold to the clustered state is independent of $\alpha$. This assumption is summarized in Assumption~\ref{ass:3}
.

Although this remains an assumption, we believe it holds true, particularly in the case of Transformers. This belief is strongly supported by the fact that probability measures invariant under $k_{max}$-fold rotations form an invariant manifold for the system's dynamics, and the limiting measure must be a sum of Dirac deltas (see Lemma \ref{lem:analytic}).

\begin{figure}[hbt!]
    \centering
    \includegraphics[width=0.6\linewidth]{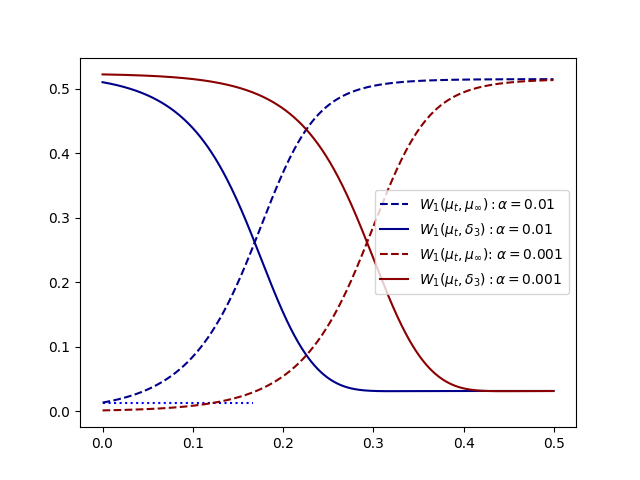}
    \caption{Plots showing the evolution of the Wasserstein distance between $\mu_t$ and the uniform measure $\mu_\infty$ (solid line) and between $\mu_t$ and $\delta_3:=\sum_{k=0}^2\delta_{2\pi k/3}$ (dashed line). Note that the simulations with the initial condition $f^\alpha$ for $\alpha=0.01$ and $\alpha=0.001$ almost completely overlap, differing only by a time shift. This observation supports the validity of Assumption \ref{ass:3}.}
    \label{fig:wass}
\end{figure}
Under Assumption \ref{ass:3} we can  more precisely characterize the final result of the meta-stable phase:

\begin{theorem}\label{prop:clustering}
Let $\{\mu^N_0\}_{N\in\mathbb{N}}$ denote a sequence of random measures in $\mathcal{P}(\mathbb{S}^1)$, where $\mu^N_0$ is the empirical measure associated with $N$ i.i.d. particles sampled from the uniform measure on $\mathbb{S}^1$. Under assumptions \ref{ass:1},\ref{ass:2} and \ref{ass:3} we can conclude:
\begin{equation*}
    \lim_{N\to\infty}\inf_{t\geq 0}W_1(\mu^N_t, \delta^{k_{max}})=0,
\end{equation*}
in probability, where $\delta^{k_{max}}:=\frac{1}{k_{max}}\sum_{i}\delta_{\frac{2\pi i}{k_{max}}}$ modulo rotations.
\end{theorem}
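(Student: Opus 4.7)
The plan is to chain Proposition~\ref{prop:two_phases} with Assumption~\ref{ass:3} and Dobrushin's estimate (Theorem~\ref{thm:dobrushin}), propagating closeness in $W_1$ from the end of the quasi-linear phase all the way to the clustered configuration. What makes this viable is that, beyond $T_1^N+T_2^N$, Assumption~\ref{ass:3} supplies a clustering time $T_3$ that is independent of $N$; hence the exponential-in-time factor in Dobrushin's bound does no damage on this final interval.

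First I would apply Proposition~\ref{prop:two_phases} together with Remark~\ref{rem:wass} to obtain $W_1(\mu^N_{T_1^N+T_2^N},\, f^N_{T_2^N})\to 0$ in probability, where $f^N$ is the mean-field evolution starting from $\mu_\infty+\alpha^N\cos\bigl(k_{max}(\theta-\theta_0^N)\bigr)$, with $\theta_0^N$ the random phase selected by the initial iid fluctuation. Rotation invariance of the PDE implies $f^N_{T_2^N}=\mathcal{R}_{\theta_0^N}f^{\alpha^N}(T_2(\delta,\alpha^N))$, where $\mathcal{R}_{\theta_0^N}$ denotes rotation by $\theta_0^N$. Since $\alpha^N\to 0$ in probability (cf.\ Lemma~\ref{lem:probability_limits}), the existence of the limit $f_*:=\lim_{\alpha\to 0}f^\alpha(T_2(\delta,\alpha))$ postulated in Assumption~\ref{ass:3} then yields
\begin{equation*}
W_1\bigl(\mu^N_{T_1^N+T_2^N},\,\mathcal{R}_{\theta_0^N}f_*\bigr)\longrightarrow 0\quad\text{in probability as }N\to\infty.
\end{equation*}

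Next, fix $\eta>0$. Assumption~\ref{ass:3} provides a time $T_3=T_3(\eta)$, independent of $N$, such that the mean-field flow $f_*(\cdot)$ starting from $f_*$ obeys $W_1(f_*(T_3),\mu_{\text{cluster}})<\eta/2$. Both $\mu^N_{T_1^N+T_2^N+t}$ and the rotated flow $\mathcal{R}_{\theta_0^N}f_*(t)$ are weak solutions of the same mean-field PDE, so Dobrushin's estimate applied on $[0,T_3]$ gives
\begin{equation*}
W_1\bigl(\mu^N_{T_1^N+T_2^N+T_3},\,\mathcal{R}_{\theta_0^N}f_*(T_3)\bigr)\leq e^{2CT_3}\,W_1\bigl(\mu^N_{T_1^N+T_2^N},\,\mathcal{R}_{\theta_0^N}f_*\bigr)\longrightarrow 0
\end{equation*}
in probability. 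A triangle inequality, together with the fact that $\delta^{k_{max}}$ is defined modulo rotations, now yields $\inf_{t\geq 0}W_1(\mu^N_t,\delta^{k_{max}})<\eta$ with probability tending to $1$. Since $\eta>0$ was arbitrary, the desired conclusion follows.

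The main obstacle I anticipate is the step that upgrades the pointwise limit $f_*=\lim_{\alpha\to 0}f^\alpha(T_2(\delta,\alpha))$ from Assumption~\ref{ass:3} to the convergence $W_1(f^N_{T_2^N},\mathcal{R}_{\theta_0^N}f_*)\to 0$ at the random parameter $\alpha=\alpha^N$. This seems to require some form of uniform continuity of the map $\alpha\mapsto f^\alpha(T_2(\delta,\alpha))$ as $\alpha\to 0$, which should in turn follow from continuity of the flow in its initial data combined with the contractive character of the quasi-linear dynamics toward the invariant manifold selected by $k_{max}$; once this is in hand, all remaining steps are direct applications of results already established in the preceding sections.
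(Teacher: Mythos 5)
Your proposal is correct and follows essentially the same route as the paper: Proposition~\ref{prop:two_phases} (with Remark~\ref{rem:wass}) gives $W_1$-closeness at time $T_1^N+T_2^N$, Assumption~\ref{ass:3} supplies an $N$-independent clustering time, and Dobrushin's estimate (Theorem~\ref{thm:dobrushin}) on that final finite interval transfers the closeness, with a triangle inequality and the arbitrariness of $\eta$ finishing the argument. The continuity-in-$\alpha$ step you flag as the main obstacle is exactly the point the paper's proof also uses implicitly (it applies Assumption~\ref{ass:3} directly to $f^{\alpha^N}$ rather than to $f_*$), and it is handled by combining the deterministic limit defining $f_*$ with the convergence $\alpha^N\to 0$ in probability.
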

\begin{proof}
By Assumption \ref{ass:3}, for every $\epsilon>0$ there exists $\bar{t}>0$, independent of $N$, such that:
\begin{equation*}
    W_1(f^{\alpha^N}_{T_2+\bar{t}}, \delta^{k_{max}})\leq \frac{\epsilon}{2},
\end{equation*}  
where $\alpha^N$ and $T_2$ are defined as in Proposition \ref{prop:two_phases}. Therefore, for all $\epsilon>0$:
\begin{equation*}
    \begin{split}
        \lim_{N\to\infty}\inf_{t\geq 0} 
        W_1(\mu^N_t, \delta^{k_{max}})&\leq \lim_{N\to\infty}W_1(\mu_{T_1+T_2+\bar{t}},\delta^{k_{max}})\leq\\
        &\leq \lim_{N\to\infty}W_1(\mu_{T_1+T_2+\bar{t}},f^{\alpha^N}_{T_2+\bar{t}})+W_1(f^{\alpha^N}_{T_2+\bar{t}},\delta^{k_{max}})\leq\\
        &\leq \lim_{N\to\infty}e^{L\bar{t}}W_1(\mu_{T_1+T_2},f^{\alpha^N}_{T_2})+\frac{\epsilon}{2},
    \end{split}
\end{equation*}
where the classical Dobrushin's estimate (Theorem \ref{thm:dobrushin}) is used in the last line, with a finite constant $L$ depending on the properties of $W$. 
By Proposition \ref{prop:two_phases} and Remark \ref{rem:wass}, it follows that $e^{L\bar{t}}W_1(\mu_{T_1+T_2},f^{\alpha^N}_{T_2})\leq\frac{\epsilon}{2}$ definitely in $N$.
Hence, the conclusion follows by arbitrariness of $\epsilon$.
\end{proof}

\section{Extension to higher dimensions}
\label{app:high_dim}
In this section we discuss what happens if the system is defined on $\mathbb{S}^{d-1}$ by:
\begin{equation}
\label{eq:pde_sphere}
\begin{cases}
     \partial_t \mu &= - \Div (\mu \nabla( W*\mu)),\\
     \mu(0)&=\mu_0,
\end{cases}
\end{equation}
where $W:\mathbb{S}^{d-1}\to\mathbb{R}$ and $W*\mu$ is the convolution defined in Eq.~\eqref{eq:def_conv}.
\begin{remark}
    The differential operators should be interpreted with respect to the underlying Riemannian manifold structure. 
\end{remark}

The uniform measure $\mu_\infty$ remains an unstable equilibrium of the dynamics. Furthermore, the linearized PDE for an initial perturbation $\mu_\infty+\rho_0$ is again:
\begin{equation*}
    \begin{cases}
     \partial_t \rho^L &= - \mu_\infty \Delta ( W*\rho^L),\\
     \rho^L_0 &=\rho_0,
\end{cases}
\end{equation*}
with $\Delta$ the Laplace-Beltrami operator on the sphere $\mathbb{S}^{d-1}$.

As in the $d=2$ case, we can study the linear operator $\mathcal{L}_{\mu_\infty}(f):=-\mu_\infty \Delta(W*f)$. Letting $f_{n,j}$  represent the spherical harmonics coefficients, we obtain:
$$
(\mathcal{L}_{\mu_\infty}(f))_{n,j}=\mu_\infty n(n+d-2) \hat{W}_n f_{n,j}.
$$
where $\hat{W}_n$ are the Gegenbauer coefficients. This result follows from the fact that spherical harmonics are eigenfunctions of the Laplace-Beltrami operator, combined with the Funk-Hecke theorem for convolutions on spheres (see Section \ref{sec:gegenbauer}). This provides the usual spectral bounds with a similar $\gamma_{max}$ for the linear operator, and an explicit solution $\rho^L$:
\begin{equation*}
    \rho^L_t = \sum_{n=0}^{\infty}\sum_{j=0}^{Z_n^d}(\rho_0)_{n,j} e^{\gamma_n t} Y_{n,j},
\end{equation*}
where $Z_n^d$ denotes the number of spherical harmonics $Y_{n,j}$ on $\mathbb{S}^{d-1}$ of degree $n$.

It is important to note that we now have a superposition of functions of the same degree $n_{max}$, all growing at the same rate. In particular, the steps outlined in the previous sections can be repeated, but with initial condition for $f^N$ given by:
\begin{equation*}
    f^N_0 = \mu_\infty+\sum_{j=0}^{Z_{n_{max}}^d} \alpha^N_j Y_{n_{max},j}.
\end{equation*}
Additionally, the calculations have to be performed in the space $H^{s}$ with $s<-d/2$, rather than in $H^{-1}$, the proofs remain analogous.

In $d=2$, the emerging symmetry was characterized by invariance under rotations by angles of $\frac{2\pi}{k_{max}}$, which was relatively straightforward to describe. However, as we move to higher dimensions, the nature of the symmetry becomes more complex and challenging to express.

Consider the subspace:
$$
\mathcal{H}_k:=\{ \mu \in H^{s}:\ \langle \mu, Y_{l,j}\rangle = 0\quad \forall l \text{ not divisible by } k\}.
$$
This set is invariant for the dynamics of Eq.~\eqref{eq:pde_sphere}, indeed, since $\mathcal{H}_k$ is a vector subspace of $H^{s}$, it suffices to show that $\partial_t \mu \in \mathcal{H}_k$ for every $\mu \in \mathcal{H}_k$:
\begin{equation*}
    \begin{split}
        \langle \partial_t\mu, Y_{l,m}\rangle &= -\langle \Div(\mu \nabla (W*\mu)), Y_{l,m}\rangle =\\
        &= \langle \mu \nabla (W*\mu), \nabla Y_{l,m}\rangle =\\
        &=\langle \left(\sum_{n,j}\mu_{n,j}Y_{n,j}\right)\left( \sum_{n',j'}W_{n'}\mu_{n',j'}\nabla Y_{n',j'}\right), \nabla Y_{l,m}\rangle=\\
        &= \sum_{n',n,j',j} W_{n'}\mu_{n',j'}\mu_{n,j}\langle Y_{n,j} \nabla Y_{n',j'}, \nabla Y_{l,m}\rangle=\\
        &=  \sum_{n,j',j} W_{l-n}\mu_{l-n,j'}\mu_{n,j}\langle Y_{n,j} \nabla Y_{l-n,j'}, \nabla Y_{l,m}\rangle,
    \end{split}
\end{equation*}
where we used Funk-Hecke theorem and a triple product integral identity for vector spherical harmonics. If this derivative is non-zero, then at least one of the terms in the sum must be non-zero. This implies $l-n = 0\ mod\  k$ and $n = 0\ mod\ k$. Thus, $l$ must be a multiple of $k$.

In conclusion:
\begin{equation*}
    \lim_{N\to\infty} \inf_{t\geq 0}  W_1(\mu^N_t, \mathcal{H}_{n_{max}}^\delta \cap \mathcal{P}(\mathbb{S}^{d-1}))\to 0.
\end{equation*}
in probability.

\color{black}
\begin{figure}
    \centering
\includegraphics[width=0.5\linewidth]{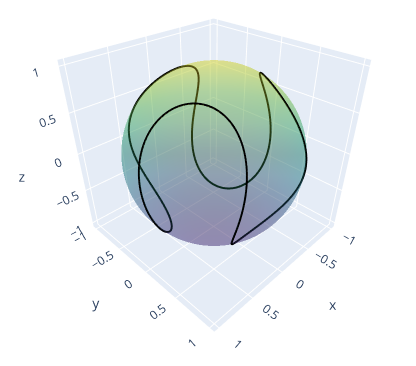}
    \caption{Example of a curve in $\mathcal{H}_3$.}
    \label{fig:t-design}
\end{figure}

\section{Auxiliary results}

\begin{lemma}
\label{lem:gronwall}
Let $f:[0,\infty)\to \mathbb{R}$ satisfy $f'(t)\leq a f(t) + c e^{bt}$ for all $ t \in [0,\infty)$, where $0<a<b$ and $c>0$. Then:
\begin{equation*}
    f(t)\leq f(0)e^{at} + \frac{c}{b-a}e^{bt}.
\end{equation*}
\end{lemma}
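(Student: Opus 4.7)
The plan is to apply the classical integrating-factor technique to the linear differential inequality. First, I would multiply $f'(t) \leq a f(t) + c e^{bt}$ by the positive factor $e^{-at}$, which preserves the inequality, and then recognize the left-hand side as an exact derivative:
\[
\frac{d}{dt}\bigl(e^{-at} f(t)\bigr) = e^{-at} f'(t) - a e^{-at} f(t) \leq c\, e^{(b-a)t}.
\]

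Next, I would integrate this inequality from $0$ to $t$. Since $b > a > 0$, the right-hand side has an elementary antiderivative, giving
\[
e^{-at} f(t) - f(0) \leq \frac{c}{b-a}\bigl(e^{(b-a)t} - 1\bigr).
\]
Multiplying through by $e^{at}$ then yields
\[
f(t) \leq f(0) e^{at} + \frac{c}{b-a}\bigl(e^{bt} - e^{at}\bigr) \leq f(0) e^{at} + \frac{c}{b-a} e^{bt},
\]
where the last step uses $c > 0$, $a < b$ and $e^{at} \geq 0$ to discard the nonpositive correction term.

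There is no genuine obstacle here; the argument is entirely routine. The only role of the strict inequality $a < b$ is to make the integral $\int_0^t e^{(b-a)s}\,ds$ produce the finite constant $\frac{1}{b-a}$; in the borderline case $a=b$ an additional linear factor of $t$ would appear, which is precisely why the hypothesis excludes it. The assumption $c > 0$ is only used to cleanly drop $e^{at}$ at the end, and could be weakened without changing the substance of the proof.
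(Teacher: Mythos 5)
Your proof is correct. It takes a slightly different route from the paper's: you apply the integrating factor $e^{-at}$ directly to the differential inequality, recognize an exact derivative, and integrate once, which yields the sharper intermediate bound $f(t)\leq f(0)e^{at}+\frac{c}{b-a}\bigl(e^{bt}-e^{at}\bigr)$ before discarding the nonpositive correction. The paper instead first integrates the inequality to obtain $f(t)\leq f(0)+\frac{c}{b}e^{bt}+\int_0^t a f(s)\,ds$ (up to an upper bound on the inhomogeneity), invokes the integral form of Gronwall's inequality as a black box, and then evaluates the resulting convolution integrals explicitly. The two arguments are of course closely related --- Gronwall's lemma is itself proved by an integrating factor --- but yours is more self-contained and avoids the somewhat messier bookkeeping of the paper's computation; the paper's version has the minor advantage of fitting the template it repeatedly cites elsewhere (it refers to this lemma as following ``directly by Gronwall's inequality''). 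Your closing remarks on the roles of $a<b$ and $c>0$ are accurate.
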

\begin{proof}
    This result follows directly by Gronwall's inequality. Specifically:
\begin{equation*}
\begin{split}
     f(t)&\leq f(0)+\frac{c}{b}e^{bt}+\int_0^t \left(f(0)+\frac{c}{b}e^{bs}\right)a e^{a(t-s)}ds=\\
     &= f(0)+\frac{c}{b}e^{bt}+f(0)a e^{at}\int_0^t e^{-as}ds + \frac{c}{b}ae^{at}\int_0^t e^{(b-a)s}ds =\\
     &=f(0)+\frac{c}{b}e^{bt}+f(0)e^{at}(1-e^{-at})+\frac{c}{b}\frac{a}{b-a}e^{at}(e^{(b-a)t}-1)\leq\\
     &\leq f(0)e^{at} + \frac{c}{b-a}e^{bt}.
\end{split}
\end{equation*}
\end{proof}

\begin{lemma}\label{lem:linear}
Let $k\in\mathbb{Z}$. Under Assumptions \ref{ass:1} and \ref{ass:2} the linear operator $\mathcal{L}_{\mu_\infty}: H^k(\mathbb{S}^1)\to H^k(\mathbb{S}^1)$ defined as $\mathcal{L}_{\mu_\infty}(f):=-\mu_\infty \Delta W*f$ satisfies:
\begin{equation*}
    \|\mathcal{L}_{\mu_\infty}f\|_{H^k}\leq \frac{1}{2\pi}\gamma_{max}\|f\|_{H^k}.
\end{equation*}
\end{lemma}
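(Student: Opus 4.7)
The plan is to diagonalize $\mathcal{L}_{\mu_\infty}$ via Fourier series on $\mathbb{S}^1$. Because $\mu_\infty$ is the constant density $\tfrac{1}{2\pi}$ and convolution with $W$ commutes with rotations, $\mathcal{L}_{\mu_\infty}$ is a Fourier multiplier operator. Once the multiplier symbol is identified, the $H^k$ bound follows immediately from the Plancherel-type characterization $\|f\|_{H^k}^2 = \sum_{n\in\mathbb{Z}} |\hat{f}_n|^2(1+n^2)^k$, valid for every $k\in\mathbb{Z}$.

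First I would test $\mathcal{L}_{\mu_\infty}$ on a pure Fourier mode $f=e^{in\theta}$. A change of variables $\psi = \theta - \phi$ together with the evenness of $W(\cos \cdot)$ gives
\[
(W*e^{in\cdot})(\theta) = e^{in\theta}\int_0^{2\pi} W(\cos\psi)\cos(n\psi)\,d\psi = \hat{W}_{|n|}\,e^{in\theta},
\]
where the last equality uses the identification of the Gegenbauer coefficient with the Fourier cosine coefficient in dimension $d=2$ (a direct consequence of Funk--Hecke, together with the identity $P^0_\ell(\cos\psi) = \cos(\ell\psi)$). Applying the Laplace--Beltrami operator $\Delta = \partial_\theta^2$ of $\mathbb{S}^1$ and multiplying by $-\mu_\infty = -\tfrac{1}{2\pi}$ yields
\[
\mathcal{L}_{\mu_\infty}(e^{in\theta}) = \frac{n^2\,\hat{W}_{|n|}}{2\pi}\,e^{in\theta} = \frac{\gamma_{|n|}}{2\pi}\,e^{in\theta},
\]
using $\gamma_k = k(k+d-2)\hat{W}_k = k^2\hat{W}_k$ in $d=2$.

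Next, Assumptions \ref{ass:1} and \ref{ass:2} give $|\gamma_n|\leq \gamma_{max}$ for every $n$: smoothness of $W$ forces the Gegenbauer coefficients, hence $\gamma_n$, to decay rapidly, so the sequence is bounded, while Assumption \ref{ass:2} identifies its maximum as $\gamma_{max}$ (in the transformer case the $\hat{W}_n$ are nonnegative, so $|\gamma_n| = \gamma_n$ is automatic; in general, the quantity $\sup_n|\gamma_n|$ is to be absorbed into $\gamma_{max}$). By linearity and Plancherel,
\[
\|\mathcal{L}_{\mu_\infty} f\|_{H^k}^2 = \sum_{n\in\mathbb{Z}}\frac{\gamma_{|n|}^2}{4\pi^2}\,|\hat{f}_n|^2(1+n^2)^k \;\leq\; \frac{\gamma_{max}^2}{4\pi^2}\,\|f\|_{H^k}^2,
\]
which is exactly the desired estimate. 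There is no substantive obstacle; the only real care needed is the bookkeeping of the normalization, so that the prefactor $\tfrac{1}{2\pi}$ emerges correctly from the combination of the density of $\mu_\infty$, the Gegenbauer/Funk--Hecke convention used for $\hat W_k$, and the eigenvalue $-n^2$ of the Laplace--Beltrami operator.
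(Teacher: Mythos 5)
Your proof is correct and follows essentially the same route as the paper: diagonalize $\mathcal{L}_{\mu_\infty}$ as a Fourier multiplier with symbol $\mu_\infty\gamma_{|n|}=\frac{\gamma_{|n|}}{2\pi}$ (via Funk--Hecke/the cosine expansion of $W\circ\cos$ and the eigenvalue $-n^2$ of $\partial_\theta^2$) and then apply the Plancherel characterization of $H^k$. Your remark that one really needs $\sup_n|\gamma_n|\leq\gamma_{max}$ rather than just $\max_n\gamma_n=\gamma_{max}$ is a fair point that the paper's own one-line proof glosses over as well.
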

\begin{proof}
    Using the Fourier definition of the spaces $H^k(\mathbb{S}^1)$:
    \begin{align*}
        \|\mathcal{L}_{\mu_\infty}f\|_{H^{k}}^2&=\sum_{n\in\mathbb{Z}} n^{2k}|\widehat{(-\mu_\infty\Delta W *f)}_n|^2=\\
        &=\sum_{n\in\mathbb{Z}} n^{2k}\left(\mu_\infty \gamma_n |(\hat{f})_n|\right)^2\leq \mu_\infty^2\gamma_{max}^2 \|f\|_{H^k}^2.
    \end{align*}
\end{proof}

\begin{lemma}
\label{lem:probability_limits}
Let $\{\mu^N\}$ be a sequence of random probabilities on $\mathbb{S}^1$ given by the empirical measures of $N$ i.i.d. samples from the uniform measure. Then $\rho^N:=\mu^N-\mu_\infty$ satisfies:
\begin{itemize}
    \item   $N^{1/4}\|\rho_0^N\|_{H^{-1}}\to 0$,
    \item $\frac{N^{1/4}\|\rho^N_0\|_{H^{-1}}}{(\hat{\rho}^N_0)_{k_{max}}}\left(\|\rho^N_0\|_{H^{-1}}\left(\frac{1}{N^{1/4}\|\rho^N_0\|_{H^{-1}}}\right)^{\frac{\gamma^{-}}{\gamma_{max}}}+ (N^{-1/4})^2\right)\to 0 $,
\end{itemize}
with both of the limits in probability. 
\end{lemma}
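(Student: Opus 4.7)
The plan is to reduce both limits to two stochastic estimates on the empirical fluctuation $\rho_0^N$: an $L^2$-based upper bound on $\|\rho_0^N\|_{H^{-1}}$ and a central-limit lower bound on the single Fourier mode $(\hat{\rho}_0^N)_{k_{max}}$. Both arguments rely on the Fourier characterization of $H^{-1}(\mathbb{S}^1)$ together with the fact that $\widehat{(\mu_\infty)}_k = 0$ for $k \neq 0$, so that $(\hat{\rho}_0^N)_k = \frac{1}{N}\sum_{j=1}^N e^{-ik\theta_j}$ is a normalized sum of i.i.d.\ centered bounded complex random variables whenever $k \neq 0$.

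For the first limit, I would compute $\mathbb{E}|(\hat{\rho}_0^N)_k|^2 = 1/N$ for every $k \neq 0$ by expanding the double sum and using independence, yielding
\begin{equation*}
\mathbb{E}\|\rho_0^N\|_{H^{-1}}^2 \;=\; \frac{1}{N}\sum_{k\neq 0}(1+k^2)^{-1} \;=\; O(1/N).
\end{equation*}
Markov's inequality then gives $\|\rho_0^N\|_{H^{-1}} = O_p(N^{-1/2})$, which immediately yields $N^{1/4}\|\rho_0^N\|_{H^{-1}} = O_p(N^{-1/4}) \to 0$ in probability.

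For the second limit, I would apply the classical central limit theorem to $\sqrt{N}(\hat{\rho}_0^N)_{k_{max}} = \frac{1}{\sqrt{N}}\sum_{j=1}^N e^{-ik_{max}\theta_j}$: it converges in distribution to a non-degenerate Gaussian random variable, so for every $\eta > 0$ there exists $c > 0$ with $\mathbb{P}\bigl(|(\hat{\rho}_0^N)_{k_{max}}| \leq c N^{-1/2}\bigr) \leq \eta$ for $N$ large enough. Combining with the event $\{\|\rho_0^N\|_{H^{-1}} \leq M N^{-1/2}\}$ of probability $\geq 1-\eta$ for $M$ large, a direct substitution shows that on the intersection the prefactor $N^{1/4}\|\rho_0^N\|_{H^{-1}}/(\hat{\rho}_0^N)_{k_{max}}$ is $O(N^{1/4})$, while the bracket is $O(N^{-1/2 + \gamma^-/(4\gamma_{max})}) + O(N^{-1/2})$; their product is $O(N^{(\gamma^- - \gamma_{max})/(4\gamma_{max})}) + O(N^{-1/4})$, both of which vanish as $N \to \infty$ because $\gamma^- < \gamma_{max}$ by Assumption~\ref{ass:2}. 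Since $M$ and $\eta$ are arbitrary, the second limit follows in probability.

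The main delicate step is the lower bound on $|(\hat{\rho}_0^N)_{k_{max}}|$: a second-moment estimate is not enough on its own, since it does not preclude occasional very small values of this coefficient (which would blow up the prefactor). The non-degeneracy of the limiting Gaussian in the CLT is what rules this out and makes the argument go through.
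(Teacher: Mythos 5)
Your proposal is correct and follows essentially the same route as the paper: a second-moment computation in the Fourier basis giving $\mathbb{E}\|\rho_0^N\|_{H^{-1}}^2 = O(1/N)$ plus Chebyshev/Markov for the first limit, and the CLT for the $k_{max}$-mode to control the denominator in the second. The only cosmetic difference is that the paper packages the denominator control via the continuous mapping theorem and Slutsky's lemma (the quantity is written as a product of a term converging in distribution to $1/Z$ and a term vanishing in probability), whereas you intersect explicit high-probability events; the exponent bookkeeping, which hinges on $\gamma^- < \gamma_{max}$ from Assumption~\ref{ass:2}, is identical.
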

\begin{proof}
First of all notice that, fixing an orthonormal basis  $\{e_k\}_{k}$ of $L^2(\mathbb{S}^1)$, $\forall \beta>0$ and $s>\frac{1}{2}$:
\begin{equation*}
    \begin{split}
        \mathbb{E}[(N^{\frac{1}{2}-\beta}\|\rho_0^N\|_{H^{-s}})^2]=&N^{-2\beta}\mathbb{E}\left[\sum_{k} N k^{-2s}\langle \rho_0^N, e_k\rangle^2\right]\leq\\
        &\leq N^{-2\beta}\sum_k Nk^{-2s}\mathbb{E}\left[\left(\frac{1}{N}\sum_{i=1}^N e_k(x_i)\right)^2\right]\leq\\
        &\leq N^{-2\beta}\sum_k k^{-2s}\to 0,
    \end{split}
\end{equation*}
where we used Beppo Levi and the indipendence of samples. In particular by Chebychev's lemma this implies $N^{\frac{1}{2}-\beta}\|\rho_0^N\|_{H^{-s}}\to 0$ in probability. Taking $\beta=1/4$ and $s=1$ we get the first limit.

Before proving the second limit, observe that
\begin{equation*}
    \begin{split}
        \sqrt{N}(\hat{\rho^N})_k=\sqrt{N}\langle \rho^N, e_k\rangle = \frac{1}{\sqrt{N}}\left(\sum_{j=1}^N e_k(X_j)-\int e_k\right)\to \mathcal{N}(0,1)
    \end{split}
\end{equation*}
in distribution by the central limit theorem. 

Now:
\begin{equation*}
\begin{split}
    &\frac{\|\rho_0\|_{H^{-1}}}{(\hat{\rho}_0)_{k_{max}}}\left(\frac{N^{-1/4}}{\|\rho_0\|_{H^{-1}}}\right)^{\frac{\gamma^{-}}{\gamma_{max}}-1}+ N^{-1/4} \frac{\|\rho_0\|_{H^{-1}}}{(\hat{\rho_0})_{k_{max}}}=\\
    &=\frac{1}{\sqrt{N}(\hat{\rho}_0)_{k_{max}}}N^{1/2}\|\rho_0\|_{H^{-1}}^{2-\frac{\gamma^{-}}{\gamma_{max}}}N^{\frac{1}{4}\left(1-\frac{\gamma^{-}}{\gamma_{max}}\right)}+ \frac{N^{1/4}\|\rho_0\|_{H^{-1}}}{\sqrt{N}(\hat{\rho_0})_{k_{max}}}=\\
    &=\frac{1}{\sqrt{N}(\hat{\rho}_0)_{k_{max}}}\left(N^{\frac{1}{2}+\frac{1}{4}\left(1-\frac{\gamma^{-}}{\gamma_{max}}\right)}\|\rho_0\|_{H^{-1}}^{2-\frac{\gamma^{-}}{\gamma_{max}}}+ N^{1/4}\|\rho_0\|_{H^{-1}}\right).
\end{split}
\end{equation*}
The terms between the parenthesis converge to zero in probability (indeed $\frac{1}{2}+\frac{1}{4}\left(1-\frac{\gamma^{-}}{\gamma_{max}}\right)< \frac{1}{2} \left( 2-\frac{\gamma^{-}}{\gamma_{max}}\right)$ and we can use the argument above). The term outside the parenthesis instead converges in distribution to $1/Z$ with $Z$ standard normal (because of the continuous mapping theorem, since $1/x$ is almost-surely continuous). Hence, by Slutsky's lemma, we get convergence to $0$ in distribution and then in probability.
\end{proof}

\begin{lemma}
    \label{lem:wass_dist}
    Given two sequences of probability measures $\mu_n, \nu_n$ in $H^{-k}(X)$, with $X$ compact space, such that $\|\mu_n-\nu_n\|_{H^{-k}(X)}\to 0$, then also $W_1(\mu_n, \nu_n)\to 0$ is true.
\end{lemma}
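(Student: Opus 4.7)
The plan is to combine the Kantorovich--Rubinstein duality for $W_1$ with a uniform smooth-approximation of Lipschitz test functions, so that the given $H^{-k}$ bound can be activated via the pairing $H^{-k} \times H^k \to \mathbb{R}$. The crucial structural fact is that, on the compact manifold $X$, $C^\infty(X) \subset H^k(X)$, and that the family of normalized $1$-Lipschitz functions is precompact in $C(X)$ by Arzelà--Ascoli; this precompactness is what will turn a pointwise smooth approximation into a \emph{uniform} one, yielding a single constant $M_\epsilon$ controlling all the test functions we need.

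Concretely, after fixing $x_0 \in X$ I would use duality to write
\begin{equation*}
W_1(\mu_n, \nu_n) = \sup\left\{\int_X f\,d(\mu_n - \nu_n)\ :\ \mathrm{Lip}(f) \le 1,\ f(x_0) = 0\right\},
\end{equation*}
noting that adding constants to $f$ does not change the integral since $\mu_n - \nu_n$ has zero total mass, and that the admissible $f$'s are bounded by $\mathrm{diam}(X)$. Given $\epsilon > 0$, by Arzelà--Ascoli this family is relatively compact in $C(X)$, so the density of $C^\infty(X)$ in $C(X)$ lets me pick finitely many $\phi_1, \dots, \phi_N \in C^\infty(X) \subset H^k(X)$ whose sup-norm $\epsilon$-balls cover the family. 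Set $M_\epsilon := \max_j \|\phi_j\|_{H^k(X)}$.

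The main estimate is then immediate: for any admissible $f$, selecting $\phi_{j(f)}$ with $\|f - \phi_{j(f)}\|_\infty \le \epsilon$, and using that $|\mu_n - \nu_n|(X) \le 2$ because $\mu_n, \nu_n$ are probability measures,
\begin{equation*}
\left|\int_X f\,d(\mu_n - \nu_n)\right| \le \|f - \phi_{j(f)}\|_\infty\,|\mu_n - \nu_n|(X) + \|\phi_{j(f)}\|_{H^k}\,\|\mu_n - \nu_n\|_{H^{-k}} \le 2\epsilon + M_\epsilon\,\|\mu_n - \nu_n\|_{H^{-k}}.
\end{equation*}
Taking the supremum over $f$ yields $W_1(\mu_n, \nu_n) \le 2\epsilon + M_\epsilon \|\mu_n - \nu_n\|_{H^{-k}}$, so letting $n \to \infty$ gives $\limsup_n W_1(\mu_n, \nu_n) \le 2\epsilon$, and arbitrariness of $\epsilon$ concludes.

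There is no substantial obstacle here; the only subtle point is the choice of normalization $f(x_0) = 0$, which is what makes the $1$-Lipschitz family uniformly bounded (and hence Arzelà--Ascoli-compact) and therefore allows a single uniform constant $M_\epsilon$ to be extracted for all test functions at once. Had we not done this, the constants $\|\phi_j\|_{H^k}$ could in principle blow up along the family, and the simple bound above would fail.
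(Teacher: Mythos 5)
Your proof is correct. It shares the two main ingredients with the paper's argument -- Kantorovich--Rubinstein duality and Arzel\`a--Ascoli compactness of the normalized $1$-Lipschitz ball -- but deploys them differently. The paper first upgrades the $H^{-k}$ convergence to weak convergence of $\mu_n-\nu_n$ (via density of smooth functions in $C(X)$), then extracts a subsequence realizing the $\limsup$ of $W_1(\mu_n,\nu_n)$, takes the associated optimal Kantorovich potentials $\phi_{n_k}$, and uses \emph{sequential} compactness to pass to a limiting potential $\phi$, against which weak convergence is applied. You instead use \emph{total boundedness}: a finite $\epsilon$-net of smooth functions covering the entire normalized Lipschitz ball in sup norm, which yields the uniform quantitative bound $W_1(\mu_n,\nu_n)\le 2\epsilon + M_\epsilon\,\|\mu_n-\nu_n\|_{H^{-k}}$ valid for every $n$ (indeed for every pair of probability measures), with no subsequence extraction and no separate weak-convergence step. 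Your version is arguably cleaner and gives slightly more -- an explicit modulus of continuity of $W_1$ with respect to the $H^{-k}$ norm on $\mathcal{P}(X)$, rather than a purely qualitative limit -- at the cost of having to justify that the $\epsilon$-net can be taken inside $C^\infty(X)\subset H^k(X)$, which is standard on the compact manifolds (spheres) where the lemma is used. Your closing observation about the normalization $f(x_0)=0$ being what makes the family uniformly bounded is exactly the same device the paper uses when it normalizes its potentials to vanish at a common point.
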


\begin{proof}
Notice that
    $\|\mu_n-\nu_n\|_{H^{-k}}\to 0$ implies $\langle \mu_n-\nu_n , f\rangle \to 0$ for every $f\in H^{k}(X)$. Since $H^{k}(X)\cap C_b(X)$ is dense in $C_b(X)$ (it contains the smooth functions) and since $\mu_n$ and $\nu_n$ are both probability measures, then $\mu_n-\nu_n \rightharpoonup 0$ in weak sense.
   
    Let us fix a subsequence that satisfies $\lim_k W_1(\mu_{n_k},\nu_{n_k})=\limsup_n W_1(\mu_n,\nu_n)$. For each $k$, consider $\phi_{n_k}\in Lip_1(X)$ such that $\int \phi_{n_k}(\mu_{n_k}-\nu_{n_k}) = W_1(\mu_{n_k},\nu_{n_k})$.  Without loss of generality, by adding a constant (which does not affect the integral), we can assume that the functions $\phi_{n_k}$ vanish at the same point, hence they are uniformly bounded and equicontinuous. By Ascoli-Arzelà theorem we can extract a subsequence converging to some $\phi\in Lip_1(X)$. Hence, renaming the subsequence:
    \begin{equation*}
        W_1(\mu_{n_k},\nu_{n_k})=\int \phi_{n_k}d(\mu_{n_k}-\nu_{n_k}) \leq 2\|\phi_{n_k}-\phi\|_{\infty}+\int \phi d(\mu_{n_k}-\nu_{n_k})\to 0.
    \end{equation*}
    This concludes the proof.
\end{proof}

\begin{lemma}
    \label{lem:analytic}
    Let $W:[-1,1]\to \mathbb{R}$ be an analytic function and suppose that $\hat{W}_n\neq 0$ for every $n$. Then the equilibria of Eq.~\eqref{eq:pde_sphere} can  only be the uniform measure or a finite union of submanifolds of dimension at most $d-2$.
\end{lemma}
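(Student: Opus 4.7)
The plan is to reduce the question to the structure of the critical set of the analytic function $\phi := W * \mu$ on $\mathbb{S}^{d-1}$, and then invoke the structure theorem for real analytic sets to produce the finite stratification.

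First, I would show that any equilibrium $\mu$ of~\eqref{eq:pde_sphere} satisfies $\nabla \phi = 0$ on $\mathrm{supp}(\mu)$, with $\phi := W*\mu$. Writing the equilibrium condition $\Div(\mu \nabla \phi) = 0$ in weak form and testing against the (smooth) function $\phi$ itself yields
\begin{equation*}
\int_{\mathbb{S}^{d-1}} |\nabla \phi|^2 \, d\mu = 0,
\end{equation*}
so $\nabla \phi$ vanishes $\mu$-a.e.\ and, by continuity, on $\mathrm{supp}(\mu)$. Next, since $W$ is analytic on $[-1,1]$ and $\phi(x) = \int W(\langle x,y\rangle) \, d\mu(y)$, a standard argument (via the uniform convergence of the Gegenbauer expansion of $W$, or direct holomorphic extension of $x\mapsto W(\langle x,y\rangle)$ to a complex neighborhood of $\mathbb{S}^{d-1}$, uniformly in $y$) shows that $\phi$ is real analytic on $\mathbb{S}^{d-1}$.

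I would then split into two cases according to whether $\phi$ is constant. If $\phi \equiv c$ on $\mathbb{S}^{d-1}$, expanding $\mu = \sum_{n,j} \hat{\mu}_{n,j} Y_{n,j}$ in spherical harmonics and applying the Funk--Hecke formula (as in Section~\ref{sec:gegenbauer}) gives $\phi = \sum_{n,j} \hat{W}_n \hat{\mu}_{n,j} Y_{n,j}$; constancy forces $\hat{W}_n \hat{\mu}_{n,j} = 0$ for every $n \geq 1$, and the assumption $\hat{W}_n \neq 0$ then yields $\hat{\mu}_{n,j} = 0$ for all $n \geq 1$, i.e.\ $\mu = \mu_\infty$. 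If instead $\phi$ is non-constant, then $\mathrm{Crit}(\phi) := \{\nabla \phi = 0\}$ is a proper real analytic subset of $\mathbb{S}^{d-1}$: otherwise it would contain a non-empty open set, forcing $\phi$ to be locally constant and, by unique analytic continuation on the connected manifold $\mathbb{S}^{d-1}$, globally constant.

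The main obstacle is to upgrade ``proper analytic subset'' to ``finite union of real analytic submanifolds of dimension $\leq d-2$''. For this I would invoke \L{}ojasiewicz's structure theorem for (semi)analytic sets on a compact real analytic manifold, as presented in~\cite{krantz2002primer}: any proper real analytic subset of a compact connected real analytic manifold of dimension $d-1$ admits a finite stratification by real analytic submanifolds of dimension at most $d-2$. Combined with the inclusion $\mathrm{supp}(\mu) \subseteq \mathrm{Crit}(\phi)$, this completes the argument.
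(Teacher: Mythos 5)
Your proposal is correct and follows essentially the same route as the paper: test the weak equilibrium condition against $W*\mu$ to get $\int \|\nabla (W*\mu)\|^2\,d\mu = 0$, conclude that $\mathrm{supp}(\mu)$ lies in the zero set of the analytic function $\|\nabla(W*\mu)\|^2$, apply \L{}ojasiewicz's structure theorem, and handle the constant case via Funk--Hecke and the hypothesis $\hat W_n \neq 0$. Your write-up is in fact slightly more careful than the paper's on two points it leaves implicit, namely the real analyticity of $W*\mu$ for a general probability measure $\mu$ and the justification that a non-constant analytic function has a proper critical set.
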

\begin{proof}
    Consider $\mu_{eq}$ to be an equilibrium of Eq.~\eqref{eq:pde_sphere}. Then, for every $f \in \mathcal{C}^\infty(\mathbb{S}^{d-1})$, it must hold that:
    \begin{equation*}
        0=\langle f, \Div (\mu_{eq} \nabla (W*\mu_{eq})\rangle.
    \end{equation*}
    In particular, substituting $f=\nabla (W*\mu_{eq})$, we obtain:
    \begin{equation*}
        \int_{\mathbb{S}^{d-1}} \|\nabla (W*\mu_{eq})\|^2 d\mu_{eq} =0.
    \end{equation*}
    This implies $A:=\{x\in \mathbb{S}^{d-1}| \nabla (W * \mu_{eq})(x)=0\}$ is such that $\mu_{eq}(A^C)=0$, hence $supp(\mu_{eq})=A$. Note that $x\to \|\nabla (W*\mu_{eq})(x)\|^2$ is still an analytic function, hence $A$ must be either $\mathbb{S}^{d-1}$ or a finite union of submanifolds of dimension at most $d-2$ (by Lojasiewicz's structure theorem, \citep{krantz2002primer}). 
    
    To conclude the proof, we just need to study the case $A=\mathbb{S}^{d-1}$. The definition of $A$ implies that $W*\mu_{eq}$ must be constant on $\mathbb{S}^{d-1}$. By  the Funk-Hecke theorem, $\forall n>0, \forall j$:
    \begin{equation*}
        0=(W*\mu_{eq})_{n,j}=W_n \cdot(\mu_{eq})_{n,j},
    \end{equation*}
    and thanks to the hypothesis $W_n\neq 0$, we obtain $(\mu_{eq})_{n,j}=0$ $\forall n\neq 0$ and $\forall j$. Thus, $\mu_{eq}$ is the uniform measure.
\end{proof}

\end{document}